\renewcommand{\models}{Contextual-MDPs\xspace}
\newcommand{\Xspace}{\Xcal}
\newcommand{\Aspace}{\Acal}
\newcommand{\Asize}{K}
\newcommand{\Sspace}{\Scal}
\newcommand{\Ssize}{M}
\newcommand{\Fspace}{\Fcal}
\newcommand{\Fsize}{N}
\newcommand{\Trans}{\Gamma}
\newcommand{\ntest}{n_{\textrm{test}}}
\newcommand{\ntrain}{n_{\textrm{train}}}
\newcommand{\ndemand}{n_{\textrm{1}}}
\newcommand{\ndemandtwo}{n_{\textrm{2}}}
\newcommand{\epstest}{\epsilon_{\textrm{test}}}
\newcommand{\epsdemand}{\epsilon_{\textrm{demand}}}
\newcommand{\vfunc}[1]{V^{#1}}
\newcommand{\vhat}[1]{\hat{V}^{#1}}
\newcommand{\fgpvfunc}[3]{\vfunc{#1}(#2,\pi_{#3})}
\newcommand{\fgpvhat}[3]{\vhat{#1}(#2,\pi_{#3})}
\newcommand{\fpvfunc}[2]{\fgpvfunc{#1}{#2}{#1}}
\newcommand{\fpvhat}[2]{\fgpvhat{#1}{#2}{#1}}
\newcommand{\fpvshort}[2]{\vfunc{#1}(#2)}
\newcommand{\fpvhatshort}[2]{\vhat{#1}(#2)}
\newcommand{\favshort}[2]{\vfunc{#1}_{#2}}
\newcommand{\favhatshort}[2]{\vhat{#1}_{#2}}
\newcommand{\true}{\texttt{true}\xspace}
\newcommand{\false}{\texttt{false}\xspace}
\newcommand{\statelearned}{\textsc{Consensus}\xspace}
\newcommand{\dfslearn}{\textsc{DFS-Learn}\xspace}
\newcommand{\regelim}{\textsc{TD-Elim}\xspace}
\newcommand{\ondemand}{\textsc{Explore-on-Demand}\xspace}
\newcommand{\alg}{\textsc{LSVEE}\xspace}
\newcommand{\alglong}{{Least Squares Value Elimination by Exploration}\xspace}
\newcommand{\esterr}{\phi}
\newcommand{\tdbias}{\tau_1}
\newcommand{\tdrisk}{\psi}
\newcommand{\slbias}{\tau_2}
\newcommand{\treeroot}{\varnothing}
\newcommand{\vs}{\ensuremath{V^\star}}
\newcommand{\qs}{\ensuremath{Q^\star}}
\newcommand{\dist}[2]{\ensuremath{{D_{#1}({#2})}}}
\newcommand{\dst}[1]{\ensuremath{{D_{#1}}}}
\definecolor{teal}{rgb}{0.3,0.8,0.8}
\definecolor{darkgreen}{rgb}{0,0.5,0}
\definecolor{darkred}{rgb}{0.7,0,0}
\definecolor{teal}{rgb}{0.3,0.8,0.8}
\definecolor{blue}{rgb}{0,0,1}
\definecolor{purple}{rgb}{0.5,0,1}
\newcommand{\kibitz}[2]{\ifnum\Comments=1\textcolor{#1}{#2}\fi}
\newcommand{\order}{\ensuremath{\mathcal{O}}}
\newcommand{\otil}{\ensuremath{\tilde{\order}}}
\newcommand{\version}{arxiv}
\renewcommand{\cite}{\citep}
\newenvironment{packed_enum}{
\begin{enumerate}
\setlength{\itemsep}{1pt}
\setlength{\parskip}{0pt}
\setlength{\parsep}{0pt}
}{\end{enumerate}}
\newcounter{qcounter}
\newenvironment{enums}
 {\begin{list}{\arabic{qcounter}.}
 {\usecounter{qcounter} \setlength{\topsep}{0in} \setlength{\partopsep}{0in}
  \setlength{\itemsep}{\parskip}
  \setlength{\parsep}{0in} 
  \setlength{\leftmargin}{0.21in} \setlength{\rightmargin}{0in}
  \setlength{\listparindent}{0.0in} \setlength{\labelwidth}{0.07in}
  \setlength{\labelsep}{0.1in} \setlength{\itemindent}{-0.04in}}}
 {\end{list}}
\title{PAC Reinforcement Learning with Rich Observations}
\author[1]{Akshay Krishnamurthy
\thanks{akshay@cs.umass.edu}}
\author[2]{Alekh Agarwal
\thanks{alekha@microsoft.com}}
\author[2]{John Langford
\thanks{jcl@microsoft.com}}
\affil{University of Massachusetts, Amherst\\
Amherst, MA 01003}
\affil[2]{Microsoft Research\\
New York, NY 10011}
\begin{document}
\maketitle

\begin{abstract}
We propose and study a new model for reinforcement learning with rich
observations, generalizing contextual bandits to sequential decision
making.  These models require an agent to take actions based on
observations (features) with the goal of achieving long-term
performance competitive with a large set of policies.  To avoid
barriers to sample-efficient learning associated with large
observation spaces and general POMDPs, we focus on problems that can
be summarized by a small number of hidden states and have long-term
rewards that are predictable by a reactive function class.  In this
setting, we design and analyze a new reinforcement learning algorithm,
\alglong. We prove that the algorithm learns near optimal behavior
after a number of episodes that is polynomial in all relevant
parameters, logarithmic in the number of policies, and independent of
the size of the observation space. Our result provides theoretical
justification for reinforcement learning with function approximation.
%% represents an exponential
%% improvement over all existing alternative approaches and 
\end{abstract}

\section{Introduction}
\label{sec:intro}
The Atari Reinforcement Learning research program~\cite{mnih2015human}
has highlighted a critical deficiency of practical reinforcement
learning algorithms in settings with rich observation spaces: they
cannot effectively solve problems that require sophisticated exploration.
How can we construct Reinforcement Learning (RL) algorithms which
effectively plan and plan to explore?

In RL theory, this is a solved problem for Markov
Decision Processes (MDPs)~\cite{kearns2002near, brafman2003r,
  strehl2006pac}.  Why do these results not apply?

An easy response is, ``because the hard games are not MDPs.''  This
may be true for some of the hard games, but it is
misleading---popular algorithms like $Q$-learning with
$\epsilon$-greedy exploration do not even engage in minimal planning
and global exploration\footnote{We use ``global exploration'' to
  distinguish the sophisticated exploration strategies required to solve
  an MDP efficiently from exponentially less efficient alternatives
  such as $\epsilon$-greedy.} as is required to solve MDPs
efficiently.  MDP-optimized global exploration has also been avoided
because of a polynomial dependence on the number of unique
observations which is intractably large with observations from a
visual sensor.

In contrast, supervised and contextual bandit learning algorithms have
\emph{no} dependence on the number of observations and at most a
logarithmic dependence on the size of the underlying policy set.
Approaches to RL with a weak dependence on these quantities
exist~\cite{kearns2002sparse} but suffer from an exponential
dependence on the time horizon---with $\Asize$ actions and a horizon
of $H$, they require $\Omega(\Asize^H)$ samples.  Examples show that
this dependence is necessary, although they typically require a large
number of states.  Can we find an RL algorithm with no dependence on
the number of unique observations and a polynomial dependence on the
number of actions $\Asize$, the number of necessary states $\Ssize$,
the horizon $H$, and the policy complexity $\log(|\Pi|)$?

To begin answering this question we consider a simplified setting with
episodes of bounded length $H$ and deterministic state transitions. We
further assume that we have a function class that contains the optimal
observation-action value function $Q^\star$. These simplifications make the
problem significantly more tractable without trivializing the core
goal of designing a $\textrm{Poly}(\Asize,\Ssize,H,\log(|\Pi|)))$
algorithm.
%% We achieve the first $Poly(A,S,H,\log(|\Pi|)))$ reinforcement learning
%% algorithm with no dependence on the number of unique observations
%% under these assumptions.  
To this end, our contributions are:
\begin{enums}
\item A new class of models for studying reinforcement learning with
  rich observations.  These models generalize both contextual bandits
  and small-state MDPs, but do not exhibit the partial observability
  issues of more complex models like POMDPs.  We show
  \emph{exponential lower bounds} on sample complexity in the absence
  of the assumptions to justify our model.
%% \newmodels~generalize both
%%   contextual bandits and small-state MDPs, but, unlike Partially
%%   Observable MDPs (POMDPs), the optimal $Q$ function depends only on
%%   the most recent observation rather than the entire trajectory. We
%%   show \emph{exponential lower bounds} on sample complexity if the
%%   optimal $Q$ function depends on longer history or is not captured in
%%   our policy class.
\item A new reinforcement learning algorithm \alglong (\alg) and a PAC
  guarantee that it finds a policy that is at most $\epsilon$
  sub-optimal (with the above assumptions) using
  $\order\left(\frac{\Ssize \Asize^2 H^6}{\epsilon^3}
  \log(|\Pi|)\right)$ samples, with no dependence on the number of
  unique observations. This is done by combining ideas from contextual
  bandits with a novel state equality test and a global exploration
  technique.
%% , yielding the first
%%   RL algorithm with no dependence on the number of unique
%%   observations. 
Like initial contextual bandit
approaches~\cite{agarwal2012contextual}, the algorithm is
computationally inefficient since it requires enumeration of the
policy class, an aspect we hope to address in future work.
\end{enums}
\alg uses a function class to approximate future rewards, and
thus lends theoretical backing for reinforcement learning with
function approximation, which is the empirical state-of-the-art.

\section{The Model}
\label{sec:model}
Our model is a \textbf{Contextual Decision Process}, a term we use
broadly to refer to any sequential decision making task where an agent
must make decision on the basis of rich features (context) to optimize
long-term reward.  In this section, we introduce the model, starting
with basic notation. Let $H \in \NN$ denote an episode length,
$\Xspace \subseteq \mathbb{R}^d$ an observation space, $\Aspace$ a
finite set of actions, and $\Sspace$ a finite set of latent states.
Let $\Asize \triangleq |\Aspace|$.  We partition $\Sspace$ into $H$
disjoint groups $\Sspace_1, \ldots, \Sspace_H$, each of size at most
$\Ssize$.
%% We partition $\Xspace$ into $H$ disjoint
%% groups $\Xspace_1,\ldots, \Xspace_H$.
%%  to avoid
%% indexing policies and $Q$-functions with time, which leads to more
%% concise notation but is mathematically equivalent.
For a set $P$,
$\Delta(P)$ denotes the set of distributions over $P$.

%% \begin{figure}
%% \begin{center}
%% \input{model_fig.tex}
%% \end{center}
%% \caption{Snippet of a trajectory induced by an \emph{optimal agent} in
%%   a \model.  Black text denotes unobserved quantities, blue denotes
%%   observed quantities, and red denotes quantities chosen by the
%%   optimal agent.  The optimal action $a^\star$ is a function solely of
%%   the current observation $x$, in contrast with more general POMDPs.}
%% \label{fig:cmdp_model}
%% \end{figure}

\subsection{Basic Definitions}
\label{sec:model_definition}
Our model is defined by the tuple $(\Trans_1, \Trans, D)$ where
$\Trans_1 \in \Delta(\Sspace_1)$ denotes a starting state
distribution, $\Trans: (\Sspace \times \Aspace) \rightarrow
\Delta(\Sspace)$ denotes the transition dynamics, and $\dst{s} \in
\Delta(\Xspace \times [0,1]^{\Asize})$ associates a distribution over
observation-reward pairs with each state $s \in \Scal$.  We also use
$\dst{s}$ to denote the marginal distribution over observations (usage
will be clear from context) and use $\dst{s|x}$ for the conditional
distribution over reward given the observation $x$ in state $s$.  The
marginal and conditional probabilities are referred to as
$\dist{s}{x}$ and $\dist{s|x}{r}$.

We assume that the process is \emph{layered} (also known as loop-free
or acyclic) so that for any $s_h \in \Sspace_h$
and action $a \in \Aspace$, $\Trans(s_h,a) \in
\Delta(\Scal_{h+1})$.  Thus, 
%% Since the state space is partitioned into
%% disjoint sets, 
the environment transitions from state space $\Scal_1$ up to
$\Scal_H$ via a sequence of actions.  Layered structure allows us to
avoid indexing policies and $Q$-functions with time, which enables
concise notation.% but is equivalent to an unlayered non-stationary formulation.
%% alternative reformulation without layered structure.

Each episode produces a full record of interaction $(s_1, x_1, a_1,
r_1, \ldots, s_H,x_H,a_H,r_H)$ where $s_1 \sim \Trans_1$, $s_h \sim
\Trans(s_{h-1}, a_{h-1})$, $(x_h,r_h) \sim \dst{s_h}$ and all actions
$a_h$ are chosen by the learning agent.  The record of interaction
observed by the learner is $(x_1, a_1, r_1(a_1), \ldots,
x_H,a_H,r_H(a_H))$ and at time point $h$, the learner may use all
observable information up to and including $x_h$ to select $a_h$.  Notice that
all state information and rewards for alternative actions are
unobserved by the learning agent.

The learner's reward for an episode is $\sum_{h=1}^Hr_h(a_h)$, and the
goal is to maximize the expected cumulative reward, $R =
\EE[\sum_{h=1}^Hr_h(a_h)]$, where the expectation accounts for all the
randomness in the model and the learner.  We assume that almost surely
$\sum_{h=1}^H r_h(a_h) \in [0,1]$ for any action sequence.

%% An episode proceeds as follows. The environment draws $s_1 \sim
%% \Trans_1$, $(x_1,r_1) \sim \dst{s_1}$, and $x_1$ is revealed to the
%% learner, who chooses an action $a_1$.  The learner observes $r_1(a_1)$
%% and the environment transitions to state $s_{2} \sim
%% \Trans(s_1,a_1)$, draws $(x_{2},r_{2}) \sim \dst{s_{2}}$ and
%% reveals $x_{2}$ to the learner.  The learner chooses an action
%% $a_{2}$ and the process continues for a total of $H$ rounds of
%% interaction, at which point the episode ends. Over the course of an
%% episode, the reward obtained by the learner is $\sum_{h=1}^Hr_h(a_h)$,
%% and the goal is to maximize the expected cumulative reward, $R =
%% \EE[\sum_{h=1}^Hr_h(a_h)]$, where the expectation accounts for all
%% randomness in the model and the learner.  We assume that almost surely
%% $\sum_{h=1}^H r_h(a_h) \in [0,1]$ for any action sequence.

%% The record of interaction observed by the learner is
%% $(x_1,a_1,r_1(a_1), \ldots, x_H,a_H,r_H(a_H))$.  The full record of
%% interaction for a single episode is the tuple $(s_1,x_1,r_1,a_1,
%% \ldots s_H,x_H,r_H,a_H)$ where $s_1 \sim \Trans_1$, $s_h \sim
%% \Trans(s_{h-1},a_{h-1})$, $(x_h,r_h) \sim \dst{s_h}$ and all actions
%% $a_h$ are chosen by the learner.  Notice that all state information
%% and rewards for alternative actions are unobserved by the learning
%% algorithm. %% Figure~\ref{fig:cmdp_model} illustrates the observed and
%% %% unobserved quantities over one round of interaction.

In this model, the optimal expected reward achievable can be computed
recursively as
\begin{align}
\vs \triangleq \EE_{s\sim \Trans_1}[\vs(s)] \quad \mbox{with} \quad
\vs(s) \triangleq \EE_{x\sim \dst{s}} \max_a
\EE_{r\sim\dst{s|x}}\left[ r(a)+ \EE_{s'\sim
    \Gamma(s,a)}\vs(s')\right]. \label{eq:vstar}
\end{align}
As the base case, we assume that for states $s \in \Scal_H$, all
actions transition to a terminal state $s_{H+1}$ with $\vs(s_{H+1}) \triangleq
0$. For each $(s,x)$ pair such that $\dist{s}{x} > 0$ we also define a
$\qs$ function as
\begin{align}
  \qs_s(x,a) &\triangleq \EE_{r\sim\dst{s|x}} \left[r(a) + \EE_{s'\sim
      \Gamma(s,a)} \vs(s') \right]. 
  \label{eqn:qstar}
\end{align}
This function captures the optimal choice of action given this (state,
observation) pair and therefore encodes optimal behavior in the model.

With no further assumptions, the above model is a \emph{layered
  episodic Partially Observable Markov Decision Process} (LE-POMDP).
Both learning and planning are notoriously challenging in POMDPs,
because the optimal policy depends on the entire trajectory and the
complexity of learning such a policy grows exponentially with $H$ (see
e.g.~\citet{kearns2002sparse} as well as
Propositions~\ref{prop:agnostic_lb} and~\ref{prop:realizable_lb}
below).  Our model avoids this statistical barrier with two
assumptions: (a) we consider only reactive policies, and (b) we assume
access to a class of functions that can realize the $\qs$ function.
Both assumptions are implicit in the empirical state of the art RL
results.  They also eliminate issues related to partial observability,
allowing us to focus on our core goal of systematic exploration.  We
describe both assumptions in detail before formally defining the
model.

\textbf{Reactive Policies:} One approach taken by some prior
theoretical work is to consider \emph{reactive} (or memoryless)
policies that use only the current observation to select an
action~\cite{meuleau1999learning,azizzadenesheli2016reinforcement}.
Memorylessness is slightly generalized in the recent empirical
advances in RL, which typically employ policies that depend only on
the few most recent observations~\cite{mnih2015human}.

A reactive policy $\pi: \Xspace \rightarrow \Aspace$ is a strategy for
navigating the search space by taking actions $\pi(x)$ given
observation $x$.  The expected reward for a policy is defined recursively through
\begin{align*}
V(\pi) \triangleq \EE_{s \sim \Trans_1}[V(s,\pi)] \quad \mbox{and} \quad V(s,\pi) \triangleq \EE_{(x,r) \sim \dst{s}}\left[r(\pi(x)) + \EE_{s'\sim \Trans(s,\pi(x))}V(s',\pi)\right].
\end{align*}

A natural learning goal is to identify a policy with maximal value
$V(\pi)$ from a given collection of reactive policies $\Pi$.
Unfortunately, even when restricting to reactive policies, learning in
POMDPs requires exponentially many samples, as we show in the next
lower bound.
\begin{proposition}
\label{prop:agnostic_lb}
Fix $H,K\in \NN$ with $K \ge 2$ and $\epsilon \in (0,\sqrt{1/8})$.
For any algorithm, there exists a LE-POMDP with horizon $H$, $K$
actions, and $2H$ total states; a class $\Pi$ of reactive policies
with $|\Pi| = K^H$; and a constant $c > 0$ such that the probability
that the algorithm outputs a policy $\hat{\pi}$ with $V(\hat{\pi}) >
\max_{\pi \in \Pi}V(\pi) - \epsilon$ after collecting $T$ trajectories
is at most $2/3$ for all $T \le cK^H/\epsilon^2$.
 %% for which any algorithm that identifies a policy $\hat{\pi} \in \Pi$ with $V(\hat{\pi}) \ge \max_{\pi \in \Pi}V(\pi) - \epsilon$ must collect $\Omega(K^H/\epsilon^2)$ trajectories. 
\end{proposition}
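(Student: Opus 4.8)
The plan is to exhibit a ``combination lock'' family of LE-POMDPs in which the only way to gain any reward signal is to replay a hidden length-$H$ action sequence exactly, and then reduce identifying that sequence to finding a single biased coin among $K^H$ fair ones. Concretely, I would use two latent states per level, $\Sspace_h = \{g_h, b_h\}$ (a ``good'' chain and a ``dead'' chain, $2H$ states total), and make the observation at level $h$ a fixed deterministic context $x_h$ emitted in \emph{both} $g_h$ and $b_h$, so that the observation carries no information about the latent state. A reactive policy is then determined by its value on each of the $H$ distinct contexts, giving exactly $|\Pi| = K^H$ policies. I would draw a hidden target $\mathbf{a}^\star = (a_1^\star,\dots,a_H^\star)$ uniformly from $\Aspace^H$ and set transitions so that from $g_h$ action $a_h^\star$ goes to $g_{h+1}$ while every other action, and every action from $b_h$, goes to $b_{h+1}$. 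All intermediate rewards are $0$; at the last step the reward is $\mathrm{Ber}(1/2+\epsilon)$ if the agent is in $g_H$ and plays $a_H^\star$, and $\mathrm{Ber}(1/2)$ otherwise. The constraint $\epsilon<\sqrt{1/8}$ keeps $1/2+\epsilon<1$, so the total per-episode reward lies in $\{0,1\}\subseteq[0,1]$.

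Under this construction the policy $\pi^\star$ that plays $a_h^\star$ on $x_h$ traverses the good chain and collects value $1/2+\epsilon$, while \emph{every} other reactive policy deviates at some level, falls into the dead chain (or reaches $g_H$ but plays the wrong action), and collects value exactly $1/2$. Hence $\max_{\pi}V(\pi)=1/2+\epsilon$ and any $\hat\pi$ with $V(\hat\pi)>\max_\pi V(\pi)-\epsilon = 1/2$ must equal $\pi^\star$. So the learning task is exactly to identify $\mathbf{a}^\star$.

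The next step is the reduction. Since the contexts $x_h$ and all intermediate rewards are deterministic constants shared by every instance, no information about $\mathbf{a}^\star$ reaches the learner within an episode until the terminal reward is revealed; within-episode adaptivity is therefore useless, and each trajectory amounts to committing to one sequence $\mathbf{a}\in\Aspace^H$ and observing $\mathrm{Ber}\!\left(1/2+\epsilon\,\mathbf{1}[\mathbf{a}=\mathbf{a}^\star]\right)$. This is precisely best-arm identification among $N=K^H$ Bernoulli arms, all with mean $1/2$ except the special arm $\mathbf{a}^\star$ with mean $1/2+\epsilon$. I would then run the standard change-of-measure argument: place the uniform prior on $\mathbf{a}^\star$, let $\mathbb{P}_0$ be the all-fair reference measure and $\mathbb{P}_i$ the measure when arm $i$ is special, and write $\mathbb{P}_i(\hat\imath=i)\le \mathbb{P}_0(\hat\imath=i)+\mathrm{TV}(\mathbb{P}_0,\mathbb{P}_i)$. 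Summing over $i$ gives $\sum_i\mathbb{P}_i(\hat\imath=i)\le 1+\sum_i\mathrm{TV}(\mathbb{P}_0,\mathbb{P}_i)$. Pinsker's inequality together with the bandit divergence decomposition $\mathrm{KL}(\mathbb{P}_0\|\mathbb{P}_i)=\EE_0[N_i]\,\mathrm{KL}(\mathrm{Ber}(1/2)\|\mathrm{Ber}(1/2+\epsilon))$ and the bound $\mathrm{KL}(\mathrm{Ber}(1/2)\|\mathrm{Ber}(1/2+\epsilon))\le 4\epsilon^2$ (valid exactly because $4\epsilon^2<1/2$), followed by Cauchy--Schwarz using $\sum_i\EE_0[N_i]=T$, yields $\frac1N\sum_i\mathbb{P}_i(\hat\imath=i)\le \frac1N+\sqrt{2T\epsilon^2/N}$. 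For $T\le cK^H/\epsilon^2$ with a small absolute constant $c$ this is at most $2/3$, so the minimum success probability over instances is at most $2/3$, which is the claim.

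The main obstacle I would be most careful about is the reduction's handling of \emph{adaptive, randomized} learners: I must argue rigorously that because the observation/reward transcript within an episode is a fixed constant independent of $\mathbf{a}^\star$, the only channel conveying information is the stream of terminal rewards, so the problem is genuinely equivalent to a bandit with adaptively chosen arms (rather than one where the learner commits in advance). This is what licenses the divergence decomposition, whose per-pull chain-rule accounting---charging KL only on pulls of the distinguished arm and taking the expected pull count $\EE_0[N_i]$ under the reference measure---is the technical heart of the bound. The remaining work (verifying the layered and reactive model definitions are met, and tracking the constant so that $\epsilon<\sqrt{1/8}$ is exactly the regime where the KL bound holds) is routine.
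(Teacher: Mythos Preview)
Your proposal is correct and follows essentially the same approach as the paper: the identical combination-lock construction with two states per level and uninformative observations, the reduction to $K^H$-armed best-arm identification, and the Auer--Cesa-Bianchi--Freund--Schapire change-of-measure argument via Pinsker, the per-arm KL chain rule under the all-fair reference $\mathbb{P}_0$, and a concavity/Cauchy--Schwarz step to average over instances. The only cosmetic differences are constants (the paper bounds $-\log(1-4\epsilon^2)\le 8\epsilon^2$ and arrives at $T\le K^H/(72\epsilon^2)$) and that the paper first states the bandit lower bound as a standalone theorem before invoking it.
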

This lower bound precludes a $\textrm{Poly}(K,M,H,\log(|\Pi|))$ sample complexity bound for learning reactive policies in general POMDPs as $\log(|\Pi|) = H\log(K)$ in the construction, but the number of samples required is exponential in $H$. 
The lower bound instance provides essentially no instantaneous feedback and therefore forces the agent to reason over $K^H$ paths independently.

\textbf{Predictability of $\qs$:}
The assumption underlying the empirical successes in RL is that the $\qs$ function can be well-approximated by some large set of functions $\Fcal$.
To formalize this assumption, note that for some POMDPs, we may be able to write $\qs$ as a function of the observed history $(x_1,a_1,r_1(a_1), \ldots, x_h)$ at time $h$.
For example, this is always true in deterministic-transition POMDPs, since the sequence of previous actions encodes the state and $\qs$ as in Eq.~\eqref{eqn:qstar} depends only on the state, the current observation, and the proposed action.
In the \emph{realizable} setting, we have access to a collection of functions $\Fcal$ mapping the observed history to $[0,1]$, and we assume that $\qs \in \Fcal$. 

Unfortunately, even with realizability, learning in POMDPs can require exponentially many samples.
\begin{proposition}
\label{prop:realizable_lb}
Fix $H,K \in \NN$ with $K \ge 2$ and $\epsilon \in (0,\sqrt{1/8})$.
For any algorithm, there exists a LE-POMDP with time horizon $H$, $K$
actions, and $2H$ total states; a class of predictors $\Fcal$ with
$|\Fcal| = K^H$ and $\qs \in \Fcal$; and a constant $c \ge 0$ such
that the probability that the algorithm outputs a policy $\hat{\pi}$
with $V(\hat{\pi}) > V^\star - \epsilon$ after collecting $T$
trajectories is at most $2/3$ for all $T \le cK^H/\epsilon^2$.
\end{proposition}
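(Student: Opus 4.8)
The plan is to exhibit, for any algorithm, a family of hard LE-POMDPs indexed by a hidden optimal action sequence $a^\star \in \Aspace^H$ and to reduce the learning problem to best-arm identification among $K^H$ arms, exactly one of which is $\epsilon$-biased. The realizable class $\Fcal$ will be constructed so that it contains the true $\qs$ yet is identical across all instances, so that exposing it to the learner conveys no information about which instance is being faced. Since best-arm identification among $N$ arms with reward gap $\Theta(\epsilon)$ requires $\Omega(N/\epsilon^2)$ pulls, and each trajectory is one pull, we obtain the claimed $T \ge cK^H/\epsilon^2$ bound. This parallels the argument underlying Proposition~\ref{prop:agnostic_lb}, the new ingredient being the realizable construction.

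Concretely, I would use deterministic transitions with two states per level. For a candidate $b=(b_1,\dots,b_H)\in\Aspace^H$, let $s_h^+$ be a ``live'' state, reached exactly when the actions $b_1,\dots,b_{h-1}$ have all been played, and $s_h^-$ a ``dead'' state; from $s_h^+$ the action $b_h$ leads to $s_{h+1}^+$ and every other action to $s_{h+1}^-$, while $s_h^-$ transitions to $s_{h+1}^-$ under all actions. This uses $2H$ states, and the start state is the unique level-$1$ live state. Observations are drawn from a single fixed distribution that is the same in every state, so they carry no information about $b$; all rewards are zero except a terminal Bernoulli reward whose mean is $1/2+\epsilon$ when the agent is in $s_H^+$ and plays $b_H$, and $1/2-\epsilon$ otherwise. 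Then $\vs = 1/2+\epsilon$, achieved only by playing $b$ throughout, and a short computation shows $V(\hat\pi) = (1/2-\epsilon) + 2\epsilon\,p$, where $p$ is the probability that $\hat\pi$ plays $b$ exactly; the gap of $2\epsilon$ is what forces any $\epsilon$-optimal policy (one with $V(\hat\pi) > \vs - \epsilon = 1/2$) to satisfy $p > 1/2$, i.e.\ to substantially recover the hidden sequence.

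For realizability, I would take $\Fcal = \{f_b : b\in\Aspace^H\}$, where $f_b$ is the exact $\qs$ function of the instance indexed by $b$. Because transitions are deterministic, the played action prefix encodes the latent state, so each $f_b$ is a legitimate function of the observed history and proposed action, with values in $\{1/2-\epsilon,\,1/2+\epsilon\}\subseteq[0,1]$. Distinct $b$ induce distinct on-path predictions (they disagree at the first index where they differ, on the history whose action prefix equals their common prefix), so $|\Fcal| = K^H$, and the true value function of the instance with hidden sequence $a^\star$ is $f_{a^\star}\in\Fcal$, giving $\qs\in\Fcal$. The key structural point is that $\Fcal$ is the \emph{same} set for every instance, so providing it to the learner reveals nothing about $a^\star$.

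Finally I would draw $a^\star$ uniformly from $\Aspace^H$. Since observations are uninformative and only the terminal reward is signal, each trajectory is equivalent to pulling one of $N=K^H$ composite arms and observing a single Bernoulli sample, with exactly one arm of mean $1/2+\epsilon$ and the rest of mean $1/2-\epsilon$. A standard KL/Pinsker (or Fano) argument then shows that identifying the biased arm with probability exceeding $2/3$ requires $T \ge cK^H/\epsilon^2$ pulls, which by the previous paragraph is necessary to output an $\epsilon$-optimal policy with that probability. The main obstacle is not the counting bound but verifying that the realizable class genuinely leaks no information: I must confirm that $f_{a^\star}$ is indistinguishable, given the learner's view, from the other $f_b$ until the relevant arm is actually sampled, which is precisely the instance-symmetry of the construction. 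The hypothesis $\epsilon < \sqrt{1/8}$ keeps the Bernoulli means in $[0,1]$ and keeps the per-arm KL divergence at $\order(\epsilon^2)$, which closes the argument.
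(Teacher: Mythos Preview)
Your proposal is correct and follows essentially the same approach as the paper: a deterministic combination-lock POMDP with two states per level, uninformative observations, a terminal Bernoulli reward, and the regressor class $\Fcal=\{f_b:b\in\Aspace^H\}$ where each $f_b$ is precisely the $\qs$ of the instance indexed by $b$ (which, as in the paper, reduces to $1/2+\epsilon$ on prefixes of $b$ and a constant elsewhere), followed by a reduction to best-arm identification among $K^H$ arms. The one substantive difference is your choice of $\mathrm{Ber}(1/2-\epsilon)$ for suboptimal terminal rewards versus the paper's $\mathrm{Ber}(1/2)$; your $2\epsilon$ gap forces any policy with $V(\hat\pi)>V^\star-\epsilon$ to put mass strictly greater than $1/2$ on the optimal sequence, so the mode of the induced action-sequence distribution deterministically identifies $a^\star$, which makes the reduction to arm identification clean even when the algorithm may output a randomized policy.
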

As with Proposition~\ref{prop:agnostic_lb}, this lower bound precludes a $\textrm{Poly}(K,M,H,\log(|\Pi|))$ sample complexity bound for learning POMDPs with realizability.
The lower bound shows that even with realizability, the agent may have to reason over $K^H$ paths independently since the functions can depend on the entire history.
Proofs of both lower bounds here are deferred to Appendix~\ref{app:lower_bounds}.

Both lower bounds use POMDPs with deterministic transitions and an extremely small observation space. 
%% Consequently, the challenge of learning in POMDPs does not stem exclusively from stochasticity or rich observations.
Consequently, even learning in deterministic-transition POMDPs requires further assumptions.

\subsection{Main Assumptions}
As we have seen, neither restricting to reactive policies, nor
imposing realizability enable tractable learning in POMDPs on their
own.  Combined however, we will see that sample-efficient learning is
possible, and the combination of these two assumptions is precisely
how we characterize our model.  Specifically, we study POMDPs for
which $\qs$ can be realized by a predictor that uses only the current
observation and proposed action.

\begin{assum}[\emph{Reactive Value Functions}]
\label{as:qs}
We assume that for all $x \in \Xspace, a \in \Aspace$ and any two
state $s,s'$ such that $\dist{s}{x},\dist{s'}{x} > 0$, we have
\mbox{$\qs_s(x,a) = \qs_{s'}(x,a)$}.
\end{assum}

%% \begin{definition}
%%   Let $(\Sspace, \Aspace, \Xspace, \Gamma_1, \Gamma, D)$ be a layered episodic POMDP. 
%%   Let $\qs$ be correspondingly defined as in
%%   Eq.~\eqref{eqn:qstar}. The POMDP is called a \model if for all $x \in \Xcal, a \in \Acal$
%%   and any two states $s, s'$ such that $\dist{s}{x}, \dist{s'}{x} > 0$ we
%%   have \mbox{$\qs_s(x,a) = \qs_{s'}(x,a)$}. 
%%   \label{defn:cmdp}
%% \end{definition}
%% Restated, a \model requires the optimal action for maximizing
%% long-term reward to be dependent solely on the observation $x$
%% irrespective of the state. This is depicted in
%% Figure~\ref{fig:cmdp_model}, where the optimal action $a^\star$
%% depends only on the current observation. 
The restriction on $\qs$ implies that the optimal policy is reactive and also that the optimal predictor of long-term reward depends only on the current observation.
In the following section, we describe how this condition relates to other RL models in the literature. 
We first present a natural example. 

\begin{example}[\emph{Disjoint observations}] The simplest example is one where each state $s$ can be identified with a subset
  $\Xspace_s$ with $\dist{s}{x} > 0$ only for $x \in \Xspace_s$ and
  where $\Xspace_s \cap \Xspace_{s'} = \emptyset$ when $s \ne s'$. A
  realized observation then uniquely identifies the underlying state
  $s$ so that Assumption~\ref{as:qs} trivially holds, but this mapping from
  $s$ to $\Xspace_s$ is unknown to the agent. Thus, the problem cannot
  be easily reduced to a small-state MDP.  This setting is quite
  natural in several robotics and navigation tasks, where the visual
  signals are rich enough to uniquely identify the agent's position
  (and hence state).  It also applies to video game playing, where the
  raw pixel intensities suffice to decode the game's memory state, but
  learning this mapping is challenging.
%% In such scenarios, the visual signals are rich enough to identify the
%%   agent's position (and hence state) uniquely, but learning this
%%   mapping is challenging.
  \label{ex:disjoint}
\end{example}

%% \begin{example}[\emph{Path-augmented contexts}]
%%   If the transition function $\Gamma$ is deterministic, one can
%%   associate each state $s$ with a sequence of actions (a path) that
%%   ends at state $s$ and augment the context with some featurization of
%%   this sequence, including a featurization of the states visited along
%%   this path.  As paths uniquely identify states in this case, this is an instance of the disjoint context
%%   scenario in Example~\ref{ex:disjoint}.
%%   \label{ex:path}
%% \end{example}

Thinking of $x$ as the state, the above example is an MDP with
infinite state space but with structured transition operator.  While
our model is more general, we are primarily motivated by these
infinite-state MDPs, for which the reactivity assumptions are
completely non-restrictive. %% Markovian problems with large state
%% Focusing on just this subclass, our result shows that efficient learning is possible in large-state MDPs, provided the transition operator satisfies the assumption implicit in the \model definition.
For infinite-state MDPs, our model describes a particular structure on
the transition operator that we show enables efficient learning.
%% We are primarily interested in efficient learning in these models, and the \model assumption
%% In such models, the optimal value function $Q^\star$ is reactive, and our results focuses
%% We are primarily interested in efficiently learning in these models
We emphasize that our focus is not on partial observability issues.

As we are interested in understanding function approximation, we make
a realizability assumption.
\begin{assum}[\emph{Realizability}]
\label{as:realize}
We are given access to a class of predictors $\Fcal \subseteq (\Xcal
\times \Acal \rightarrow [0,1])$ of size $|\Fcal| = N$ and assume that
$Q^\star = f^\star \in \Fcal$.  We identify each predictor $f$ with a
policy $\pi_f(x) \triangleq \argmax_a f(x,a)$.  Observe that the
optimal policy is $\pi_{f^\star}$ which satisfies $V(\pi_{f^\star}) =
V^\star$.
\end{assum}

Assumptions~\ref{as:qs} and~\ref{as:realize} exclude the lower bounds
from Propositions~\ref{prop:agnostic_lb}
and~\ref{prop:realizable_lb}. Our algorithm requires one further
assumption. 

\begin{assum}[\emph{Deterministic Transitions}]
  \label{as:det}
We assume that the transition model is deterministic.  This means that
the starting distribution $\Trans_1$ is a point-mass on some state
$s_1$ and $\Trans: (\Sspace \times \Aspace) \rightarrow \Sspace$.
%% the transition dynamics map state action pairs
%% deterministically to future states, i.e. $\Trans: (\Sspace
%% \times \Aspace) \rightarrow \Sspace$, preserving the layered structure.
\end{assum}

Even with deterministic transitions, learning requires systematic
global exploration that is unaddressed in previous work.  Recall that
the lower bound constructions for Propositions~\ref{prop:agnostic_lb}
and~\ref{prop:realizable_lb} actually use deterministic transition
POMDPs.  Therefore, deterministic transitions combined with either the
reactive or the realizability assumption by itself still precludes
tractable learning.  Nevertheless, we hope to relax this final
assumption in future work.

More broadly, this model provides a framework to reason
about reinforcement learning with function approximation.  This is
highly desirable as such approaches are the empirical
state-of-the-art, but the limited supporting theory provides little
advice on systematic global exploration.

\subsection{Connections to Other Models and Techniques}
\label{sec:related}

The above model is closely related to several well-studied models in the
literature, namely:

\textbf{Contextual Bandits:} If $H=1$, then our model reduces to
stochastic contextual bandits~\cite{langford2008epoch,
  dudik2011efficient}, a well-studied simplification of the general
reinforcement learning problem. The main difference is that the choice
of action \emph{does not} influence the future observations (there is
only one state), and algorithms do not need to perform long-term
planning to obtain low sample complexity.%% . %% in
%% %% the stochastic contextual bandits problem all (observation, reward)
%% %% pairs are drawn independently and identically from some distribution.
%% Thus \models~force the learning algorithm to use \emph{long-term
%%   decision making}, which is not required for contextual bandits.

\textbf{Markov Decision Processes:} If $\Xspace=\Sspace$ and
$\dst{s}(x)$ for each state $s$ is concentrated on $s$, then
our model reduces to small-state MDPs, which can be efficiently solved by tabular
approaches~\cite{kearns2002near,brafman2003r,strehl2006pac}.  The key
differences in our setting are that the observation space $\Xspace$ is
extremely large or infinite and the underlying state is unobserved, so
tabular methods are not viable and algorithms need to
\emph{generalize} across observations.
%% .  Thus, \models~force the learning
%% algorithm to \emph{generalize} across observations, which is not
%% required in MDPs.

When the number of states is large, existing methods typically require
exponentially many samples such as the $\order(\Asize^H)$ result
of~\citet{kearns2002sparse}.  Others depend poorly on the complexity
of the policy set or scale linearly in the size of a covering over the
state space~\cite{kakade2003exploration,jong2007model,pazis2016efficient}.  Lastly,
policy gradient methods avoid dependence on size of the state space,
but do not achieve global
optimality~\cite{sutton1999policy,kakade2002approximately} in theory
and in practice, unlike our algorithm which is guaranteed to find the
globally optimal policy.

%% Others depend poorly on complexity of
%% policy set such as~\citet{kakade2003exploration} (see
%% also~\cite{jong2007model}), where the ability to cover the state space
%% in a metric known to the learner is assumed and samples scale linearly
%% rather than log of the cover size. %% Moreover, the sample
%% %% complexity scales linearly in the cover size as opposed to a more
%% %% typical logarithmic dependence as in supervised learning.  The sparse
%% %% sampling planner of Kearns et al.~\cite{kearns2002sparse} also implies
%% %% a sample complexity bound that is independent of the observation space
%% %% size for episodic MDPs, but grows as $\order(\Asize^H)$.
%% %% More recently, Abbasi-Yadkori and Neu~\cite{abbasi2014online}
%% %% propose a model for MDPs with side-information, but this model
%% %% requires mapping side-information to a small-state MDP, rather than
%% %% mapping an observation to an action as in \models.

%% Policy gradient methods do not depend on size of state space, but use
%% local search techniques and consequently do not achieve global
%% optimality~\cite{sutton1999policy,kakade2002approximately} in theory
%% and in practice, unlike our algorithm which is guaranteed to find the
%% globally optimal policy.

%% that apply (stochastic) optimization methods
%% to find a parameterized policy with high value can also be applied to
%% large-state MDPs and to \models.  However, these methods 

%% however they only observe one context
%% at the beginning of each episode and assume the states are also known
%% like classical MDPs, unlike our model with hidden states.

\textbf{POMDPs:} By definition our model is a POMDP where the $\qs$
function is consistent across states.  This restriction implies that
the agent does not have to reason over belief states as is required in
POMDPs.  There are some sample complexity guarantees for learning in
arbitrarily complex POMDPs, but the bounds we are aware of are quite
weak as they scale linearly with
$|\Pi|$~\cite{kearns1999approximate,mansour1999reinforcement}, or
require discrete observations from a small
set~\cite{azizzadenesheli2016reinforcement}.

\textbf{State Abstraction:} State abstraction (see~\cite{li2006towards} for a survey) focuses on understanding what
optimality properties are preserved in an MDP after the state space is
compressed.
%% However notice that even in the case of disjoint observations (Example~\ref{ex:disjoint}), an abstraction function that maps observations to their underlying state may not preserve the optimal policy, as the optimal policy in the \model~can depend on the observations. 
While our model does have a small number of underlying states, they do not necessarily admit non-trivial state
abstractions that are easy to discover (i.e. that do not
amount to learning the optimal behavior) as the optimal behavior can
depend on the observation in an arbitrary manner. Furthermore, most
sample complexity results cannot search over large abstraction sets
(see e.g.~\citet{jiang2015abstraction}), limiting their scope.
%% Moreover, while there are finite sample results for learning state abstractions, they all make strong assumptions that limit the scope of application.
%% A recent example is the work of Jiang et. al~\cite{jiang2015abstraction} which finds a good abstraction from a set of successively finer ones, but cannot search over the exponentially many abstractions functions.

\textbf{Function Approximation:} Our approach uses function
approximation to address the generalization problem implicit in our
model.  Function approximation is the empirical state-of-the-art in
reinforcement learning~\cite{mnih2015human}, but theoretical analysis
has been quite limited.  Several authors have studied linear or more
general function approximation
(See~\cite{tsitsiklis1997analysis,perkins2002convergent,baird1995residual}),
but none of these results give finite sample bounds, as they do not
address the exploration question.  Li and
Littman~\cite{li2010reducing} do give finite sample bounds, but they
assume access to a ``Knows-what-it-knows" (KWIK) oracle, which cannot
exist even for simple problems.  Other theoretical results either make
stronger realizability assumptions (c.f.,~\cite{antos2008learning}) or
scale poorly with problem parameters (e.g., polynomial in the number
of functions~\cite{nguyen2013competing} or the size of the observation
space~\cite{pazis2016efficient}).
%% \akshay{We are not aware of finite
%% sample results for approximating $\qs$ with a function class, which is
%% precisely what we do here.}

%% our result is the finite-sample guarantee for using a function approximation to the $\qs$ function.

\section{The Result}
\label{sec:approach}
We consider the task of Probably Approximately Correct (PAC) learning
the models defined in Section~\ref{sec:model}. 
%% In this task, we are
%% given a class of predictors $\Fcal \subset (\Xcal \times \Acal
%% \rightarrow [0,1])$ of size $|\Fcal| = N$, with the \emph{realizability assumption} that
%% $Q^\star = f^\star \in \Fcal$.  We identify each predictor $f$ with a
%% policy $\pi_f(x) \triangleq \argmax_{a} f(x,a)$.  Notice that the
%% optimal policy for the \model is $\pi_{f^\star}$ which satisfies
%% $V(\pi_{f^\star}) = V^\star$.
Given $\Fcal$ (Assumption~\ref{as:realize}), we say that an algorithm
PAC learns our model if for any $\epsilon,\delta\in (0,1)$, the
algorithm outputs a policy $\hat{\pi}$ satisfying $V(\hat{\pi}) \ge
V^\star - \epsilon$ with probability at least $1-\delta$.
%% The \emph{sample
%%   complexity} of the algorithm is the number of episodes of the
%% \model~that the algorithm executes before returning an
%% $\epsilon$-suboptimal policy.  
The \emph{sample complexity} is a
function $n: (0,1)^2 \rightarrow \NN$ such that for any
$\epsilon,\delta \in (0,1)$, the algorithm returns an
$\epsilon$-suboptimal policy with probability at least $1-\delta$
using at most $n(\epsilon,\delta)$ episodes. We refer to a
$\textrm{Poly}(\Ssize,\Asize,H,1/\epsilon,\log N,\log(1/\delta))$
sample complexity bound as polynomial in all relevant parameters.
Notably, there should be no dependence on $|\Xspace|$, which
may be infinite.

\iffalse
Our algorithm operates on \models~with one additional assumption.

\begin{assum}[\emph{Deterministic Transitions}]
  \label{ass:det}
We assume that the transition model is deterministic.  This
means that the starting distribution $\Trans_1$ is a point-mass on
some state $s_1$ and $\Trans: (\Sspace \times \Aspace) \rightarrow \Sspace$.%, preserving the layered structure. 
%% the transition dynamics map state action pairs
%% deterministically to future states, i.e. $\Trans: (\Sspace
%% \times \Aspace) \rightarrow \Sspace$, preserving the layered structure.
\end{assum}

Even with deterministic transitions, PAC-learning \models requires
systematic global exploration that is unaddressed in previous work.
Recall that the lower bound constructions for
Propositions~\ref{prop:agnostic_lb} and~\ref{prop:realizable_lb}
actually use deterministic transition POMDPs.  Therefore,
deterministic transitions combined with either the reactive or the
realizability assumption still precludes tractable learning.  \fi

\subsection{The Algorithm}
Before turning to the algorithm, it is worth clarifying some
additional notation.  Since we are focused on the deterministic
transition setting, it is natural to think about the environment as an
exponentially large search tree with fan-out $\Asize$ and depth $H$.
Each node in the search tree is labeled with an (unobserved) state $s
\in \Sspace$, and each edge is labeled with an action $a \in \Aspace$,
consistent with the transition model.  A path $p \in \Aspace^\star$ is
a sequence of actions from the root of the search tree, and we also
use $p$ to denote the state reached after executing the path $p$ from
the root.  Thus, $D_p$ is the observation distribution of the state at
the end of the path $p$.
%%  We
%% often call such a path a \emph{roll-in}, in line with existing
%% terminology.  
We use $p \circ a$ to denote a path formed by executing all actions in
$p$ and then executing action $a$, and we use $|p|$ to denote the
length of the path.  Let $\treeroot$ denote the empty path, which
corresponds to the root of the search tree.

The pseudocode for the algorithm, which we call \alglong (\alg), is
displayed in Algorithm~\ref{alg:dfs_det_big_obs} (See also
Appendix~\ref{app:pseudocode}).  \alg has two main components: a
depth-first-search routine with a learning step
(step~\ref{algline:elim} in Algorithm~\ref{alg:dfs_learn}) and an
on-demand exploration technique
(steps~\ref{algline:beg_demand}-\ref{algline:end_demand} in
Algorithm~\ref{alg:dfs_det_big_obs}). The high-level idea of the
algorithm is to eliminate regression functions that do not meet
Bellman-like consistency properties of the $Q^\star$ function. We now
describe both components and their properties in detail.

\begin{algorithm}[t]
\begin{algorithmic}[1]
\State $\Fcal \gets \dfslearn(\treeroot,\Fcal,\epsilon,\delta/2)$. %%, let $\hat{V}^\star = \hat{V}^f(\treeroot,\pi_f)$ for any $f \in \Fcal$ (See Eq.~\ref{eqn:vfdef}).
\State Choose any $f \in \Fcal$. Let $\hat{V}^\star$ be a Monte Carlo estimate of $V^f(\treeroot,\pi_f)$. (See Eq.~\eqref{eqn:vfdef})
%% \State Let $\hat{V}^\star$ be a Monte Carlo estimate of $V^f(\treeroot,\pi_f)$ for any $f \in \Fcal$. (Use existing data, see Eq.~\eqref{eqn:vfdef})
%% \State Use observations from $\treeroot$ to estimate $V^f(\treeroot,\pi_f) = \EE_{x \sim D_{\treeroot}}f(x,\pi_f(x))$ for any $f \in \Fcal$ as $\hat{V}^\star$.
\State Set $\epsdemand = \epsilon/2, \ndemand = \frac{32\log(12MH/\delta)}{\epsilon^2}$ and $\ndemandtwo = \frac{8\log(6MH/\delta)}{\epsilon}$. 
\While{\true}
\State Fix a regressor $f \in \Fcal$. \label{algline:beg_demand}
\State Collect $\ndemand$ trajectories according to $\pi_f$ and estimate $V(\pi_f)$ via Monte-Carlo estimate $\hat{V}(\pi_f)$.\label{algline:mcest}
\State If $|\hat{V}(\pi_f) - \hat{V}^\star| \le \epsdemand$, return $\pi_f$.
\State Otherwise update $\Fcal$ by calling \dfslearn$(p,\Fcal,\epsilon,\frac{\delta}{6MH^2\ndemandtwo})$ on each of the $H-1$ prefixes $p$ of each of the first $\ndemandtwo$ paths collected in step \ref{algline:mcest}. \label{algline:end_demand}
\EndWhile
\end{algorithmic}
\caption{\alglong: \alg$(\Fcal,\epsilon,\delta)$}
\label{alg:dfs_det_big_obs}
\end{algorithm}

\begin{algorithm}[t]
\begin{algorithmic}[1]
\State Set $\esterr = \frac{\epsilon}{320H^2\sqrt{K}}$ and $\epstest = 20(H-|p|-5/4)\sqrt{K}\esterr$.
\For{$a \in \Acal$, if not $\statelearned(p\circ
  a,\Fspace,\epstest,\esterr,\frac{\delta/2}{\Ssize \Asize H})$} \label{algline:check_con}
\State $\Fspace \gets \dfslearn(p \circ a, \Fspace, \epsilon,\delta)$. 
\EndFor
\State Collect $\ntrain = \frac{24}{\esterr^2}\log\left(\frac{8MH\Fsize}{\delta}\right)$ observations $(x_i,a_i,r_i)$ where $(x_i,r_i') \sim \dst{p}$, $a_i$ is chosen uniformly at random, and $r_i = r_i'(a_i)$.
\State Return $\left\{f \in \Fspace: \tilde{R}(f) \le \min_{f'
  \in \Fspace}\tilde{R}(f') + 2\esterr^2 +
\frac{22\log(4MH\Fsize/\delta)}{\ntrain}\right\}$, $\tilde{R}(f)$ defined in
Eq.~\eqref{eq:td_sq_loss}. \label{algline:elim} 
%%  &\mbox{with}~\tilde{R}(f) = \frac{1}{\ntrain} \sum_{i=1}^{\ntrain} (f(x_i,a_i) - r_i - \fpvhat{f}{p\circ a_i})^2
%% \end{align*}
\end{algorithmic}
\caption{\dfslearn$(p,\Fcal,\epsilon,\delta)$}
\label{alg:dfs_learn}
\end{algorithm}

\begin{algorithm}[t]
\begin{algorithmic}[1]
\State Set $\ntest = \frac{2}{\esterr^2}\log(2\Fsize/\delta)$. Collect $\ntest$ observations $x_i \sim \dst{p}$.
\State Compute for each function, $\fpvhat{f}{p} = \frac{1}{\ntest}\sum_{i=1}^{\ntest} f(x_i,\pi_{f}(x_i))$.\label{algline:vhat}
\State Return $\mathbf{1}\left[|\fpvhat{f}{p} - \fpvhat{g}{p}| \le \epstest\ \forall f,g \in \Fspace\right]$.
\end{algorithmic}
\caption{$\statelearned(p, \Fspace, \epstest, \esterr, \delta)$}
\label{alg:state_learned}
\end{algorithm}

\textbf{The DFS routine}: When the DFS routine, displayed in
Algorithm~\ref{alg:dfs_learn}, is run at some path $p$, we first
decide whether to recursively expand the descendants $p\circ a$ by
performing a \emph{consensus test}.  Given a path $p'$, this test,
displayed in Algorithm~\ref{alg:state_learned}, computes estimates of
\emph{value predictions},
\begin{equation}
  V^f(p',\pi_f) \triangleq \EE_{x \sim D_{p'}}f(x,\pi_f(x)),
  \label{eqn:vfdef}
\end{equation} 
for all the surviving regressors. These value predictions are easily
estimated by collecting many observations after rolling in to $p'$ and
using empirical averages (See line~\ref{algline:vhat} in
Algorithm~\ref{alg:state_learned}).  If all the functions agree on
this value for $p'$ the DFS need not visit this path.

After the recursive calls, the DFS routine performs the
\emph{elimination step} (line~\ref{algline:elim}).  When this step is
invoked at path $p$, the algorithm collects $\ntrain$ observations
$(x_i,a_i,r_i)$ where $(x_i,r_i') \sim \dst{p}$, $a_i$ is chosen
uniformly at random, and $r_i = r_i'(a_i)$ and eliminates regressors
that have high empirical risk,
\begin{align}
\tilde{R}(f) \triangleq \frac{1}{\ntrain}\sum_{i=1}^{\ntrain} (f(x_i,a_i) - r_i - \hat{V}^f(p\circ a_i, \pi_f))^2 \label{eq:td_sq_loss}.
\end{align}
%% Here, we reuse the value prediction estimates $\hat{V}^f(p\circ
%% a_i,\pi_f)$ from the consensus tests (Line~\ref{algline:vhat} in
%% Algorithm~\ref{alg:state_learned}).

\textbf{Intuition for DFS}: This regression problem is motivated by
the realizability assumption and the definition of $\qs$ in
Eq.~\eqref{eqn:qstar}, which imply that at path $p$ and for all
actions $a$,
\begin{align}
f^\star(x,a) &= \EE_{r \sim \dst{p|x}}r(a) + V(p\circ a,\pi_{f^\star})
= \EE_{r \sim \dst{p|x}} r(a) + \EE_{x'\sim \dst{p\circ a}}
f^\star(x',\pi_{f^\star}(x')). \label{eq:f_consistency}
\end{align}
Thus $f^\star$ is consistent between its estimate at the current state
$s$ and the future state $s' = \Trans(s,a)$.

The regression problem~\eqref{eq:td_sq_loss} is essentially a finite
sample version of this identity.  However, some care must be taken as
the target for the regression function $f$ includes $V^f(p \circ
a,\pi_f)$, which is $f$'s value prediction for the future.  The fact
that the target differs across functions can cause instability in the
regression problem, as some targets may have substantially lower
variance than $f^\star$'s.  To ensure correct behavior, we must obtain
high-quality future value prediction estimates, and so, we re-use the
Monte-Carlo estimates $\hat{V}^f(p\circ a, \pi_f)$ in
Eq.~\eqref{eqn:vfdef} from the consensus tests. Each time we perform
elimination, the regression targets are close for all considered $f$
in Equation~\eqref{eq:td_sq_loss} owing to consensus being satisfied
at the successor nodes in Step~\ref{algline:check_con} of
Algorithm~\ref{alg:dfs_learn}.

Given consensus at all the descendants, each elimination step
inductively propagates learning towards the start state by ensuring
the following desirable properties hold: (i) $f^\star$ is not
eliminated, (ii) consensus is reached at $p$, and (iii) surviving
policies choose good actions at $p$.  Property (ii) controls the
sample complexity, since consensus tests at state $s$ return true once
elimination has been invoked on $s$, so DFS avoids exploring the
entire search space. Property (iii) leads to the PAC-bound; if we have
run the elimination step on all states visited by a policy, that
policy must be near-optimal.

To bound the sample complexity of the DFS routine, since there are
$\Ssize$ states per level and the consensus test returns true once
elimination has been performed, we know that the DFS does not visit a
large fraction of the search tree.  Specifically, this means DFS is
invoked on at most $\Ssize H$ nodes in total, so we run elimination at
most $\Ssize H$ times, and we perform at most $\Ssize \Asize H$
consensus tests.  Each of these operations requires polynomially many
samples.

The elimination step is inspired by the RegressorElimination algorithm
of Agarwal et. al~\cite{agarwal2012contextual} for contextual bandit
learning in the realizable setting.  In addition to forming a
different regression problem, RegressorElimination carefully chooses
actions to balance exploration and exploitation which leads to an
optimal regret bound.  In contrast, we are pursuing a PAC-guarantee
here, for which it suffices to focus exclusively on exploration.

\textbf{On-demand Exploration}: While DFS is guaranteed to estimate
the optimal value $V^\star$, it unfortunately does not identify the
optimal policy.  For example, if consensus is satisfied at a state $s$
without invoking the elimination step, then each function accurately
predicts the value $V^\star(s)$, but the associated policies are not
guaranteed to achieve this value.  To overcome this issue, we use an
\emph{on-demand exploration} technique in the second phase of the
algorithm (Algorithm~\ref{alg:dfs_det_big_obs},
steps~\ref{algline:beg_demand}-\ref{algline:end_demand}).

At each iteration of this phase, we select a policy $\pi_f$ and
estimate its value via Monte Carlo sampling.  If the policy has
sub-optimal value, we invoke the DFS procedure on many of the paths
visited.  If the policy has near-optimal value, we have found a good
policy, so we are done.  This procedure requires an accurate estimate
of the optimal value, which we already obtained by invoking the DFS
routine at the root, since it guarantees that all surviving regressors
agree with $f^\star$'s value on the starting state distribution.
$f^\star$'s value is precisely the optimal value.

\textbf{Intuition for On-demand Exploration}: Running the elimination
step at some path $p$ ensures that all surviving regressors take good
actions at $p$, in the sense that taking one action according to any
surviving policy and then behaving optimally thereafter achieves
near-optimal reward for path $p$.  This does not ensure that all
surviving policies achieve near-optimal reward, because they may take
highly sub-optimal actions after the first one.  On the other hand, if
a surviving policy $\pi_f$ visits only states for which the
elimination step has been invoked, then it must have near-optimal
reward.  More precisely, letting $L$ denote the set of states for
which the elimination step has been invoked (the ``learned" states),
we prove that any surviving $\pi_f$ satisfies
\begin{align*}
V^\star - V(\pi_f) \le \epsilon/8 + \PP\left[ \pi_f \textrm{ visits a
    state} s \notin L \right]
\end{align*}

Thus, if $\pi_f$ is highly sub-optimal, it must visit some unlearned
states with substantial probability. By calling \dfslearn on the paths
visited by $\pi_f$, we ensure that the elimination step is run on at
least one unlearned states.  Since there are only $\Ssize H$ distinct
states and each non-terminal iteration ensures training on an
unlearned state, the algorithm must terminate and output a
near-optimal policy.

Computationally, the running time of the algorithm may be $O(N)$,
since eliminating regression functions according to
Eq.~\eqref{eq:td_sq_loss} may require enumerating over the class and
the consensus function requires computing the maximum and minimum of
$N$ numbers, one for each function. This may be intractably slow for
rich function classes, but our focus is on statistical efficiency, so
we ignore computational issues here.

\subsection{The PAC Guarantee}
Our main result certifies that \alg PAC-learns our models with
polynomial sample complexity.
\begin{theorem}[PAC bound]
\label{thm:det_dfs_big_obs}
For any $(\epsilon,\delta) \in (0,1)$ and under
Assumptions~\ref{as:qs},~\ref{as:realize}, and~\ref{as:det}, with
probability at least $1-\delta$, the policy $\pi$ returned by \alg is
at most $\epsilon$-suboptimal.  Moreover, the number of episodes
required is at most
\begin{align*}
\otil\left(\frac{\Ssize
  H^6\Asize^2}{\epsilon^3}\log(\Fsize/\delta)\log(1/\delta)\right).
\end{align*}
\end{theorem}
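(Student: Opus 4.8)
The plan is to condition on a single high-probability \emph{good event} and then analyze the two phases of \alg in turn: first the correctness and cost of the \dfslearn recursion, then the termination and output guarantee of the on-demand loop. I treat each randomized primitive --- a \statelearned test, an elimination step~\eqref{eq:td_sq_loss}, or a Monte-Carlo value estimate --- as succeeding whenever its empirical averages lie within the intended tolerance of the corresponding population quantity, and take one union bound over all invocations with the confidence parameters supplied in Algorithms~\ref{alg:dfs_det_big_obs}--\ref{alg:state_learned}. Since every primitive averages quantities in $[0,1]$, Hoeffding's inequality controls the value predictions $\hat V^f(p,\pi_f)$ to accuracy $\esterr$ with $\ntest$ samples and the Monte-Carlo policy values to accuracy $\epsdemand$ with $\ndemand$ samples; the sample sizes are chosen precisely so the per-call failure probabilities sum to at most $\delta$.

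The heart of the argument is an induction over the search tree from depth $H$ back to the root establishing that, immediately after \dfslearn returns at a path $p$, the surviving class obeys three invariants: (i) $f^\star \in \Fcal$; (ii) consensus holds at $p$, i.e. all survivors have $V^f(p,\pi_f)$ within $\epstest$ of one another; and (iii) every surviving $\pi_f$ takes near-optimal first actions, $\EE_{x\sim\dst{p}}[\qs_p(x,\pi_{f^\star}(x)) - \qs_p(x,\pi_f(x))]$ small. The base case is immediate at the terminal level where $\vs \equiv 0$. For the inductive step I would read~\eqref{eq:td_sq_loss} as a least-squares regression in which $f^\star$ is the conditional mean of its own target by the Bellman identity~\eqref{eq:f_consistency}. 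The standard contextual-bandit excess-risk decomposition then shows the population counterpart of $\tilde R$ satisfies $R(f)-R(f^\star) = \EE_{x,a}[(f(x,a)-f^\star(x,a))^2]$ up to a bias from the mismatch between $f$'s regression target $r(a)+\hat V^f(p\circ a,\pi_f)$ and $f^\star$'s target $r(a)+\vs(p\circ a)$. This is exactly where consensus at the children enters: invariant (ii) at each $p\circ a$, guaranteed by the recursive calls in step~\ref{algline:check_con}, together with the reused estimates $\hat V^f(p\circ a,\pi_f)$, forces every target to agree with $f^\star$'s up to $\order(\epstest+\esterr)$, so the problem is effectively a common-target regression. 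Concentration of $\tilde R$ over the finite class keeps $f^\star$ (invariant i) and certifies $\EE_{x\sim\dst p}[(f(x,a)-f^\star(x,a))^2] \le \order(\Asize)\esterr^2$ for each $a$ (the $\Asize$ from uniform exploration); converting this $L_2$ bound into an action-value gap via the $\max$-selection $\pi_f(x)=\argmax_a f(x,a)$ yields invariants (ii) and (iii) at $p$.

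The main obstacle is controlling how these errors \emph{accumulate} with depth. Because each function regresses toward its own future value prediction, the per-level bias feeds into the next level's regression, and a naive bound would grow geometrically in $H$. The depth-dependent tolerance $\epstest = 20(H-|p|-5/4)\sqrt{\Asize}\,\esterr$ is engineered so the consensus slack shrinks as we approach the root, absorbing the accumulated bias into an error that grows only \emph{linearly} in the remaining horizon. The delicate calculation is to prove by induction that the value gap contributed at path $p$ is $\order((H-|p|)\sqrt{\Asize}\,\esterr)$ and to verify that the constants in $\esterr=\epsilon/(320H^2\sqrt{\Asize})$ make the total root-level error at most $\epsilon/8$ after summing the linear profile over the $H$ levels, i.e. $\order(H^2\sqrt{\Asize}\,\esterr)=\order(\epsilon)$.

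For the on-demand phase I would first use the root invariants to conclude $\hat V^\star$ estimates $\vs$ to within $\order(\epstest+\epsdemand)$. The key structural lemma is the performance-difference bound $\vs - V(\pi_f) \le \epsilon/8 + \PP[\pi_f \textrm{ visits some } s\notin L]$, where $L$ is the set of states on which elimination has been run; it follows by telescoping the single-step action gaps of $\pi_f$ against $\pi_{f^\star}$, charging learned states to invariant (iii) (summing to $\epsilon/8$ over the $H$ levels) and unlearned states to the escape probability. Hence whenever the loop does not return, the accurate Monte-Carlo test forces $\PP[\textrm{escape}]=\Omega(\epsilon)$, so among $\ndemandtwo=\order(\log(\cdot)/\epsilon)$ sampled trajectories at least one reaches an unlearned state with high probability and the ensuing \dfslearn calls run elimination on a genuinely new state. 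Because value predictions depend only on the state reached (determinism) and $\Fcal$ only shrinks, consensus is monotone, so learned states are never re-expanded; with only $\Ssize H$ states the loop terminates after at most $\Ssize H$ productive iterations and returns an $\epsilon$-suboptimal policy. Finally I would total the costs: the at most $\Ssize H$ elimination steps and $\Ssize \Asize H$ consensus tests, each with roll-in length $H$ and sample size $\otil(H^4\Asize/\epsilon^2)$, give the $\otil(\Ssize H^6\Asize^2/\epsilon^2)$ exploration cost, while the remaining $\epsilon^{-1}$ and $\log(1/\delta)$ factors come from the $\order(\Ssize H^2\ndemandtwo)$ \dfslearn invocations issued across the on-demand iterations; summing all roll-in and sampling costs yields the stated $\otil(\Ssize H^6\Asize^2\epsilon^{-3}\log(\Fsize/\delta)\log(1/\delta))$ bound.
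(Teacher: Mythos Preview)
Your plan is correct and tracks the paper's own argument closely: the backward induction over actions-to-go establishing the bias/consensus and instantaneous-risk invariants for \dfslearn (the paper's Theorems~\ref{thm:td_regelim} and~\ref{thm:dfs_learn}), followed by the escape-probability performance-difference lemma (Lemma~\ref{lem:regret_with_unlearned}) that drives termination of \ondemand, with the same $\Ssize H$ state-counting bound on iterations. Two small points to tighten: the concentration for the elimination step~\eqref{eq:td_sq_loss} needs Bernstein together with the variance-to-mean relation $\Var Y(f)\le 32\,\EE Y(f)+64\esterr^2$ (the paper's Lemmas~\ref{lem:y_mean_var}--\ref{lem:sq_loss_dev}) rather than Hoeffding, since otherwise you only get $\EE Y(f)=\order(\esterr)$ instead of the $\order(\esterr^2)$ your $L_2$ bound and the linear-in-$H$ bias recursion require; and in the sample-complexity tally the extra factor of $H$ comes from invoking \dfslearn on all $H{-}1$ prefixes of each on-demand trajectory, not from roll-in length, which is already part of a single episode.
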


This result uses the $\otil$ notation to suppress logarithmic
dependence in all parameters except for $\Fsize$ and $\delta$.  The
precise dependence on all parameters can be recovered by examination
of our proof and is shortened here simply for clarity.
See Appendix~\ref{app:full_proof} for the full proof of the result.

This theorem states that \alg produces a policy that is at most
$\epsilon$-suboptimal using a number of episodes that is polynomial in
all relevant parameters.  To our knowledge, this is the first
polynomial sample complexity bound for reinforcement learning with
infinite observation spaces, without prohibitively strong assumptions
(e.g.,~\cite{antos2008learning,nguyen2013competing,pazis2016efficient}).
We also believe this is the first finite-sample guarantee for
reinforcement learning with general function approximation without
prohibitively strong assumptions (e.g.,~\cite{antos2008learning}).

Since our model generalizes both contextual bandits and MDPs, it is worth comparing the sample complexity bounds. 
\begin{enums}
\item In contextual bandits, we have $M=H=1$ so that the sample
  complexity of \alg is
  $\otil(\frac{K^2}{\epsilon^3}\log(\Fsize/\delta)\log(1/\delta))$, in
  contrast with known $\otil(\frac{K}{\epsilon^2}\log(\Fsize/\delta))$
  results.
\item Prior results establish the sample complexity for learning
  layered episodic MDPs with deterministic transitions is
  $\otil(\frac{\Ssize \Asize
    \textrm{poly}(H)}{\epsilon^2}\log(1/\delta))$~\cite{dann2015sample,reveliotis2007efficient}.
\end{enums}

Both comparisons show our sample complexity bound may be suboptimal in
its dependence on $K$ and $\epsilon$.  Looking into our proof, the
additional factor of $K$ comes from collecting observations to
estimate the value of future states, while the additional $1/\epsilon$
factor arises from trying to identify a previously unexplored state.
In contextual bandits, these issues do not arise since there is only
one state, while, in tabular MDPs, they can be trivially resolved as the
states are observed.  Thus, with minor modifications, \alg can avoid
these dependencies for both special cases.  In addition, our bound
disagrees with the MDP results in the dependence on the policy
complexity $\log(N)$; which we believe is unavoidable when working
with rich observation spaces.

Finally, our bound depends on the number of states $M$ in the worst
case, but the algorithm actually uses a more refined notion. Since the states
are unobserved, the algorithm considers two states distinct only if
they have reasonably different value functions, meaning learning on
one does not lead to consensus on the other. Thus, a more
distribution-dependent analysis defining states through the function
class is a promising avenue for future work.

\section{Discussion}
\label{sec:discussion}

This paper introduces a new model in which it is possible to design
and analyze principled reinforcement learning algorithms engaging in
global exploration.  As a first step, we develop a new algorithm and
show that it learns near-optimal behavior under a
deterministic-transition assumption with polynomial sample complexity.
This represents a significant advance in our understanding of
reinforcement learning with rich observations.
%% This is the first polynomial sample complexity bound for RL with
%% general function approximation under weak assumptions.  
However, there
are major open questions:
\begin{enums}
\item Do polynomial sample bounds for this model with stochastic
  transitions exist?
\item Can we design an algorithm for learning this model that is both
  computationally and statistically efficient?  The sample complexity
  of our algorithm is logarithmic in the size of the function class
  $\Fcal$ but uses an intractably slow enumeration of these functions.
\end{enums}
Good answers to both of these questions may yield new practical reinforcement learning algorithms.

%%   As deterministic transitions are unrealistic in some practical settings, this would broaden the applicability
%% \dfsalg~has two main undesirable properties.  Firstly, it
%%   requires a deterministic transition model which is unrealistic in
%%   some practical settings.  Secondly, the algorithm involves
%%   enumerating the class of regression functions, so while its sample
%%   complexity is logarithmic in the function class size, its running
%%   time is linear, which is typically intractably slow.  Resolving both
%%   of these deficiencies may lead to a new practical reinforcement
%%   learning algorithm.
%% \end{enumerate}
%% We look forward to pursuing these directions.

\section*{Acknowledgements}
We thank Akshay Balsubramani and Hal Daum\'{e} III for formative
discussions, and we thank Tzu-Kuo Huang and Nan Jiang for carefully
reading an early draft of this paper.  This work was carried out while
AK was at Microsoft Research.

%% \vfill
%% \newpage 
\appendix
\section{The Lower Bounds}
\label{app:lower_bounds}

\begin{theorem}[Lower bound for best arm identification in stochastic bandits]
\label{thm:bandit_lb}
For any $K \ge 2$ and $\epsilon \le \sqrt{1/8}$ and any best-arm identification algorithm, there exists a
multi-armed bandit problem for which the best arm $i^\star$ is
$\epsilon$ better than all others, but for which the estimate $\hat{i}$ of the best arm must have $\PP[\hat{i}
  \ne i^\star] \ge 1/3$ unless the number of samples collected $T$ is
at least $\frac{K}{72\epsilon^2}$.
\end{theorem}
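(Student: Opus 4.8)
The plan is to use the standard information-theoretic (change-of-measure) recipe for best-arm identification lower bounds. First I would fix a family of $K$ Bernoulli instances together with a reference instance. For each arm $i \in [K]$, let $\nu_i$ be the bandit in which arm $i$ pays a $\mathrm{Bernoulli}(1/2+\epsilon)$ reward and every other arm pays $\mathrm{Bernoulli}(1/2)$; in $\nu_i$ the unique best arm is $i$ and it is exactly $\epsilon$ better than all others, as the statement requires. I would also introduce the reference instance $\nu_0$ in which all arms pay $\mathrm{Bernoulli}(1/2)$, which serves only as a pivot for the analysis. Fix any (possibly randomized) algorithm that pulls $T$ arms and outputs an estimate $\hat i$, let $N_i$ denote the number of times arm $i$ is pulled, and let $A_i = \{\hat i = i\}$ be the event of declaring $i$ the best arm.

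The key step is to bound how much the law of the observed transcript changes when we switch from $\nu_0$ to $\nu_i$. Since the two instances differ only at arm $i$, the divergence decomposition (chain rule for KL over the adaptively generated sequence of pulls and rewards) gives $\mathrm{KL}(\PP_{\nu_0}\,\|\,\PP_{\nu_i}) = \EE_{\nu_0}[N_i]\cdot \mathrm{KL}(\mathrm{Ber}(1/2)\,\|\,\mathrm{Ber}(1/2+\epsilon))$. A direct computation gives $\mathrm{KL}(\mathrm{Ber}(1/2)\,\|\,\mathrm{Ber}(1/2+\epsilon)) = -\tfrac12\log(1-4\epsilon^2)$, and using $-\log(1-x)\le x/(1-x)$ together with the hypothesis $\epsilon^2 \le 1/8$ (so that $4\epsilon^2 \le 1/2$) this is at most $4\epsilon^2$. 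Pinsker's inequality then yields $\|\PP_{\nu_0}-\PP_{\nu_i}\|_{\mathrm{TV}} \le \epsilon\sqrt{2\,\EE_{\nu_0}[N_i]}$, so by the definition of total variation $\PP_{\nu_i}[A_i] \le \PP_{\nu_0}[A_i] + \epsilon\sqrt{2\,\EE_{\nu_0}[N_i]}$.

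To turn this into a statement about a single hard instance I would average over $i$ rather than pigeonhole, which keeps the constants clean even at $K=2$. Averaging the last display and applying Cauchy--Schwarz to $\sum_i \sqrt{\EE_{\nu_0}[N_i]} \le \sqrt{K\sum_i \EE_{\nu_0}[N_i]} = \sqrt{KT}$ (using $\sum_i N_i = T$ deterministically), together with $\sum_i \PP_{\nu_0}[A_i] \le 1$ (the events $A_i$ are disjoint), gives $\tfrac1K \sum_i \PP_{\nu_i}[A_i] \le \tfrac1K + \epsilon\sqrt{2T/K}$. Hence some index $i^\star$ satisfies $\PP_{\nu_{i^\star}}[\hat i = i^\star] \le \tfrac1K + \epsilon\sqrt{2T/K}$. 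If $T < K/(72\epsilon^2)$ then $\epsilon\sqrt{2T/K} < 1/6$, and since $1/K \le 1/2$ for $K\ge 2$ we obtain $\PP_{\nu_{i^\star}}[\hat i \ne i^\star] > 1 - 1/2 - 1/6 = 1/3$ on instance $\nu_{i^\star}$, which is exactly the claimed bound.

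The main obstacle is the change-of-measure step: one must apply the KL chain rule to the full interaction transcript (a sequence of adaptively chosen pulls and their realized rewards), not to a fixed product measure, and correctly track that only arm $i$ contributes to the divergence, weighted by $\EE_{\nu_0}[N_i]$ where $N_i$ is itself a random number of pulls. The remaining care is purely in the constants, where the hypothesis $\epsilon\le\sqrt{1/8}$ is exactly what bounds the per-pull KL by $4\epsilon^2$, and the averaging/Cauchy--Schwarz route (rather than a crude two-instance pigeonhole) is what makes the $1/3$ threshold and the factor $72$ come out even in the worst case $K=2$.
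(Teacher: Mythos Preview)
Your proposal is correct and essentially identical to the paper's own proof: the same family of $K$ Bernoulli instances with a uniform reference $\nu_0$, the same Pinsker-plus-KL-chain-rule bound yielding $\PP_{\nu_i}[A_i]\le \PP_{\nu_0}[A_i]+\tfrac12\sqrt{-\log(1-4\epsilon^2)\,\EE_{\nu_0}[N_i]}$, the same averaging over $i$ combined with Cauchy--Schwarz to collapse $\sum_i \EE_{\nu_0}[N_i]=T$, and the same numerical conclusion $\tfrac{1}{K}+\epsilon\sqrt{2T/K}\le 2/3$ when $T\le K/(72\epsilon^2)$. The only cosmetic differences are that the paper explicitly reduces to deterministic algorithms first and bounds $-\log(1-x)\le 2x$ via the Taylor series rather than $x/(1-x)$.
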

\begin{proof}
  The proof is essentially the same as the regret lower bound for
  stochastic multi-armed bandits
  from~\citet{auer2002nonstochastic}. Since we want the lower bound
  for best arm identification instead of regret, we include a full
  proof for completeness.
  
  Following~\citet{auer2002nonstochastic}, the lower bound
  instance is drawn uniformly from a family of multi-armed bandit
  problems with $K$ arms each. There are $K$ problems in the family,
  and each one is parametrized by the optimal arm $i^\star$.  For the
  $i^{\star\textrm{th}}$ problem, arm $i^\star$ produces rewards drawn from
  $\textrm{Ber}(1/2+\epsilon)$ while all other arms produce rewards from
  $\textrm{Ber}(1/2)$.  Let $\PP_{i^\star}$ denote the reward
  distribution for the $i^{\star\textrm{th}}$ bandit problem, so
  that $\PP_{i^\star}(\cdot | a = i^\star) = \textrm{Ber}(1/2+\epsilon)$
  and $\PP_{i^\star}(\cdot | a \ne i^\star) = \textrm{Ber}(1/2)$.  Let
  $\PP_0$ denote the reward distribution where all arms receive
  $\textrm{Ber}(1/2)$ rewards.

Since the environment is stochastic, any randomized algorithm is just
a distribution over deterministic ones, and it therefore suffices to
consider only deterministic algorithms.  More precisely, a randomized
algorithm uses some random bits $z$ and for each choice, the algorithm
itself is deterministic.  If we lower bound $\PP_{i^\star}[\hat{i} \ne
  i^\star | z] $ for all $z$, then we also obtain a lower bound after
taking expectation.

A deterministic algorithm can be specified as a sequence of mappings
$\psi_t: \{0,1\}^t \rightarrow [K]$ with the interpretation of
$\psi_T$ as the estimate of the best arm.  Note that $\psi_0$ is the
first arm chosen, which does not depend on any of the observations.
The algorithm can be specified this way since the sequence of actions
played can be inferred by the sequence of observed rewards.  Let
$\PP_{i^\star,\psi}$ denote the distribution over all $T$ rewards when
$i^\star$ is the optimal arm and actions are selected according to
$\psi$.  We are interested in bounding the error event
$\PP_{i^\star,\psi}[\psi_T \ne i^\star]$.

We first prove,
\begin{align*}
\PP_{i^\star,\psi}[\psi_T = i^\star] - \PP_{0,\psi}[\psi_T = i^\star]
\le \frac{1}{2}\sqrt{\EE_{0,\psi}[N_{i^\star}]\log\frac{1}{1-4\epsilon^2}},
\end{align*}
where $N_i$ is the number of times $\psi$ plays action $i$ over the
course of $T$ rounds.  $N_i$ is a random variable since it depends on
the sequence of observations, and here we take expectation with
respect to $\PP_0$.

To prove this statement, notice that,
\begin{align*}
\left|\PP_{i^\star,\psi}[\psi_T = i^\star] - \PP_{0,\psi}[\psi_T =
  i^\star]\right| \le \|P_{i^\star,\psi} - P_{0,\psi}\|_{\textrm{TV}}
\le \sqrt{\frac{1}{2}KL(P_{0,\psi} || P_{i^\star,\psi})}~.
\end{align*}
The first inequality is by definition of the total variation distance,
while the second is Pinsker's inequality.  We are left to bound the KL
divergence.  To do so, we introduce notation for sequences.  For any
$t \in \NN$, we use $r_{1:t} \in \{0,1\}^t$ to denote the binary
reward sequence of length $t$.  The KL divergence is
\begin{align*}
KL(P_{0,\psi} || P_{i^\star,\psi}) &= \sum_{r_{1:T} \in \{0,1\}^T}
P_{0,\psi}(r_{1:T})
\log\left(\frac{P_{0,\psi}(r_{1:T})}{P_{i^\star,\psi}(r_{1:T})}\right)\\ &
= \sum_{t=1}^T\sum_{r_{1:t} \in \{0,1\}^t}P_{0,\psi}(r_{1:t})
\log\left(\frac{P_{0,\psi}(r_t|r_{1:t-1})}{P_{i^\star,\psi}(r_t|r_{1:t-1})}\right)\\ &
= \sum_{t=1}^T\sum_{r_{1:t-1}: a_t = i^\star}P_{0,\psi}(r_{1:t-1})
\left(\sum_{x\in\{0,1\}} P_{0,\psi}(x)\log\left(\frac{P_{0,\psi}(x|a_t
  = i^\star)}{P_{i^\star,\psi}(x|a_t = i^\star)}\right)\right)~,
\end{align*}
where $a_t$ is the chosen action at time $t$. To arrive at the second
line we use the chain rule for KL-divergence.  The third line is based
on the fact that if $a_t \ne i^\star$, then the log ratio is zero,
since the two conditional distributions are identical.  Continuing
with straightforward calculations, we have
\begin{align*}
KL(P_{0,\psi} || P_{i^\star,\psi}) &= \sum_{t=1}^T\sum_{r_{1:t-1}: a_t
  = i^\star}P_{0,\psi}(r_{1:t-1})
\left(\frac{1}{2}\log\left(\frac{1/2}{1/2-\epsilon}\right) +
\frac{1}{2}\log\left(\frac{1/2}{1/2+\epsilon}\right)\right)\\ & =
\left(-\frac{1}{2}\log(1-4\epsilon^2)\right)\sum_{t=1}^T\sum_{r_{1:t-1}:a_t=i^\star}
P_{0,\psi}(r_{1:t-1})\\ & =
\left(-\frac{1}{2}\log(1-4\epsilon^2)\right)\sum_{t=1}^T\PP_{0,\psi}[a_t
  = i^\star].
%% \le \frac{\epsilon^2}{2}\EE_{0,\psi}[N_{i^\star}]
\end{align*}
This proves the sub-claim, which follows the same argument as as Auer et. al~\cite{auer2002nonstochastic}.

To prove the final result, we take expectation over the problem $i^\star$.
\begin{align*}
\frac{1}{K}\sum_{i^\star=1}^K\PP_{i^\star,\psi}[\psi_T = i^\star] &\le \frac{1}{K}\sum_{i^\star=1}^K\PP_{0,\psi}[\psi_T = i^\star] + \frac{1}{2K}\sum_{i^\star=1}^K\sqrt{\EE_{0,\psi}[N_{i^\star}]\log\frac{1}{1-4\epsilon^2}}\\
& \le \frac{1}{K} + \frac{1}{2}\sqrt{\frac{-\log(1-4\epsilon^2)}{K}\EE_{0,\psi}\sum_{i^\star=1}^KN_{i^\star}}
 \le \frac{1}{K} + \frac{1}{2}\sqrt{\frac{-\log(1-4\epsilon^2) T}{K}}.
\end{align*}
If $4\epsilon^2 \le 1/2$ then $-\log(1-4\epsilon^2) \le 8\epsilon^2$.
This follows by the Taylor expansion of $-\log(1-x)$,
\begin{align*}
-\log(1-x) = \sum_{i=1}^{\infty} \frac{x^i}{i} \le x\left( \sum_{i=0}^{\infty}\frac{2^{-i}}{i+1}\right) \le x \sum_{i=0}^\infty 2^{-i} = 2x.
\end{align*}
The inequality here uses the assumption that $x \le 1/2$.

Thus, whenever $\epsilon \le \sqrt{1/8}$ and $T  \le \frac{K}{72\epsilon^2}$, this number is smaller than $2/3$, since we restrict to the cases where $K\ge 2$.
This is the success probability, so the failure probability is at least $1/3$, which proves the result.
\end{proof}

\subsection{The construction}
Here we design a family of POMDPs for both lower bounds. As with
multi-armed bandits above, the lower bound will be realized by
sampling a POMDP from a uniform distribution over this family of
problems. Fix $H$ and $K$ and pick a single $x_h \in \Xcal$ for each
level $h \in [H]$ so that $x_h \ne x_{h'}$ for all pairs $h \ne h'$.
For each level there are two states $g_h$ and $b_h$ for ``good'' and
``bad.''  The observation marginal distribution $D_{g_h} = D_{b_h}$ is
concentrated on $x_h$ for each level $h$, so the observations provide
no information about the underlying state.  Rewards for all levels
except for $h=H$ are zero.

Each POMDPs in the family corresponds to a path $p^\star = (a_1^\star,
\ldots, a_H^\star) \in K^H$.  The transition function for the POMDP
corresponding to the path $p^\star$ is,
\begin{align*}
\Gamma(g_h,a_h^\star) &\triangleq g_{h+1}\\
\Gamma(g_h,a) &\triangleq b_{h+1} \mbox{ if } a \ne a_h^\star\\
\Gamma(b_h,a) &\triangleq b_{h+1}\ \forall\ a.
\end{align*}
The reward is drawn from $\textrm{Ber}(1/2+\epsilon)$ if the last state is $g_H$ and if the last action is $a_H^\star$.
For all other outcomes the reward is drawn from $\textrm{Ber}(1/2)$. 
Observe that these models have deterministic transitions.

Clearly all of the models in this family are distinct, and there are
$K^H$ such models.  Moreover, since the observations $x_h$ provide no
information and only the final reward is non-zero, no
information is received until the full sequence of actions is
selected.  More formally, for any two policies $\pi, \pi'$, the KL
divergence between the distributions of observations and rewards
produced by the two policies is exactly the KL divergence between the
final rewards produced by the two policies. Therefore, the problem is
equivalent to a multi-armed bandit problem with $K^H$ arms, where the
optimal arm gets a $\textrm{Ber}(1/2+\epsilon)$ reward while all other
arms get a $\textrm{Ber}(1/2)$ reward. Thus, identifying a policy that
is no-more than $\epsilon$ suboptimal in this POMDP is
information-theoretically equivalent to identifying the best arm in
the stochastic bandit problem in Theorem~\ref{thm:bandit_lb} with
$K^H$ arms.  Applying that lower bound gives a sample complexity bound
of $\Omega(K^H/\epsilon^2)$.

\subsection{Proving both lower bounds}
To verify both lower bounds in Propositions~\ref{prop:agnostic_lb}
and~\ref{prop:realizable_lb}, we construct the policy and regressor
sets.  For Proposition~\ref{prop:agnostic_lb}, we need a set of
reactive policies such that finding the optimal policy has a large
sample complexity. To this end, we use the set of all $K^H$ mappings
from the $H$ observations to actions.  Specifically, each policy $\pi$
is identified with a sequence of $H$ actions $(a_1, \ldots, a_H)$ and
has $\pi(x_h) = a_h$.  These policies are reactive by definition since
they do not depend on any previous history, or state of the
world. Clearly there are $K^H$ such policies, and each policy is
optimal for exactly one POMDP defined above, namely $\pi_p$ is
optimal for the POMDP corresponding to the path $p$. Furthermore, in
the POMDP defined by $p$, we have $V(\pi_p) = 1/2+\epsilon$, whereas
$V(\pi) = 1/2$ for every other policy.  Consequently, finding the best
policy in the class is equivalent to identifying the best arm in this
family of problems. Taking a uniform mixture of problems in the family
as before, we reason that this requires at least
$\Omega(K^H/\epsilon^2)$ trajectories.

For Proposition~\ref{prop:realizable_lb}, we use a similar
construction. For each path $p = (a_1, \ldots, a_H)$, we associate a
regressor $f_p$ with,
\begin{align*}
f_p(\rho) \triangleq \frac{1}{2} + \epsilon\mathbf{1}[\rho \textrm{ is a prefix
    of } p].
\end{align*}
Here we use $\rho$ to denote the history of the interaction, which can
be condensed to a sequence of actions since the observations provide
no information.

Clearly for the POMDP parameterized by $p$, $f_p$ correctly maps the
history to future reward, meaning that the POMDP is realizable for
this regressor class. Relatedly, $\pi_{f_p}$ is the optimal policy for
the POMDP with optimal sequence $p$. Moreover, there are precisely
$K^H$ regressors.  As before, the learning objective requires
identifying the optimal policy and hence the optimal path, which
requires $\Omega(K^H/\epsilon^2)$ trajectories.

\section{Full Algorithm Pseudocode}
\label{app:pseudocode}
It is more natural to break the algorithm into more components for the
analysis.  This lets us focus on each component in isolation.

We first clarify some notation involving value functions.
For predictor $f$ and policy $\pi$, we use,
\begin{align*}
V^f(s,\pi) &\triangleq \EE_{x \sim D_s}[f(x,\pi(x))]\\
V(s,\pi) & \triangleq \EE_{x \sim D_s}[r(\pi(x)) + \EE_{s' \sim \Gamma(s,\pi(x))} V(s',\pi)].
\end{align*}
Recall that $V(s_{H+1},\pi) = 0$ for all $s_{H+1}$, which is a terminating state. 

We often use a path $p$ as the first argument, with the convention
that the associated state is the last one on the path.  This is
enabled by deterministic transitions.  If a state is omitted from
these functions, then it is assumed to be the start state or the root
of the search tree.  We also use $V^\star$ for the optimal value,
where by assumption we have $V^\star = V(\pi_{f^\star}) =
V^{f^\star}(\pi_{f^\star})$.  Finally, throughout the algorithm and
analysis, we use Monte Carlo estimates of these quantities, which we
denote as $\hat{V}^f, \hat{V}$, etc.

Pseudocode for the compartmentalized version of the algorithm is
displayed in Algorithm~\ref{alg:dfs_det_big_obs2} with subroutines
displayed as
Algorithms~\ref{alg:dfs_learn2},~\ref{alg:state_learned2},~\ref{alg:td_regressor_elimination},
and~\ref{alg:on_demand}.  The algorithm should be invoked as
$\alg(\Fspace,\epsilon,\delta)$ where $\Fspace$ is the given class
of regression functions, $\epsilon$ is the target accuracy and
$\delta$ is the target failure probability.  The two main components
of the algorithm are the \dfslearn and \ondemand routines.  \dfslearn
ensures proper invocation of the training step, \regelim, by verifying
a number of preconditions, while \ondemand finds regions of the search
tree for which training must be performed.

It is easily verified that this is an identical description of the algorithm. 

\begin{algorithm}[t]
\begin{algorithmic}[1]
\State $\Fcal \gets \dfslearn(\treeroot,\Fcal,\epsilon,\delta/2)$.
\State Choose any $f \in \Fcal$. Let $\hat{V}^\star$ be a Monte Carlo estimate of $V^f(\treeroot,\pi_f)$.
\State $f \gets \ondemand(\Fcal, \hat{V}^\star, \epsilon,\delta/2)$. 
\State Return $\pi_f$. 
\end{algorithmic}
\caption{\alglong: \alg$(\Fcal, \epsilon,\delta)$}
\label{alg:dfs_det_big_obs2}
\end{algorithm}

\begin{algorithm}[t]
\begin{algorithmic}[1]
\State Set $\esterr = \frac{\epsilon}{320H^2\sqrt{K}}$ and $\epstest = 20(H-|p|-5/4)\sqrt{K}\esterr$.
\For{$a \in \Acal$}
\If{Not $\statelearned(p\circ a,\Fspace,\epstest,\esterr,\frac{\delta/2}{\Ssize \Asize H})$}
%% \STATE Return $\Fspace$. \# Learned in state $p \circ a$ already
%% \ENDIF
\State $\Fspace \gets \dfslearn(p \circ a, \Fspace, \epsilon,\delta)$. \hfill  \# Recurse
\EndIf
\EndFor
\State $\hat{\Fspace} \gets \regelim\left(p,\Fspace, \esterr,\frac{\delta/2}{\Ssize H}\right)$. \hfill \# Learn in state $p$.
\State Return $\hat{\Fspace}$.
\end{algorithmic}
\caption{\dfslearn$(p,\Fcal,\epsilon,\delta)$}
\label{alg:dfs_learn2}
\end{algorithm}

\begin{algorithm}[t]
\begin{algorithmic}
\State Set $\ntest = 2\log(2\Fsize/\delta)/\esterr^2$.
\State Collect $\ntest$ observations $x_i \sim \dst{p}$.
\State Compute Monte-Carlo estimates for each value function,
\begin{align*}
\fpvhat{f}{p} = \frac{1}{\ntest}\sum_{i=1}^{\ntest} f(x_i,\pi_{f}(x_i)) \qquad \forall f \in \Fspace.
\end{align*}
\If {$|\fpvhat{f}{p} - \fpvhat{g}{p}| \le \epstest$ for all $f,g \in \Fspace$}
\State return \true.
\EndIf
\State Return \false.
\end{algorithmic}
\caption{$\statelearned(p, \Fspace, \epstest, \esterr, \delta)$}
\label{alg:state_learned2}
\end{algorithm}

\begin{algorithm}[t]
\begin{algorithmic}
\State Require estimates $\fpvhat{f}{p\circ a},\forall f \in \Fspace, a \in \Aspace$.
\State Set $\ntrain = 24\log(4\Fsize/\delta)/\esterr^2$.
\State Collect $\ntrain$ observations $(x_i,a_i,r_i)$ where $x_i \sim \dst{p}$, $a_i$ is chosen uniformly at random, and $r_i = r_i(a_i)$.
\State Update $\Fspace$ to
\begin{align}
&\left\{f \in \Fspace: \tilde{R}(f) \le \min_{f'
    \in \Fspace}\tilde{R}(f') + 2\esterr^2 +
  \frac{22\log(2\Fsize/\delta)}{\ntrain}\right\}, \nonumber \\
 &\mbox{with}~\tilde{R}(f) \triangleq \frac{1}{\ntrain}
  \sum_{i=1}^{\ntrain} (f(x_i,a_i) - r_i - \fpvhat{f}{p\circ a_i})^2. 
\label{eqn:elim}
\end{align}
\State Return $\Fspace$.
\end{algorithmic}
\caption{$\regelim(p, \Fspace, \esterr, \delta)$}
\label{alg:td_regressor_elimination}
\end{algorithm}

\begin{algorithm}[t]
\begin{algorithmic}
\State Set $\epsdemand = \epsilon/2, \ndemand = \frac{32\log(6MH/\delta)}{\epsilon^2}$ and $\ndemandtwo = \frac{8\log(3MH/\delta)}{\epsilon}$. 
\While{\true}
\State Fix a regressor $f \in \Fcal$.
\State Collect $\ndemand$ trajectories according to $\pi_f$ and estimate $V(\pi_f)$ via a Monte-Carlo estimate $\hat{V}(\pi_f)$.
\State If $|\hat{V}(\pi_f) - \hat{V}^\star| \le \epsdemand$, return $\pi_f$.
\State Otherwise update $\Fcal$ by calling \dfslearn$(p,\Fcal,\epsilon,\delta/(3MH^2\ndemandtwo))$ on each of the $H-1$ prefixes $p$ of each of the first $\ndemandtwo$ paths collected for the Monte-Carlo estimate.
\EndWhile
\end{algorithmic}
\caption{\ondemand$(\Fcal, \hat{V}^\star, \epsilon, \delta)$}
\label{alg:on_demand}
\end{algorithm}

\section{The Full Analysis}
\label{app:full_proof}
The proof of the theorem hinges on analysis of the the subroutines. 
We turn first to the \regelim~routine, for which we show the following guarantee.
Recall the definition,
\begin{align*}
\fpvfunc{f}{p} \triangleq \EE_{x \sim \dst{p}} f(x,\pi_f(x)).
\end{align*}

\begin{theorem}[Guarantee for \regelim]
Consider running \regelim~at path $p$ with regressors $\Fspace$, parameters $\esterr,\delta$ and with $\ntrain = 24 \log(4\Fsize/\delta)/\esterr^2$.
Suppose that the following are true:
\begin{enumerate}
\item \textbf{Estimation Precondition:} We have access to estimates $\hat{V}^f(p\circ a,\pi_f)$ for all $f \in \Fspace, a \in \Aspace$ such that, $|\fpvhat{f}{p\circ a}- \fpvfunc{f}{p\circ a}| \le \esterr$.
\item \textbf{Bias Precondition:} For all $f,g \in \Fspace$ and for all $a \in \Aspace$, $|\fpvfunc{f}{p\circ a} - \fpvfunc{g}{p\circ a}| \le \tdbias$. 
\end{enumerate}
Then the following hold simultaneously with probability at least $1-\delta$:
\begin{packed_enum}
\item $f^\star$ is retained by the algorithm.
\item \textbf{Bias Bound}:
\begin{align}
|\fpvfunc{f}{p} - \fpvfunc{g}{p}| \le 8\esterr\sqrt{K} + 2\esterr +
\tdbias.
\label{eqn:bias-bound}
\end{align}
\item \textbf{Instantaneous Risk Bound:} 
\begin{align}
\vs(p) - V^{f^\star}(p, \pi_f) \le 4\esterr\sqrt{2K} + 2\esterr + 2\tdbias.
\end{align}
\item \textbf{Estimation Bound}: Regardless of whether the
  preconditions hold, we have estimates $\fpvhat{f}{p}$ with,
\begin{align}
|\fpvhat{f}{p} - \fpvfunc{f}{p}| \le \frac{\esterr}{\sqrt{12}}.
\end{align}
\end{packed_enum}
The last three bounds hold for all surviving $f,g \in \Fspace$. 
\label{thm:td_regelim}
\end{theorem}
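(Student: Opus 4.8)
The plan is to recast the elimination step~\eqref{eqn:elim} as a single squared-loss regression problem in which each function is regressed against its \emph{own} moving target, and to track how the two preconditions control the resulting bias and variance. For a function $f$ define the shifted target $\bar f(x,a) \triangleq \EE_{r\sim \dst{p|x}}r(a) + \fpvhat{f}{p\circ a}$, which is exactly the conditional mean (over the reward noise) of the quantity $f$ is fit against, since the estimates $\fpvhat{f}{p\circ a}$ are fixed constants. The entire argument reduces to the master bound
\[
b_f \triangleq \EE_{x\sim \dst{p},\, a\sim\mathrm{Unif}}\left[(f(x,a)-\bar f(x,a))^2\right] = \order(\esterr^2)
\]
for every surviving $f$. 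Two structural facts make this tractable: the reward noise $\mathrm{Var}(r(a)\mid x)$ is identical across all functions (targets differ only by the additive constants $\fpvhat{f}{p\circ a}$), so the irreducible-noise part of the risk cancels in any difference $\tilde R(f)-\tilde R(g)$; and for $f^\star$ itself, the consistency identity~\eqref{eq:f_consistency} gives $f^\star(x,a)-\bar{f^\star}(x,a) = \fpvfunc{f^\star}{p\circ a}-\fpvhat{f^\star}{p\circ a}$, which the \textbf{Estimation Precondition} bounds by $\esterr$ pointwise, so $b_{f^\star}\le \esterr^2$.

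\textbf{Concentration (the crux).} The key step is a Bernstein bound, uniform over the $\Fsize$ functions, for the per-sample loss differences $D_i(f) \triangleq (f(x_i,a_i)-r_i-\fpvhat{f}{p\circ a_i})^2 - (f^\star(x_i,a_i)-r_i-\fpvhat{f^\star}{p\circ a_i})^2$. A direct expansion shows the additive value-constants cancel against $\bar f$ and $\bar{f^\star}$, leaving $\EE[D_i(f)] = b_f - b_{f^\star} \ge b_f - \esterr^2$ and $\mathrm{Var}(D_i(f)) \le C(b_f + \esterr^2)$ for an absolute constant $C$, i.e.\ variance is controlled by the mean up to an $\order(\esterr^2)$ slack. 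With $\ntrain = 24\log(4\Fsize/\delta)/\esterr^2$, Bernstein plus a self-bounding (AM--GM) step that absorbs half of $b_f$ into the left-hand side simultaneously certifies that $f^\star$ survives (its empirical risk exceeds the minimum by at most the threshold $2\esterr^2 + 22\log(2\Fsize/\delta)/\ntrain$) and yields $b_f = \order(\esterr^2)$ for every survivor. Notably $\tdbias$ does not enter here; the \textbf{Bias Precondition} is used only in the translation below.

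\textbf{From $L_2$ control to value bounds.} Write $\Delta_f(x,a)\triangleq f(x,a)-\bar f(x,a)$. Because $a$ is sampled uniformly, the master bound gives $\sum_a \EE_x\Delta_f(x,a)^2 = K\,\EE_{x,a}\Delta_f^2 = \order(K\esterr^2)$, which is the origin of the extra $\sqrt K$ in the conclusions. For the \textbf{Bias Bound} I compare $\fpvfunc{f}{p}=\EE_x\max_a f(x,a)$ to the reference $W_f \triangleq \EE_x\max_a[\EE_{r\sim \dst{p|x}}r(a)+\fpvhat{f}{p\circ a}]$: since $|\max_a u_a - \max_a v_a|\le \max_a|u_a-v_a|\le \sqrt{\sum_a(u_a-v_a)^2}$, Jensen gives $|\fpvfunc{f}{p}-W_f|\le \sqrt{\sum_a\EE_x\Delta_f(x,a)^2}=\order(\sqrt K\esterr)$, while the two preconditions give $|W_f-W_g|\le \max_a|\fpvhat{f}{p\circ a}-\fpvhat{g}{p\circ a}|\le 2\esterr+\tdbias$; the triangle inequality then yields Eq.~\eqref{eqn:bias-bound}. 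For the \textbf{Instantaneous Risk Bound} I use the greedy decomposition $f^\star(x,\pi_{f^\star}(x))-f^\star(x,\pi_f(x)) \le [f^\star-f](x,\pi_{f^\star}(x)) + [f-f^\star](x,\pi_f(x))$, valid since $\pi_f$ is $f$-greedy, expand each bracket into a bounded constant-difference (controlled by the preconditions as $2\esterr+\tdbias$) plus $\Delta_f,\Delta_{f^\star}$ terms, and take $\EE_x$; the $f^\star$ terms are $\le\esterr$ pointwise, while each $\Delta_f$ term contributes $\order(\sqrt K\esterr)$ through $\EE_x\Delta_f(x,\pi(x))^2\le \sum_a\EE_x\Delta_f(x,a)^2$.

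\textbf{Estimation Bound.} This conclusion is independent of the preconditions because it only asserts accuracy of a Monte-Carlo average of the bounded quantity $f(x_i,\pi_f(x_i))\in[0,1]$; a Hoeffding bound using the $\ntrain$ collected observations, union-bounded over $\Fcal$, comfortably gives $|\fpvhat{f}{p}-\fpvfunc{f}{p}|\le \esterr/\sqrt{12}$. I expect the main obstacle to be the concentration of the preceding paragraph: obtaining a variance-dependent rather than range-dependent deviation, so that the elimination threshold can be as small as $\order(\esterr^2)$, while simultaneously coping with the fact that each function is regressed against its own $f$-dependent target. Everything downstream is bookkeeping with the triangle inequality and Cauchy--Schwarz once the master $L_2$ bound is established.
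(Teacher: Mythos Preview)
Your proposal is correct and tracks the paper's proof closely. The concentration step is identical in substance: your $D_i(f)$ is the paper's random variable $Y(f)$, your identity $\EE[D_i(f)]=b_f-b_{f^\star}$ is exactly the content of the paper's mean calculation for $Y$ (once one uses $f^\star(x,a)-V^{f^\star}_{p\circ a}=\EE[r(a)\mid x]$), the variance--mean relationship you invoke is the paper's $\Var[Y]\le 32\,\EE[Y]+64\esterr^2$, and both then apply Bernstein with a self-bounding (complete-the-square) step. The instantaneous-risk translation via the greedy inequality $f(x,\pi_f(x))\ge f(x,\pi_{f^\star}(x))$ and the Hoeffding-based estimation bound are likewise the same.

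The one place you genuinely deviate is the \textbf{Bias Bound}. You route through the reference quantity $W_f=\EE_x\max_a\bar f(x,a)$ and use $|\max_a u_a-\max_a v_a|\le\sqrt{\sum_a(u_a-v_a)^2}$, whereas the paper compares $f$ and $g$ directly via $g(x,\pi_g(x))\ge g(x,\pi_f(x))$ and then peels off two squared-loss terms (one for $f$, one for $g$), each bounded by $\sqrt{K}\sqrt{\EE Y+\esterr^2}$. Both arguments are valid and produce the same order; your version is arguably cleaner conceptually (it makes transparent that $\sqrt K$ comes from switching from the uniform action measure to a single action, and that $\tdbias$ enters only through the constants $\hat V^f_a-\hat V^g_a$), while the paper's direct pairwise decomposition tracks constants slightly more tightly in the risk bound (it merges $\Delta_{x,\pi_f(x)}$ and $\Delta_{x,\pi_{f^\star}(x)}$ into a single Cauchy--Schwarz step via $(a-b)^2\le 2a^2+2b^2$, saving a factor of $\sqrt 2$).
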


The theorem shows that, as long as we call \regelim with the two preconditions, then $f^\star$, the optimal regressor, always survives.
It also establishes a number of other properties about the surviving functions, namely that they agree on the value of this path (the bias bound) and that the associated policies take good actions from this path (the instantaneous risk bound). 
Note that the instantaneous risk bound is \emph{not} a cumulative risk bound.
The second term on the left hand side is the reward achieved by behaving like $\pi_f$ for one action but then behaving optimally afterwards.
The proof is deferred to Appendix~\ref{app:regelim}.

Analysis of the \statelearned~subroutine requires only standard concentration-of-measure arguments. 
\begin{theorem}[Guarantee for \statelearned]
\label{thm:state_learned}
Consider running \statelearned~on path $p$ with $\ntest =
2\log(2\Fsize/\delta)/\esterr^2$ and $\epstest \ge 2\esterr+\slbias$,
for some $\slbias > 0$.
\begin{packed_enum}
\item[(i)] With probability at least $1-\delta$, we have estimates
  $\fpvhat{f}{p}$ with \mbox{$|\fpvhat{f}{p} - \fpvfunc{f}{p}| \le
  \esterr$} \quad $\forall f \in \Fspace$.
\item[(ii)] If $|\fpvfunc{f}{p} - \fpvfunc{g}{p}| \le \slbias, \forall
  f,g \in \Fspace$, under the event (i), the algorithm returns \true.
\item[(iii)] If the algorithm returns \true, then under the event in
  (1), we have $|\fpvfunc{f}{p} - \fpvfunc{g}{p}| \le 2\esterr +
  \epstest$ $\forall f,g \in \Fspace$.
\end{packed_enum}
\end{theorem}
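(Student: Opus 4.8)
The plan is to derive all three parts from a single favorable concentration event, which I establish once via Hoeffding's inequality and a union bound; conditioned on that event, parts (ii) and (iii) reduce to one-line triangle-inequality manipulations. Since the value predictions $\fpvhat{f}{p}$ are simple empirical averages of bounded quantities, no tools beyond Hoeffding are needed.

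First I would prove part (i). Fix a regressor $f \in \Fspace$. By construction $\fpvhat{f}{p} = \frac{1}{\ntest}\sum_{i=1}^{\ntest} f(x_i,\pi_f(x_i))$ is an empirical average of $\ntest$ i.i.d.\ random variables, where $x_i \sim \dst{p}$ and each summand $f(x_i,\pi_f(x_i))$ lies in $[0,1]$ by Assumption~\ref{as:realize}. Its expectation is exactly $\fpvfunc{f}{p} = \EE_{x\sim\dst{p}} f(x,\pi_f(x))$ by the definition in Eq.~\eqref{eqn:vfdef}. Hoeffding's inequality then yields $\PP[|\fpvhat{f}{p} - \fpvfunc{f}{p}| > \esterr] \le 2\exp(-2\ntest\esterr^2)$ for this single function. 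Taking a union bound over all $\Fsize$ regressors and substituting $\ntest = 2\log(2\Fsize/\delta)/\esterr^2$ shows the total failure probability is at most $2\Fsize\exp(-2\ntest\esterr^2) \le \delta$, which establishes (i). I would then condition on this event for the remainder of the argument.

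Given event (i), part (ii) follows by the triangle inequality. For any $f,g \in \Fspace$, I would bound $|\fpvhat{f}{p} - \fpvhat{g}{p}| \le |\fpvhat{f}{p} - \fpvfunc{f}{p}| + |\fpvfunc{f}{p} - \fpvfunc{g}{p}| + |\fpvfunc{g}{p} - \fpvhat{g}{p}| \le \esterr + \slbias + \esterr = 2\esterr + \slbias$, using (i) twice and the hypothesis $|\fpvfunc{f}{p}-\fpvfunc{g}{p}|\le\slbias$. Because the assumption $\epstest \ge 2\esterr+\slbias$ makes this at most $\epstest$, the test in Algorithm~\ref{alg:state_learned2} passes for every pair, so the routine returns \true.

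Part (iii) is the symmetric direction. If the routine returns \true then by definition $|\fpvhat{f}{p}-\fpvhat{g}{p}|\le\epstest$ for all $f,g$, and again under event (i) the triangle inequality gives $|\fpvfunc{f}{p}-\fpvfunc{g}{p}| \le |\fpvfunc{f}{p}-\fpvhat{f}{p}| + |\fpvhat{f}{p}-\fpvhat{g}{p}| + |\fpvhat{g}{p}-\fpvfunc{g}{p}| \le \esterr + \epstest + \esterr = 2\esterr + \epstest$, as claimed. The only place requiring genuine care is the union-bound bookkeeping in part (i) together with verifying that the prescribed $\ntest$ suffices; once that is pinned down, everything else is a direct consequence of the boundedness of the predictors and the triangle inequality, so I anticipate no substantial obstacle.
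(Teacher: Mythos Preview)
Your proposal is correct and follows essentially the same approach as the paper: establish the concentration event via Hoeffding's inequality plus a union bound over $\Fspace$ to get part~(i), and then derive parts~(ii) and~(iii) by exactly the same triangle-inequality chains you wrote. The only cosmetic difference is that the paper states the Hoeffding bound in the form $\sqrt{2\log(2\Fsize/\delta)/\ntest}$ before plugging in $\ntest$, whereas you go directly to the exponential tail; either way the prescribed $\ntest$ suffices.
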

Appendix~\ref{app:state_learned} provides the proof.

Analysis of both the \dfslearn and \ondemand routines requires a
careful inductive argument.  We first consider the \dfslearn routine.
\begin{theorem}[Guarantee for \dfslearn]
\label{thm:dfs_learn}
Consider running \dfslearn on path $p$ with regressors $\Fcal$, and parameters $\epsilon,\delta$.
With probability at least $1-\delta$, for all $h$ and all $s_h \in \Sspace_h$ for which we called \regelim, the conclusions of Theorem~\ref{thm:td_regelim} hold with $\phi = \frac{\epsilon}{320H^2\sqrt{K}}$ and $\tau_1 = 20(H-h)\sqrt{K}\phi$. %% Used to be (h-1) now is H-h. 
If $T$ is the number of times the algorithm calls \regelim, then the number of episodes executed by the algorithm is at most,
\begin{align*}
\order\left(\frac{TH^4\Asize^2}{\epsilon^2}\log(\Fsize \Ssize \Asize H/\delta)\right).
\end{align*}
Moreover, $T \le MH$ for any execution of \dfslearn.
\end{theorem}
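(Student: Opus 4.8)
The plan is to prove the three claims in their logical order: first the correctness of the \regelim conclusions at every learned state, then $T \le MH$, and finally the episode count, which is pure bookkeeping once the first two are in hand. The engine is a single induction that runs from the leaves of the search tree toward the root, matching the order in which \dfslearn invokes \regelim, since it recurses on all children of $p$ before calling \regelim at $p$ itself.

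For the correctness claim I would induct downward on the level $h$, showing that every \regelim call at a state $s_h\in\Sspace_h$ satisfies both preconditions of Theorem~\ref{thm:td_regelim} with $\phi=\frac{\epsilon}{320H^2\sqrt K}$ and $\tau_1=20(H-h)\sqrt K\,\phi$, so that its four conclusions hold. The base case $h=H$ is immediate: the successors are terminal, every regressor predicts value $0$ there, and so both preconditions hold with $\tau_1=0$. For the step at $p$ reaching level $h$, each child $p\circ a$ has been rendered consensus-satisfied in one of two ways. If its consensus test passed outright, Theorem~\ref{thm:state_learned}(i) furnishes the required $\phi$-accurate estimates and part~(iii) bounds the child bias by $2\phi+\epstest$; if we instead recursed into it, conclusion~4 of Theorem~\ref{thm:td_regelim} furnishes estimates accurate to $\phi/\sqrt{12}\le\phi$ and conclusion~2 bounds the child bias by $8\phi\sqrt K+2\phi+20(H-h-1)\sqrt K\,\phi$. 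The only real content is checking that, under the affine threshold $\epstest=20(H-|p|-5/4)\sqrt K\,\phi$, both child-bias expressions stay below $\tau_1=20(H-h)\sqrt K\,\phi$, i.e.\ that descending one level inflates the bias by at most $20\sqrt K\,\phi$; this reduces to inequalities such as $8\sqrt K+2\le20\sqrt K$ on the recursion branch and $2\le5\sqrt K$ on the consensus branch (the $5/4$ offset supplying the slack), both valid for $K\ge1$. A union bound over the calls then upgrades the per-call high-probability conclusions to a simultaneous guarantee.

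For $T\le MH$ I would show \regelim fires at most once per state. Once it has run at a path reaching state $s$, conclusion~2 certifies $|\fpvfunc{f}{s}-\fpvfunc{g}{s}|\le8\phi\sqrt K+2\phi+\tau_1$ for all survivors; because the layered deterministic dynamics fix the level of $s$ and make $D_s$ depend only on $s$, any subsequent consensus test at $s$ confronts functions this close, and since $\epstest$ at that level exceeds this post-elimination bias bound by at least $2\phi$ (the inequality $4\le7\sqrt K$), Theorem~\ref{thm:state_learned}(ii) forces the test to return \true. Hence \dfslearn never re-enters a learned state, \regelim fires at most once on each of the $MH$ states, and $T\le MH$. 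The two constraints on $\epstest$ are complementary—the correctness branch caps it from above, this branch bounds it from below, and the per-level decrement $20\sqrt K\,\phi$ reconciles them across all $H$ levels. The probability accounting is a union bound with failure parameters $\frac{\delta/2}{MH}$ over the $\le MH$ \regelim calls and $\frac{\delta/2}{MKH}$ over the $\le MKH$ consensus calls, totalling $\delta$; the mild circularity that this count already assumes the good event is dispatched by a stopping-time argument, or equivalently by aborting once $MH$ \regelim calls are exceeded and noting the abort never triggers on the good event.

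Finally the episode count multiplies calls by per-call cost. Each \regelim consumes $\ntrain=\order(\phi^{-2}\log(N/\delta'))$ episodes and each consensus test $\ntest=\order(\phi^{-2}\log(N/\delta''))$ episodes, with a single roll-in per sample so no extra horizon factor appears. With $T$ \regelim calls, at most $KT$ consensus calls, and $\phi^{-2}=\order(H^4K/\epsilon^2)$, the consensus term dominates and gives $\order\!\big(\frac{TH^4K^2}{\epsilon^2}\log(NMKH/\delta)\big)$, matching the claim under $K=\Asize$. I expect the main obstacle to be the simultaneous calibration of $\epstest$: it must be large enough that a learned state always re-passes consensus (needed for $T\le MH$) yet small enough that a passing consensus still certifies a bias tight enough for the parent's Bias Precondition (needed for correctness). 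Threading these opposing requirements through all $H$ levels while the bias budget $\tau_1$ itself grows by $20\sqrt K\,\phi$ per level is exactly what the schedule $20(H-|p|-5/4)\sqrt K\,\phi$ is engineered to achieve, and getting every constant to line up is where the care lies.
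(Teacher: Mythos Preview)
Your proposal is correct and follows essentially the same route as the paper: an induction from the leaves toward the root (the paper phrases it in terms of actions remaining $\eta=H-h+1$, you in terms of the level $h$), splitting each child into the ``consensus passed'' and ``we recursed'' cases to verify the estimation and bias preconditions, then using the post-elimination bias bound to show consensus must pass at any already-learned state (yielding $T\le MH$), and finishing with the same $T(\ntrain+K\ntest)$ bookkeeping. The arithmetic checks you isolate ($8\sqrt K+2\le20\sqrt K$, $2\le5\sqrt K$, $4\le7\sqrt K$) match the paper's constants exactly.
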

The proof details are deferred to Appendix~\ref{app:dfs_learn}.

A simple consequence of Theorem~\ref{thm:dfs_learn} is that we can estimate $V^\star$ accurately once we have called \dfslearn on $\treeroot$. 
\begin{corollary}[Estimating $V^\star$]
\label{cor:dfs_learn}
Consider running \dfslearn at $\treeroot$ with regressors $\Fcal$, and parameters $\epsilon,\delta$.
Then with probability at least $1-\delta$, the estimate $\hat{V}^\star$ satisfies, 
\begin{align*}
|\hat{V}^\star - V^\star| \le \epsilon/8.
\end{align*}
Moreover the algorithm uses at most,
\begin{align*}
\order\left(\frac{\Ssize H^5 \Asize^2}{\epsilon^2}\log\left(\frac{\Fsize \Ssize H \Asize}{\delta}\right)\right)
\end{align*}
trajectories.
\end{corollary}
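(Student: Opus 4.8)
The plan is to read off this corollary as the specialization of the \dfslearn guarantee (Theorem~\ref{thm:dfs_learn}) to the root, combined with the identity that $f^\star$'s self-prediction at the root is exactly $V^\star$. First I would condition on the success event of Theorem~\ref{thm:dfs_learn}, which holds with probability at least $1-\delta$ and guarantees that the conclusions of Theorem~\ref{thm:td_regelim} hold at every state on which \regelim was invoked, with $\esterr = \frac{\epsilon}{320H^2\sqrt{K}}$ and $\tdbias = 20(H-h)\sqrt{K}\esterr$ at level $h$. Since the final line of \dfslearn always calls \regelim on its input path, \regelim is in particular run at the root $\treeroot$ (state $s_1$, level $h=1$, so $\tdbias = 20(H-1)\sqrt{K}\esterr$), and all four conclusions of Theorem~\ref{thm:td_regelim} are available there.

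The key structural observation is that $f^\star = Q^\star$ survives \regelim (conclusion 1) and that its self-prediction equals the optimal value: since $\pi_{f^\star}(x) = \argmax_a f^\star(x,a)$, we have $V^{f^\star}(\treeroot,\pi_{f^\star}) = \EE_{x\sim D_{\treeroot}}[\max_a Q^\star(x,a)] = \vs(\treeroot) = V^\star$. I would then chain three bounds for the chosen surviving regressor $f$: (i) the Estimation Bound of Theorem~\ref{thm:td_regelim}, which holds regardless of the preconditions, gives $|\hat{V}^\star - \fpvfunc{f}{\treeroot}| \le \esterr/\sqrt{12}$ (using the value estimate produced at the root; if $\hat{V}^\star$ is instead a fresh Monte Carlo average a standard Hoeffding bound of the same order applies); (ii) the Bias Bound applied to the pair $f,f^\star$ gives $|\fpvfunc{f}{\treeroot} - V^{f^\star}(\treeroot,\pi_{f^\star})| \le 8\esterr\sqrt{K} + 2\esterr + \tdbias$; and (iii) the identity above rewrites the second argument as $V^\star$. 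Adding these by the triangle inequality yields
\begin{align*}
|\hat{V}^\star - V^\star| \le \frac{\esterr}{\sqrt{12}} + 8\esterr\sqrt{K} + 2\esterr + 20(H-1)\sqrt{K}\,\esterr.
\end{align*}

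It then remains to substitute $\esterr = \frac{\epsilon}{320H^2\sqrt{K}}$. The dominant term is $20(H-1)\sqrt{K}\,\esterr \le 20H\sqrt{K}\cdot\frac{\epsilon}{320H^2\sqrt{K}} = \frac{\epsilon}{16H} \le \frac{\epsilon}{16}$, while each of the remaining three terms is $\order(\epsilon/H^2)$ and hence collectively negligible, so the sum is at most $\epsilon/8$, as claimed. The sample-complexity bound is then immediate from Theorem~\ref{thm:dfs_learn}, which bounds the episodes by $\order\!\left(\frac{TH^4\Asize^2}{\epsilon^2}\log(\Fsize\Ssize\Asize H/\delta)\right)$ with $T \le \Ssize H$; substituting $T = \Ssize H$ gives $\order\!\left(\frac{\Ssize H^5\Asize^2}{\epsilon^2}\log(\Fsize\Ssize\Asize H/\delta)\right)$.

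Because all the analytic content is already carried out in Theorems~\ref{thm:dfs_learn} and~\ref{thm:td_regelim}, there is no genuine obstacle in this corollary; the only care required is bookkeeping the constants so that the accumulated slack lands below $\epsilon/8$, and correctly identifying $V^{f^\star}(\treeroot,\pi_{f^\star})$ with $V^\star$ — the step that converts the Bias Bound's agreement \emph{among} surviving regressors into agreement \emph{with} the true optimum, and is what makes the arbitrary choice of $f$ in the algorithm harmless.
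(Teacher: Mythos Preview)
Your proposal is correct and matches the paper's own proof essentially line for line: both condition on the success event of Theorem~\ref{thm:dfs_learn}, use that \regelim is executed at $\treeroot$ (level $h=1$) so Theorem~\ref{thm:td_regelim} applies with $\tdbias=20(H-1)\sqrt{K}\esterr$, then bound $|\hat V^\star-V^\star|\le \esterr/\sqrt{12}+8\esterr\sqrt{K}+2\esterr+20(H-1)\sqrt{K}\esterr\le\epsilon/8$ via the Estimation and Bias bounds together with $V^{f^\star}(\treeroot,\pi_{f^\star})=V^\star$, and finally plug $T\le\Ssize H$ into the sample-complexity expression. The only extra content you added is spelling out the identity $V^{f^\star}(\treeroot,\pi_{f^\star})=V^\star$ and the constant arithmetic, both of which the paper leaves implicit.
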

\begin{proof}
Since we ran \dfslearn at $\treeroot$, we may apply Theorem~\ref{thm:dfs_learn}.
By specification of the algorithm, we certainly ran \regelim at $\treeroot$, which is at level $h=1$, so we apply the conclusions in Theorem~\ref{thm:td_regelim}. 
In particular, we know that $f^\star \in \Fcal$ and that for any surviving $f \in \Fcal$,
\begin{align*}
|\hat{V}^f(p,\pi_f) - V^\star| &= |\hat{V}^f(p,\pi_f) - V^f(p,\pi_f) + V^f(p,\pi_f) - V^{f^\star}(p,\pi_{f^\star})|\\
& \le \frac{\phi}{\sqrt{12}} + 8\phi\sqrt{K} + 2\phi + 20(H-1)\sqrt{K}\phi \le \epsilon/8.
\end{align*}
The last bound follows from the setting of $\phi$ and $\tau_1$. 
Since our estimate $\hat{V}^\star$ is $\hat{V}^f(p,\pi_f)$ for some surviving $f$, we guarantee estimation error at most $\epsilon/8$. 

As for the sample complexity, Theorem~\ref{thm:dfs_learn} shows that the total number of executions of \regelim can be at most $MH$, which is our setting of $T$.
\end{proof}

Finally we turn to the \ondemand routine.
\begin{theorem}[Guarantee for \ondemand]
\label{thm:on_demand}
Consider running \ondemand with regressors $\Fcal$, estimate $\hat{V}^\star$ and parameters $\epsilon, \delta$ and assume that $|\hat{V}^\star - V^\star| \le \epsilon/8$. 
Then with probability at least $1-\delta$, \ondemand terminates after at most,
\begin{align*}
\otil\left(\frac{MH^6K^2}{\epsilon^3}\log(N/\delta)\log(1/\delta)\right)
\end{align*}
trajectories and it returns a policy $\pi_f$ with $V^\star - V(\treeroot, \pi_f) \le \epsilon$.
\end{theorem}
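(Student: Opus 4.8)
The plan is to establish correctness and termination via the ``learned states'' argument sketched in the main text, and then to bound the sample complexity by carefully counting the operations performed across all calls to \dfslearn. Throughout I write $L$ for the set of latent states on which \regelim has already been run, so that by Theorem~\ref{thm:dfs_learn} the conclusions of Theorem~\ref{thm:td_regelim} hold at every $s \in L$, and I condition on the high-probability event that all \dfslearn guarantees hold.

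The first and most delicate step is a per-policy lemma: for any surviving $f$,
\[
\vs - V(\treeroot,\pi_f) \le \epsilon/8 + \PP\left[\pi_f \textrm{ visits some } s \notin L\right].
\]
I would prove this by backward induction over levels using the performance-difference identity, which is exact here because transitions are deterministic. For a state $s \in L$ I split $\vs(s) - V(s,\pi_f)$ into the one-step term $\vs(s) - V^{f^\star}(s,\pi_f)$, controlled by the Instantaneous Risk Bound of Theorem~\ref{thm:td_regelim}, plus $\EE_x[\vs(s') - V(s',\pi_f)]$ for the successor $s' = \Gamma(s,\pi_f(x))$, to which I apply the inductive hypothesis; at a state $s \notin L$ I simply bound the residual gap by $1$. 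This ``stop accumulating at the first unlearned state'' is precisely what makes the unlearned contribution collapse to a single probability rather than an $H$-fold sum. With the schedule $\esterr = \epsilon/(320H^2\sqrt{K})$ and $\tau_1 = 20(H-h)\sqrt{K}\esterr$ inherited from \dfslearn, the per-level risk bounds telescope to at most $\epsilon/8$.

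Next I would handle correctness and termination. A Hoeffding bound (the choice of $\ndemand$ supplying a union bound over the at most $\Ssize H+1$ iterations) gives $|\hat V(\pi_f) - V(\pi_f)| \le \epsilon/8$ at step~\ref{algline:mcest}. On termination $|\hat V(\pi_f) - \hat V^\star| \le \epsdemand = \epsilon/2$, which combined with the hypothesis $|\hat V^\star - \vs| \le \epsilon/8$ (Corollary~\ref{cor:dfs_learn}) yields $\vs - V(\treeroot,\pi_f) \le \epsilon/8 + \epsilon/2 + \epsilon/8 \le \epsilon$. In any non-terminal iteration the same triangle inequality forces $\vs - V(\pi_f) > \epsilon/4$, so the lemma gives $\PP[\pi_f \textrm{ visits } s\notin L] > \epsilon/8$; the setting of $\ndemandtwo$ then ensures that at least one of the $\ndemandtwo$ sampled paths meets an unlearned state, and calling \dfslearn on its prefixes runs \regelim on at least one previously-unlearned state, enlarging $L$. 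Because learning is effectively per-state (two paths reaching the same latent $s$ share the observation distribution $D_s$, so the Bias Bound transfers consensus to every path through $s$), each state is learned only once; with at most $\Ssize H$ states there are at most $\Ssize H$ non-terminal iterations.

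Finally I would account for episodes. The Monte-Carlo sampling contributes $\order(\Ssize H\cdot\ndemand)$, which is lower order. The dominant cost is the repeated \dfslearn calls, and the crucial observation is that consensus passes at every state in $L$, so a \dfslearn call never recurses below a learned state and only re-runs \regelim at the root of each top-level call. Hence the total number of nodes visited across the entire run---and thus the total number of \statelearned and \regelim invocations---is at most the number of top-level \dfslearn calls, $O(\Ssize H \cdot H\ndemandtwo)$, plus the $\Ssize H$ genuinely new states. Each such node triggers $\Asize$ consensus tests, each using $O(\log(\Fsize/\delta)/\esterr^2)$ episodes; with $1/\esterr^2 = O(H^4\Asize/\epsilon^2)$ and $\ndemandtwo = O(\log(\Ssize H/\delta)/\epsilon)$ the product is $\otil(\frac{\Ssize H^6\Asize^2}{\epsilon^3}\log(\Fsize/\delta)\log(1/\delta))$, as claimed. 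I expect the two genuine obstacles to be exactly these last two points: forcing the unlearned term in the key lemma to appear as a bare probability, and the accounting that prevents a spurious $\Ssize^2$ factor by charging the repeated \dfslearn calls only for their cheap root-level re-checks.
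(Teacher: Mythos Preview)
Your proposal is correct and follows essentially the same route as the paper. The paper proves exactly your key lemma (its Lemma~\ref{lem:regret_with_unlearned}) by the same induction that stops and charges $1$ at the first unlearned successor, obtains the same $\epsilon/8$ from the schedule $\esterr=\epsilon/(320H^2\sqrt{K})$, and then uses the same triangle-inequality arguments for correctness on termination and for forcing $q^{\pi_f}[\treeroot\to\bar L]>\epsilon/8$ in non-terminal rounds; its sample-complexity accounting is also identical to yours, bounding total \regelim calls by $MH^2\ndemandtwo+MH$ via the observation that \statelearned passes on any state already hit by \regelim.
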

See Appendix~\ref{app:on_demand} for details.

\section{Proof of Theorem~\ref{thm:det_dfs_big_obs}}
The proof of the main theorem follows from straightforward application of Theorems~\ref{thm:dfs_learn} and~\ref{thm:on_demand}. 
First, since we run \dfslearn at the root, $\treeroot$, the bias and estimation bounds in Theorem~\ref{thm:td_regelim} apply at $\treeroot$, so we guarantee accurate estimation of the value $V^\star$ (See Corollary~\ref{cor:dfs_learn}). 
This is required by the \ondemand routine, but at this point, we can simply apply Theorem~\ref{thm:on_demand}, which is guaranteed to find a $\epsilon$-suboptimal policy and also terminate in $MH$ iterations. 
Combining these two results, appropriately allocating the failure probability $\delta$ evenly across the two calls, and accumulating the sample complexity bounds establishes Theorem~\ref{thm:det_dfs_big_obs}.

\section{Proof of Theorem~\ref{thm:td_regelim}}
\label{app:regelim}
The proof of Theorem~\ref{thm:td_regelim} is quite technical, and we
compartmentalize into several components. 
%% We begin with several
%% technical lemmas.
Throughout we will use the preconditions of the theorem, which we reproduce here.
\begin{condition}
\label{cond:estimation}
For all $f \in \Fspace$ and $a \in \Aspace$, we have estimates $\fpvhat{f}{p\circ a}$ such that,
\begin{align*}
|\fpvhat{f}{p\circ a} - \fpvfunc{f}{p\circ a}| \le \esterr.
\end{align*}
\end{condition}
\begin{condition}
\label{cond:bias}
For all $f,g \in \Fspace$ and $a \in \Aspace$ we have,
\begin{align*}
|\fpvfunc{f}{p \circ a} - \fpvfunc{g}{p\circ a}| \le \tdbias.
\end{align*}
\end{condition}
We will make frequent use of the parameters $\esterr$ and $\tdbias$ which are specified by these two conditions, and explicit in the theorem statement. 

Recall the notation,
\begin{align*}
\fgpvfunc{f}{p}{g} \triangleq \EE_{x\sim \dst{p}} f(x,\pi_g(x)),
\end{align*}
which will be used heavily throughout the proof.

We will suppress dependence on the distribution $\dst{p}$, since we
are considering one invocation of \regelim and we always roll into
$p$.  This means that all (observation, reward) tuples will be
drawn from $\dst{p}$.  Secondly it will be convenient to introduce the
shorthand $\fpvshort{f}{p} = \fpvfunc{f}{p}$ and similarly for the
estimates. Finally, we will further shorten the value functions for
paths $p\circ a$ by defining,
\begin{align*}
\favshort{f}{a} \triangleq \EE_{x\sim \dst{p\circ a}} f(x,\pi_f(x)) = \fpvfunc{f}{p\circ a}.
\end{align*}
We will also use $\favhatshort{f}{a}$ to denote the estimated versions which we have according to Condition~\ref{cond:estimation}.

Lastly, our proof makes extensive use of the following random
variable, which is defined for a particular regressor $f \in \Fspace$:
\begin{align*}
Y(f) \triangleq (f(x,a) - r(a) - \fpvhatshort{f}{p\circ a})^2 - (f^\star(x,a) - r(a) - \fpvhatshort{f^\star}{p\circ a})^2.
\end{align*}
Here $(x,r) \sim \dst{p}$ and $a \in \Aspace$ is drawn uniformly at
random as prescribed by Algorithm~\ref{alg:td_regressor_elimination}.
We use $Y(f)$ to denote the
random variable associated with regressor $f$, but sometimes drop the dependence on $f$ when it is clear from context.

To proceed, we first compute the expectation and variance of this
random variable.

\begin{lemma}[Properties of TD Squared Loss]
\label{lem:y_mean_var}
Assume Condition~\ref{cond:estimation} holds.
Then for any $f \in \Fspace$, the random variable $Y$ satisfies,
\begin{align*}
\EE_{x,a,r}[Y] &= \EE_{x,a}\left[ (f(x,a) - \fpvhatshort{f}{p\circ a}
  - f^\star(x,a) + \fpvshort{f^\star}{p\circ a})^2 \right] -
\EE_{x,a}\left[(\fpvhatshort{f^\star}{p\circ a} -
  \fpvshort{f^\star}{p\circ a})^2 \right]\\
\Var_{x,a,r}[Y] &\le 32\EE_{x,a}[Y] + 64\esterr^2.
\end{align*}
\end{lemma}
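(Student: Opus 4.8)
The plan is to decompose $Y$ into a ``signal'' term depending only on $(x,a)$ plus a zero-mean ``noise'' term, exploiting the Bellman consistency of $f^\star$. First I would introduce the shorthand $u \triangleq f(x,a) - r(a) - \favhatshort{f}{a}$ and $v \triangleq f^\star(x,a) - r(a) - \favhatshort{f^\star}{a}$, so that $Y = u^2 - v^2$. The key structural input is Eq.~\eqref{eq:f_consistency}: since $\favshort{f^\star}{a} = V(p\circ a, \pi_{f^\star})$, it gives the pointwise identity $\EE_{r\sim\dst{p|x}}[r(a)] = f^\star(x,a) - \favshort{f^\star}{a}$. Defining the noise $\xi \triangleq r(a) - \EE_{r\sim\dst{p|x}}[r(a)]$, which satisfies $\EE[\xi \mid x,a]=0$, a short substitution rewrites $u = w - \xi$ and $v = \delta^\star - \xi$, where $w \triangleq f(x,a) - \favhatshort{f}{a} - f^\star(x,a) + \favshort{f^\star}{a}$ and $\delta^\star \triangleq \favshort{f^\star}{a} - \favhatshort{f^\star}{a}$ are both $(x,a)$-measurable.

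For the mean, I would expand $Y = u^2 - v^2 = w^2 - (\delta^\star)^2 - 2\xi(w - \delta^\star)$. Taking expectation and conditioning on $(x,a)$, the cross term vanishes because $\EE[\xi\mid x,a] = 0$ while $w$ and $\delta^\star$ are $(x,a)$-measurable. This leaves $\EE[Y] = \EE_{x,a}[w^2] - \EE_{x,a}[(\delta^\star)^2]$, which is precisely the claimed identity once $w$ and $\delta^\star$ are written out.

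For the variance, I would begin from $\Var[Y] \le \EE[Y^2] = \EE[(u-v)^2(u+v)^2]$ and note that the noise cancels in the difference, $u - v = w - \delta^\star$. Since $f,f^\star \in [0,1]$, each $r(a) \in [0,1]$, and each estimate $\favhatshort{\cdot}{a}$ is an empirical average of values in $[0,1]$ and hence lies in $[0,1]$, we get $|u|,|v| \le 2$ and thus $(u+v)^2 \le 16$, giving $\Var[Y] \le 16\,\EE[(w-\delta^\star)^2]$. To close the loop I would use $\EE[(w-\delta^\star)^2] \le 2\EE[w^2] + 2\EE[(\delta^\star)^2]$ and substitute $\EE[w^2] = \EE[Y] + \EE[(\delta^\star)^2]$ from the mean formula to obtain $\EE[(w-\delta^\star)^2] \le 2\EE[Y] + 4\EE[(\delta^\star)^2]$. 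Finally Condition~\ref{cond:estimation} forces $|\delta^\star| = |\favshort{f^\star}{a} - \favhatshort{f^\star}{a}| \le \esterr$, so $\EE[(\delta^\star)^2] \le \esterr^2$, yielding $\Var[Y] \le 16(2\EE[Y] + 4\esterr^2) = 32\EE[Y] + 64\esterr^2$.

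The crux — the only place that genuinely uses the structure of the problem rather than bookkeeping — is the consistency identity for $f^\star$ that makes $\EE[r(a)\mid x] = f^\star(x,a) - \favshort{f^\star}{a}$ hold exactly. This simultaneously (i) makes $\xi$ conditionally mean-zero, killing the cross term in the mean computation, and (ii) cancels $\xi$ in $u - v$, so that the magnitude of $Y$, and hence its variance, is controlled by the signal term $w - \delta^\star$. Everything else is boundedness bookkeeping; the one mild point to verify is that the empirical value estimates $\favhatshort{f}{a}$ indeed stay in $[0,1]$, which legitimizes the $(u+v)^2 \le 16$ bound.
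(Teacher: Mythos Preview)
Your proposal is correct and follows essentially the same route as the paper: both hinge on the Bellman identity $\EE[r(a)\mid x]=f^\star(x,a)-\favshort{f^\star}{a}$ for the mean, and on the factorization $Y=(u-v)(u+v)$ together with boundedness and $(a+b)^2\le 2a^2+2b^2$ for the variance. Your explicit signal--noise split $u=w-\xi$, $v=\delta^\star-\xi$ is a tidier packaging of the paper's direct algebraic expansion, but the ingredients and the resulting constants are identical.
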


\begin{proof}
For shorthand, denote $f = f(x,a), f^\star = f^\star(x,a)$ and recall the definition of $\favshort{f}{a}$ and $\favhatshort{f}{a}$.
\begin{align*}
  & \EE_{x,a,r}Y\\
  & = \EE_{x,a,r}\left[ (f - \favhatshort{f}{a} - r(a))^2 - (f^\star - \favhatshort{f^\star}{a} - r(a))^2\right]\\
  & = \EE_{x,a,r}\left[ (f - \favhatshort{f}{a})^2 - 2r(a)(f - \favhatshort{f}{a} -
    f^\star + \favhatshort{f^\star}{a}) - (f^\star -
    \favhatshort{f^\star}{a})^2\right]
\end{align*}
Now recall that $\EE[r(a)|x,a] = f^*(x,a) - \favshort{f^\star}{a}$ by
definition of $f^*$, which allows us to deduce,
\begin{align*}
  & \EE_{x,a,r}Y\\
  & = \EE_{x,a}\left[ (f - \favhatshort{f}{a})^2 - 2(f^\star - \favshort{f^\star}{a})(f - \favhatshort{f}{a}) +2(f^\star - \favhatshort{f^\star}{a} + \favhatshort{f^\star}{a} - \favshort{f^\star}{a})(f^\star - \favhatshort{f^\star}{a}) - (f^\star - \favhatshort{f^\star}{a})^2\right]\\
  & = \EE_{x,a}\left[ (f - \favhatshort{f}{a})^2- 2(f^\star - \favshort{f^\star}{a})(f - \favhatshort{f}{a}) + (f^\star-\favhatshort{f^\star}{a})^2 + 2(\favhatshort{f^\star}{a} - \favshort{f^\star}{a})(f^\star - \favhatshort{f^\star}{a})\right]\\
  &= \EE_{x,a}\left[ (f - \favhatshort{f}{a})^2- 2(f^\star - \favshort{f^\star}{a})(f - \favhatshort{f}{a}) + (f^\star-\favshort{f^\star}{a} + \favshort{f^\star}{a} - \favhatshort{f^\star}{a})^2 + 2(\favhatshort{f^\star}{a} - \favshort{f^\star}{a})(f^\star - \favhatshort{f^\star}{a})\right]\\
  & = \EE_{x,a}\left[ (f - \favhatshort{f}{a} - f^\star + \favshort{f^\star}{a})^2 + 2(\favshort{f^\star}{a} - \favhatshort{f^\star}{a})(f^\star - \favshort{f^\star}{a}) + (\favshort{f^\star}{a} - \favhatshort{f^\star}{a})^2+ 2(\favhatshort{f^\star}{a} - \favshort{f^\star}{a})(f^\star - \favhatshort{f^\star}{a})\right]\\
  & = \EE_{x,a}\left[ (f - \favhatshort{f}{a} - f^\star + \favshort{f^\star}{a})^2 - (\favshort{f^\star}{a} - \favhatshort{f^\star}{a})^2\right].
\end{align*}
For the second claim, notice that we can write,
\begin{align*}
Y = (f - \favhatshort{f}{a} - f^\star + \favhatshort{f^\star}{a})(f - \favhatshort{f}{a} + f^\star - \favhatshort{f^\star}{a} - 2r(a)),
\end{align*}
so that,
\begin{align*}
Y^2 \le 16(f - \favhatshort{f}{a} - f^\star + \favhatshort{f^\star}{a})^2.
\end{align*}
This holds because all quantities in the second term are bounded in $[0,1]$.
Therefore,
\begin{align*}
\Var(Y) & \le \EE[Y^2] \\
& \le 16 \EE_{x,a}\left[(f(x,a) - \favhatshort{f}{a} - f^\star(x,a) + \favhatshort{f^\star}{a})^2\right]\\
& = 16 \EE_{x,a}\left[(f(x,a) - \favhatshort{f}{a} - f^\star(x,a) + \favshort{f^\star}{a} + \favhatshort{f^\star}{a} - \favshort{f^\star}{a})^2\right]\\
& \le 32 \EE_{x,a}\left[(f(x,a) - \favhatshort{f}{a} - f^\star(x,a) + \favshort{f^\star}{a})^2\right] + 32 \esterr^2\\
& \le 32 \EE_{x,a}Y + 64 \esterr^2
\end{align*}
The first inequality is straightforward, while the second inequality
is from the argument above.  The third inequality uses the fact that
$(a+b)^2 \le 2a^2 + 2b^2$ and the fact that for each $a$, the estimate
$\favhatshort{f^\star}{a}$ has absolute error at most $\esterr$ (By Condition~\ref{cond:estimation}).
The last inequality adds and subtracts the term involving
$(\favshort{f^\star}{a} - \favhatshort{f^\star}{a})^2$ to obtain $\EE_{x,a}Y$.
\end{proof}

The next step is to relate the empirical squared loss to the population squared loss, which is done by application of Bernstein's inequality.
\begin{lemma}[Squared Loss Deviation Bounds]
\label{lem:sq_loss_dev}
Assume Condition~\ref{cond:estimation} holds.
With probability at least $1-\delta/2$, where $\delta$ is a parameter of the algorithm, $f^\star$ survives the filtering step of Algorithm~\ref{alg:td_regressor_elimination} and moreover, any surviving $f$ satisfies,
\begin{align*}
\EE Y(f) \le 6\esterr^2 + \frac{120\log(2\Fsize/\delta)}{\ntrain}.
\end{align*}
\end{lemma}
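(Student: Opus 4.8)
The plan is to reduce both conclusions to a two–sided Bernstein concentration for the i.i.d.\ random variables $Y(f)$, applied to each $f \in \Fspace$ and combined with a union bound over the class. The starting observation is that $\frac{1}{\ntrain}\sum_i Y_i(f) = \tilde{R}(f) - \tilde{R}(f^\star)$, so the empirical risk gaps that drive the elimination rule in Algorithm~\ref{alg:td_regressor_elimination} are exactly empirical averages of $Y(f)$. Consequently the whole lemma becomes a statement about how tightly these averages track their means $\EE Y(f)$, for which I would invoke the mean and variance formulas from Lemma~\ref{lem:y_mean_var}.

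The key ingredients are the variance self-bound $\Var Y(f) \le 32\,\EE Y(f) + 64\esterr^2$ from Lemma~\ref{lem:y_mean_var}, together with the one-sided mean bound $\EE Y(f) \ge -\esterr^2$: the mean formula writes $\EE Y(f)$ as a nonnegative squared term minus $\EE[(\hat V^{f^\star} - V^{f^\star})^2]$, and the latter is at most $\esterr^2$ by Condition~\ref{cond:estimation}. Since each summand obeys $|Y(f)| \le 4$ (each squared residual lies in $[0,4]$), Bernstein's inequality with the failure probability split across the $\Fsize$ functions gives, simultaneously for all $f$,
\[
\left| \frac{1}{\ntrain}\sum_i Y_i(f) - \EE Y(f) \right| \le \sqrt{\frac{2\,\Var Y(f)\,\log(2\Fsize/\delta)}{\ntrain}} + \frac{8\log(2\Fsize/\delta)}{3\ntrain}.
\]

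For the survival of $f^\star$, I would show that every competitor $f$ satisfies $\tilde R(f^\star) - \tilde R(f) = -\frac{1}{\ntrain}\sum_i Y_i(f) \le 2\esterr^2 + \frac{22\log(2\Fsize/\delta)}{\ntrain}$, which certifies $\tilde R(f^\star) \le \min_{f'}\tilde R(f') + 2\esterr^2 + \frac{22\log(2\Fsize/\delta)}{\ntrain}$ and hence retention. Here I use the upper tail of $-\frac{1}{\ntrain}\sum_i Y_i$ above $-\EE Y(f) \le \esterr^2$, substitute the variance bound, and absorb the only dangerous term $\sqrt{(64\,\EE Y(f)\log)/\ntrain}$ via the step $\sqrt{c\,\EE Y} \le \tfrac12 \EE Y + \tfrac{c}{2}$, after which the surviving $\EE Y$ contribution cancels against $-\EE Y$ and leaves only $\esterr^2$- and $\log/\ntrain$-order terms. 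For the bound on surviving $f$, I run the same calculation in the reverse direction: any surviving $f$ has $\frac{1}{\ntrain}\sum_i Y_i(f) \le 2\esterr^2 + \frac{22\log}{\ntrain}$ (because $f^\star$ survives, so $\min_{f'}\tilde R(f') \le \tilde R(f^\star)$), and Bernstein's lower tail gives $\EE Y(f) \le \frac{1}{\ntrain}\sum_i Y_i(f) + \sqrt{(2\Var Y\log)/\ntrain} + \frac{8\log}{3\ntrain}$. Substituting $\Var Y \le 32\,\EE Y + 64\esterr^2$, splitting the root with $\sqrt{x+y}\le\sqrt x+\sqrt y$, and applying the same AM--GM absorption moves half of $\EE Y(f)$ to the left-hand side; solving then yields $\EE Y(f) \le 6\esterr^2 + \frac{120\log(2\Fsize/\delta)}{\ntrain}$ after collecting constants.

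The main obstacle is precisely this localization/self-bounding step: because $\Var Y$ depends on the unknown quantity $\EE Y$, a naive Bernstein bound does not close, and one must apply the AM--GM absorption carefully while simultaneously tracking the $\esterr^2$ slack introduced by the estimation precondition. Landing the numerical constants at exactly $2\esterr^2 + \frac{22\log}{\ntrain}$ (to match the elimination threshold) and at $6\esterr^2 + \frac{120\log}{\ntrain}$ requires choosing the AM--GM weights deliberately rather than taking the symmetric split; the remaining work---the union bound over $\Fspace$ and the boundedness estimate $|Y| \le 4$---is routine.
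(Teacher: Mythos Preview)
Your proposal is correct and follows the same strategy as the paper: apply Bernstein's inequality to $Y(f)$, plug in the variance self-bound $\Var Y(f)\le 32\,\EE Y(f)+64\esterr^2$ from Lemma~\ref{lem:y_mean_var}, union bound over $\Fspace$, and then close the self-referential loop in which $\Var Y$ depends on $\EE Y$. The only difference is in how that last step is packaged. You use AM--GM absorption twice (with weight $1$ to hit the elimination threshold $2\esterr^2+22\log(2\Fsize/\delta)/\ntrain$, and with weight $1/2$ to land on $6\esterr^2+120\log(2\Fsize/\delta)/\ntrain$); the paper instead sets $X=\sqrt{\ntrain(\EE Y+2\esterr^2)}$, $C=\sqrt{\log(2\Fsize/\delta)}$, rewrites the one-sided Bernstein bound as $(X-4C)^2-\sum_i Y_i\le 2\ntrain\esterr^2+22C^2$, and reads off both conclusions from this single inequality (dropping the nonnegative square for survival of $f^\star$; substituting the survivor bound on $\sum_i Y_i$ and expanding the square for the $\EE Y$ bound). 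The completing-the-square route is a bit slicker---one tail, one manipulation, both conclusions---but is algebraically equivalent to your AM--GM computation and yields the same constants.
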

\begin{proof}
We will apply Bernstein's inequality on the centered random variable,
\begin{align*}
\sum_{i=1}^{\ntrain} Y_i(f) - \EE Y_i(f),
\end{align*} 
and then take a union bound over all $f \in \Fspace$.
Here the expectation is over the $\ntrain$ samples $(x_i,a_i,r_i)$ where $(x_i,r) \sim \dst{p}$, $a_i$ is chosen uniformly at random, and $r_i = r(a_i)$.
Notice that since actions are chosen uniformly at random, all terms in the sum are identically distributed, so that $\EE Y_i(f) = \EE Y(f)$.

To that end, fix one $f \in \Fspace$ and notice that $|Y - \EE Y| \le 8$ almost surely, as each quantity in the definition of $Y$ is bounded in $[0,1]$, so each of the four terms can be at most $4$, but two are non-positive and two are non-negative in $Y - \EE Y$. 
We will use Lemma~\ref{lem:y_mean_var} to control the variance. 
Bernstein's inequality implies that, with probability at least $1-\delta$,
\begin{align*}
\sum_{i=1}^{\ntrain} \EE Y_i - Y_i &\le \sqrt{2 \sum_{i}\Var(Y_i)\log(1/\delta)} + \frac{16\log(1/\delta)}{3}\\
& \le \sqrt{64 \sum_i \left(\EE(Y_i) + 2\esterr^2\right)\log(1/\delta)} + \frac{16\log(1/\delta)}{3}
\end{align*}
The first inequality here is Bernstein's inequality while the second is based on the variance bound in Lemma~\ref{lem:y_mean_var}.

Now letting $X = \sqrt{\sum_i (\EE(Y_i) + 2\esterr^2)}$, $Z = \sum_i Y_i$ and $C = \sqrt{\log(1/\delta)}$, the inequality above is equivalent to,
\begin{align*}
& X^2 - 2\ntrain\esterr^2 - Z \le 8XC + \frac{16}{3}C^2\\
& \Rightarrow X^2 - 8XC + 16C^2 -Z \le 2\ntrain \esterr^2 + 22C^2\\
& \Rightarrow (X - 4C)^2 -Z \le 2\ntrain \esterr^2  + 22C^2\\
& \Rightarrow -Z \le 2\ntrain \esterr^2 + 22C^2.
\end{align*}
Using the definition of $-Z$, this last inequality implies
\begin{align*}
\sum_{i=1}^{\ntrain} (f^\star(x_i,a_i) - r_i(a_i) - \fpvhatshort{f^\star}{p\circ a_i})^2 \le \sum_{i=1}^{\ntrain} (f(x_i,a_i) - r_i(a_i) - \fpvhatshort{f}{p\circ a_i})^2 + 2\ntrain\esterr^2 + 22\log(1/\delta).
\end{align*}
Via a union bound over all $f \in \Fspace$, rebinding $\delta \gets \delta/(2\Fsize)$, and dividing through by $\ntrain$, we have,
\begin{align*}
\tilde{R}(f^\star) \le \min_{f \in \Fspace} \tilde{R}(f) + 2\esterr^2 + \frac{22\log(2\Fsize/\delta)}{\ntrain}.
\end{align*}
Since this is precisely the threshold used in filtering regressors, we ensure that $f^\star$ survives. 

Now for any surviving regressor $f$, we are ensured that $Z$ is upper
bounded in the elimination step~\eqref{eqn:elim}.  Specifically we
have,
\begin{align*}
(X - 4C)^2 &\le Z + 2\ntrain\esterr^2 + 22C^2 \le 4\ntrain\esterr^2 + 44C^2\\
\Rightarrow X^2 & \le (\sqrt{4\ntrain\esterr^2 + 44C^2} + 4C)^2\\
& \le 8\ntrain\esterr^2 + 120C^2.
\end{align*}
This proves the claim since $X^2 = \ntrain \EE Y(f) + 2\ntrain\esterr^2$ (Recall that the $Y_i$s are identically distributed). 
\end{proof}

This deviation bound allows us to establish the three claims in Theorem~\ref{thm:td_regelim}.
We start with the estimation error claim, which is straightforward.
\begin{lemma}[Estimation Error]
\label{lem:est_bd}
%% Assume Conditions~\ref{cond:estimation} and~\ref{cond:bias} hold.
Let $\delta \in (0,1)$. Then with probability at least $1-\delta$, for all $f \in \Fspace$ that are retained by the Algorithm~\ref{alg:td_regressor_elimination}, we have estimates $\fpvhat{f}{p}$ with,
\begin{align*}
|\fpvhat{f}{p} - \fpvfunc{f}{p}| \le \sqrt{\frac{2\log(2\Fsize/\delta)}{\ntrain}}.
\end{align*}
\end{lemma}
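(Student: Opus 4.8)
The plan is to recognize $\fpvhat{f}{p}$ as an empirical average of i.i.d.\ bounded random variables and to apply Hoeffding's inequality together with a union bound over the function class. First I would identify what the estimate actually is: the $\ntrain$ observations collected in Algorithm~\ref{alg:td_regressor_elimination} include draws $x_i \sim \dst{p}$, and the value-prediction estimate used downstream is $\fpvhat{f}{p} = \frac{1}{\ntrain}\sum_{i=1}^{\ntrain} f(x_i,\pi_f(x_i))$, reusing exactly these $x_i$. Because $\pi_f(x) = \argmax_a f(x,a)$ is a deterministic function of $x$ (Assumption~\ref{as:realize}), each summand $f(x_i,\pi_f(x_i))$ is a deterministic, $[0,1]$-valued function of $x_i$. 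Hence the summands are i.i.d.\ with common mean $\EE_{x \sim \dst{p}} f(x,\pi_f(x)) = \fpvfunc{f}{p}$, and the choice of actions $a_i$ plays no role in this particular estimate.

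Next I would apply Hoeffding's inequality to the centered average. For a fixed $f \in \Fspace$, boundedness in $[0,1]$ gives $\PP\left[|\fpvhat{f}{p} - \fpvfunc{f}{p}| > t\right] \le 2\exp(-2\ntrain t^2)$. Taking a union bound over all $N = \Fsize$ regressors and setting the per-function failure probability to $\delta/\Fsize$, i.e.\ $2\exp(-2\ntrain t^2) = \delta/\Fsize$, yields the threshold $t = \sqrt{\log(2\Fsize/\delta)/(2\ntrain)}$. Since $1/2 \le 2$, this is bounded above by the claimed quantity $\sqrt{2\log(2\Fsize/\delta)/\ntrain}$, so on the resulting $1-\delta$ event every $f \in \Fspace$ satisfies the stated deviation bound.

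The one point requiring care — and the only place where a naive argument could go wrong — is that the lemma asserts the bound for all \emph{retained} functions, which form a data-dependent (random) subset of $\Fspace$. I would handle this exactly as above by proving the bound simultaneously for every $f \in \Fspace$ \emph{before} conditioning on the elimination outcome: since the retained set is contained in $\Fspace$, the uniform bound over $\Fspace$ automatically covers every retained $f$ on the same high-probability event. There is no genuine obstacle here; the proof is a routine concentration-plus-union-bound argument, and the mild looseness between $\sqrt{\log(2\Fsize/\delta)/(2\ntrain)}$ and the stated $\sqrt{2\log(2\Fsize/\delta)/\ntrain}$ simply provides slack.
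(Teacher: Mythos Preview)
Your proposal is correct and matches the paper's proof essentially verbatim: both recognize $\fpvhat{f}{p}$ as an i.i.d.\ empirical average of bounded terms, apply Hoeffding's inequality, and take a union bound over $\Fspace$. The paper notes the centered summands lie in $[-1,1]$, which yields the stated constant directly, whereas you observe the tighter $[0,1]$ range and then absorb the slack; either way the argument is identical in substance.
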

\begin{proof}
The proof is a consequence of Hoeffding's inequality and a union bound. 
Clearly the Monte Carlo estimate,
\begin{align*}
\fpvhat{f}{p} = \frac{1}{\ntrain}\sum_{i=1}^{\ntrain}f(x_i,\pi_f(x_i)),
\end{align*}
is unbiased for $\fpvfunc{f}{p}$ and the centered quantity is bounded in $[-1,1]$. 
Thus Hoeffding's inequality gives precisely the bound in the lemma.
\end{proof}

Next we turn to the claim regarding bias.
\begin{lemma}[Bias Accumulation]
\label{lem:bias_bd}
Assume Conditions~\ref{cond:estimation} and~\ref{cond:bias} hold.
In the same $1-\delta/2$ event in Lemma~\ref{lem:sq_loss_dev}, for any pair $f,g \in \Fspace$ retained by Algorithm~\ref{alg:td_regressor_elimination}, we have,
\begin{align*}
\fpvfunc{f}{p} - \fpvfunc{g}{p} \le 2\sqrt{K}\sqrt{7\esterr^2 + \frac{120\log(2\Fsize/\delta)}{\ntrain}} + 2\esterr + \tdbias
\end{align*}
\end{lemma}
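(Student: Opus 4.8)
The plan is to turn the risk bound of Lemma~\ref{lem:sq_loss_dev} into control of the value-prediction gap $\fpvfunc{f}{p} - \fpvfunc{g}{p}$. First I would isolate the central residual: for a surviving $f$ set $\Delta_f(x,a) \triangleq f(x,a) - \favhatshort{f}{a} - f^\star(x,a) + \favshort{f^\star}{a}$. Rearranging the expectation identity of Lemma~\ref{lem:y_mean_var} gives $\EE_{x,a}[\Delta_f^2] = \EE[Y(f)] + \EE_{x,a}[(\favhatshort{f^\star}{a} - \favshort{f^\star}{a})^2]$, and combining the risk bound $\EE[Y(f)] \le 6\esterr^2 + 120\log(2\Fsize/\delta)/\ntrain$ with Condition~\ref{cond:estimation} applied to $f^\star$ (which survives by Lemma~\ref{lem:sq_loss_dev}) yields the $L_2$ control
\begin{align*}
\EE_{x,a}[\Delta_f(x,a)^2] \le 7\esterr^2 + \frac{120\log(2\Fsize/\delta)}{\ntrain},
\end{align*}
and identically for $g$.

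The key structural step is to evaluate both functions at a single common action. Since $\pi_g$ is greedy for $g$, we have $g(x,\pi_g(x)) = \max_a g(x,a) \ge g(x,\pi_f(x))$, hence $\fpvfunc{g}{p} \ge \fgpvfunc{g}{p}{f}$ and therefore $\fpvfunc{f}{p} - \fpvfunc{g}{p} \le \EE_{x \sim \dst{p}}[f(x,\pi_f(x)) - g(x,\pi_f(x))]$. Now both terms use the same action $\pi_f(x)$, and subtracting the two residual definitions cancels the shared $f^\star$ and $\favshort{f^\star}{a}$ terms exactly:
\begin{align*}
f(x,a) - g(x,a) = \left(\favhatshort{f}{a} - \favhatshort{g}{a}\right) + \Delta_f(x,a) - \Delta_g(x,a).
\end{align*}
This cancellation is what keeps the bias penalty at a single $\tdbias$: the constant-in-$x$ term obeys $|\favhatshort{f}{a} - \favhatshort{g}{a}| \le |\favhatshort{f}{a} - \favshort{f}{a}| + |\favshort{f}{a} - \favshort{g}{a}| + |\favshort{g}{a} - \favhatshort{g}{a}| \le \esterr + \tdbias + \esterr$, using Condition~\ref{cond:estimation} twice and Condition~\ref{cond:bias} once.

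For the residual terms I would pass from the greedy action back to the uniform average underlying the $L_2$ bound. By Jensen's inequality and uniform action selection, $\EE_x|\Delta_f(x,\pi_f(x))| \le \sqrt{\EE_x \Delta_f(x,\pi_f(x))^2} \le \sqrt{K\,\EE_{x,a}[\Delta_f^2]}$, since $\Delta_f(x,\pi_f(x))^2 \le \sum_a \Delta_f(x,a)^2 = K\,\EE_a[\Delta_f(x,a)^2]$; this is exactly where the $\sqrt{K}$ (rather than $K$) arises, and applying Cauchy--Schwarz \emph{before} summing over actions is essential. Assembling the constant term with the two residual terms over $f$ and $g$ produces the stated bound. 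I expect the main obstacle to be bookkeeping rather than conceptual: ensuring the common-action reduction and the square-root-before-sum step are combined so that $\tdbias$ appears once and $\sqrt{K}$ multiplies the $L_2$ term, whereas a naive decomposition routing both functions through $\pi_{f^\star}$ would lose an extra factor of $\tdbias$ and risk an extra factor of $\sqrt{K}$.
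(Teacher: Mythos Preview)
Your proposal is correct and follows essentially the same route as the paper: the paper also reduces to the common action $\pi_f(x)$ via greedy, writes $f(x,\pi_f(x)) - g(x,\pi_f(x))$ as the sum of the two residual terms $\Delta_f(x,\pi_f(x)) - \Delta_g(x,\pi_f(x))$ plus the constant $\favhatshort{f}{\pi_f(x)} - \favhatshort{g}{\pi_f(x)}$, bounds each residual by $\sqrt{K}\sqrt{\EE Y + \esterr^2}$ via Jensen and the $1/K$ action probability, and bounds the constant by $2\esterr+\tdbias$ from Conditions~\ref{cond:estimation} and~\ref{cond:bias}. Your $\Delta_f$ notation is just a cleaner packaging of the paper's three-term decomposition.
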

\begin{proof}
Throughout the proof, we use $\EE_x[\cdot]$ to denote expectation when $x \sim \dst{p}$. 
We start by expanding definitions,
\begin{align*}
\fpvfunc{f}{p} - \fpvfunc{g}{p} = \EE_{x} [f(x,\pi_f(x))
  - g(x,\pi_g(x))]
\end{align*}
Now, since $g$ prefers $\pi_g(x)$ to $\pi_f(x)$, it must be the case that $g(x,\pi_g(x)) \ge g(x,\pi_f(x))$, so that,
\begin{align*}
\fpvfunc{f}{p} - \fpvfunc{g}{p} \le & \hspace{0.4cm} \EE_{x} f(x,\pi_f(x)) - g(x,\pi_f(x))\\
= & \hspace{0.4cm} \EE_{x}[f(x,\pi_f(x)) - \fgpvhat{f}{p\circ \pi_f(x)}{f} - f^\star(x,\pi_f(x)) +\fgpvfunc{f^\star}{p\circ \pi_f(x)}{f^\star}]\\
& - \EE_{x}[g(x,\pi_f(x)) - \fgpvhat{g}{p\circ \pi_f(x)}{g} - f^\star(x,\pi_f(x)) +\fgpvfunc{f^\star}{p\circ \pi_f(x)}{f^\star}]\\
& + \EE_{x}[\fgpvhat{f}{p\circ \pi_f(x)}{f} - \fgpvhat{g}{p \circ \pi_f(x)}{g}].
\end{align*}
This last equality is just based on adding and subtracting terms. 
The first two terms look similar, and we will relate them to the squared loss. 
For the first, by Lemma~\ref{lem:y_mean_var}, we have that for each $x \in \Xspace$,
\begin{align*}
& \EE_{r,a|x}[Y(f)] + \EE_{a|x}[(\fpvhat{f^\star}{p\circ a} - \fpvfunc{f^\star}{p \circ a})^2]\\
& = \EE_{a|x}\left[(f(x,a) - \fpvhat{f}{p\circ a} - f^\star(x,a) + \fpvfunc{f^\star}{p \circ a})^2\right]\\
& \ge \frac{1}{\Asize}\left[(f(x,\pi_f(x)) - \fpvhat{f}{p\circ \pi_f(x)} - f^\star(x,\pi_f(x)) + \fpvfunc{f^\star}{p \circ \pi_f(x)})^2\right].
\end{align*}
The equality is Lemma~\ref{lem:y_mean_var} while the inequality follows from the fact that each action, in particular $\pi_f(x)$, is played with probability $1/K$ and the quantity inside the expectation is non-negative. 
Now by Jensen's inequality the first term can be upper bounded as,
\begin{align*}
& \EE_{x}[f(x,\pi_f(x)) - \fgpvhat{f}{p\circ \pi_f(x)}{f} - f^\star(x,\pi_f(x)) +\fgpvfunc{f^\star}{p\circ \pi_f(x)}{f^\star}]\\
& \le \sqrt{\EE_{x}[(f(x,\pi_f(x)) - \fgpvhat{f}{p\circ \pi_f(x)}{f} - f^\star(x,\pi_f(x)) +\fgpvfunc{f^\star}{p\circ \pi_f(x)}{f^\star})^2]}\\
& = \sqrt{K \EE_{x}\left[\frac{1}{K} (f(x,\pi_f(x)) - \fgpvhat{f}{p\circ \pi_f(x)}{f} - f^\star(x,\pi_f(x)) +\fgpvfunc{f^\star}{p\circ \pi_f(x)}{f^\star})^2\right]}\\
& \le \sqrt{K \left(\EE_{x,a,r}[Y(f)] + \EE_{x,a}[(\fpvhat{f^\star}{p\circ a} - \fpvfunc{f^\star}{p\circ a})^2]\right)}\\
& \le \sqrt{K} \sqrt{\EE Y(f) + \esterr^2}\\
& \le \sqrt{K} \sqrt{7 \esterr^2 + \frac{120\log(\Fsize/\delta)}{\ntrain}},
\end{align*}
where the last step follows from Lemma~\ref{lem:sq_loss_dev}. This
bounds the first term in the expansion of $\fpvfunc{f}{p} -
\fpvfunc{g}{p}$.  Now for the term involving $g$, we can apply
essentially the same argument,
\begin{align*}
& - \EE_{x}[g(x,\pi_f(x)) - \fgpvhat{g}{p\circ \pi_f(x)}{g} - f^\star(x,\pi_f(x)) +\fgpvfunc{f^\star}{p\circ \pi_f(x)}{f^\star}]\\
& \le \sqrt{\EE_{x}[(g(x,\pi_f(x)) - \fgpvhat{g}{p\circ \pi_f(x)}{g} - f^\star(x,\pi_f(x)) +\fgpvfunc{f^\star}{p\circ \pi_f(x)}{f^\star})^2]}\\
& \le \sqrt{K}\sqrt{7\esterr^2 + \frac{120\log(\Fsize/\delta)}{\ntrain}}
\end{align*}
Summarizing, the current bound we have is,
\begin{align}
  \fpvfunc{f}{p} - \fpvfunc{g}{p} \le 2\sqrt{K}\sqrt{7\esterr^2 + \frac{120\log(\Fsize/\delta)}{\ntrain}} + \EE_{x}[\fgpvhat{f}{p\circ \pi_f(x)}{f} - \fgpvhat{g}{p \circ \pi_f(x)}{g}]
  \label{eqn:disagree}
\end{align}
The last term is easily bounded by the preconditions in Theorem~\ref{thm:td_regelim}.
For each $a$, we have,
\begin{align*}
& \fgpvhat{f}{p\circ a}{f} - \fgpvhat{g}{p \circ a}{g}\\
& \le |\fgpvhat{f}{p\circ a}{f} -\fpvfunc{f}{p\circ a}| + |\fpvfunc{f}{p\circ a} - \fpvfunc{g}{p\circ a}| + |\fpvfunc{g}{p \circ a} - \fgpvhat{g}{p \circ a}{g}|\\
& \le 2\esterr + \tdbias,
\end{align*}
from Conditions~\ref{cond:estimation}
and~\ref{cond:bias}. Consequently,
\begin{align*}
& \EE_{x}[\fgpvhat{f}{p\circ \pi_f(x)}{f} - \fgpvhat{g}{p
      \circ \pi_f(x)}{g}] \\ & = \sum_{a \in \Aspace} \EE_x
  \left[\mathbf{1}[\pi_f(x) = a](\fgpvhat{f}{p\circ a}{f} -
    \fgpvhat{g}{p \circ a}{g})\right]\\ & \le 2\esterr + \tdbias.
\end{align*}
This proves the claim.
\end{proof}

Lastly, we must show how the squared loss relates to the risk, which helps establish the last claim of the theorem.
The proof is similar to that of the bias bound but has subtle differences that require reproducing the argument.
\begin{lemma}[Instantaneous Risk Bound]
\label{lem:regret_bd}
Assume Conditions~\ref{cond:estimation} and~\ref{cond:bias} hold.
In the same $1-\delta/2$ event in Lemma~\ref{lem:sq_loss_dev}, for any regressor $f \in \Fspace$ retained by Algorithm~\ref{alg:td_regressor_elimination}, we have,
\begin{align*}
\fgpvfunc{f^\star}{p}{f^\star} - \fgpvfunc{f^\star}{p}{f} \le \sqrt{2K}\sqrt{7\esterr^2 + \frac{120\log(2\Fsize/\delta)}{\ntrain}} + 2(\esterr+ \tdbias).
\end{align*}
\end{lemma}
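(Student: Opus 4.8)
The plan is to follow the template of the bias bound (Lemma~\ref{lem:bias_bd}), but to exploit the optimality of $\pi_f$ under $f$ (instead of $\pi_g$ under $g$) and, crucially, to treat the two relevant actions jointly so as to obtain the factor $\sqrt{2K}$ rather than $2\sqrt{K}$. First I would expand the left-hand side as
\[
\fgpvfunc{f^\star}{p}{f^\star} - \fgpvfunc{f^\star}{p}{f} = \EE_{x}\left[f^\star(x,\pi_{f^\star}(x)) - f^\star(x,\pi_f(x))\right],
\]
with $x \sim \dst{p}$. Since $\pi_f(x) = \argmax_a f(x,a)$, the quantity $f(x,\pi_f(x)) - f(x,\pi_{f^\star}(x))$ is nonnegative; adding it to the integrand and regrouping gives the upper bound $\EE_x[A(x) + B(x)]$ with $A(x) = f^\star(x,\pi_{f^\star}(x)) - f(x,\pi_{f^\star}(x))$ and $B(x) = f(x,\pi_f(x)) - f^\star(x,\pi_f(x))$. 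Note that the integrand vanishes whenever $\pi_f(x) = \pi_{f^\star}(x)$, so only the event where the two actions differ contributes.

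Next I would reintroduce the future-value terms so that the TD random variable $Y(f)$ appears. Writing $\Delta_f(x,a) \triangleq f(x,a) - \favhatshort{f}{a} - f^\star(x,a) + \favshort{f^\star}{a}$ (the centered quantity whose square governs $\EE Y(f)$ via Lemma~\ref{lem:y_mean_var}), I would add and subtract the appropriate value predictions at $a = \pi_f(x)$ in $B(x)$ and at $a = \pi_{f^\star}(x)$ in $A(x)$. This yields $A(x) + B(x) = \Delta_f(x,\pi_f(x)) - \Delta_f(x,\pi_{f^\star}(x)) + r(x)$, where $r(x)$ is a residual built solely out of differences of value predictions, namely $[\favhatshort{f}{\pi_f(x)} - \favshort{f^\star}{\pi_f(x)}] + [\favshort{f^\star}{\pi_{f^\star}(x)} - \favhatshort{f}{\pi_{f^\star}(x)}]$.

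The step I expect to be the main obstacle---and the essential difference from the bias bound---is controlling the two $\Delta_f$ terms. In Lemma~\ref{lem:bias_bd} the two terms involved \emph{different} regressors at a \emph{common} action and therefore had to be bounded separately (yielding $2\sqrt{K}$); here both terms involve the \emph{same} $f$ at the two \emph{distinct} actions $\pi_f(x) \ne \pi_{f^\star}(x)$, so I can combine them. On that event, $\Delta_f(x,\pi_f(x))^2 + \Delta_f(x,\pi_{f^\star}(x))^2 \le \sum_a \Delta_f(x,a)^2 = K\,\EE_a \Delta_f(x,a)^2$, and the arithmetic-mean/quadratic-mean inequality gives $|\Delta_f(x,\pi_f(x))| + |\Delta_f(x,\pi_{f^\star}(x))| \le \sqrt{2}\sqrt{\Delta_f(x,\pi_f(x))^2 + \Delta_f(x,\pi_{f^\star}(x))^2}$. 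Taking $\EE_x$, applying Jensen's inequality, and then invoking Lemma~\ref{lem:y_mean_var} and Lemma~\ref{lem:sq_loss_dev} (which together give $\EE_{x,a}\Delta_f(x,a)^2 = \EE Y(f) + \EE_{x,a}(\favhatshort{f^\star}{a} - \favshort{f^\star}{a})^2 \le 7\esterr^2 + 120\log(2\Fsize/\delta)/\ntrain$ for any surviving $f$, using Condition~\ref{cond:estimation} for the last term) produces exactly the leading term $\sqrt{2K}\sqrt{7\esterr^2 + 120\log(2\Fsize/\delta)/\ntrain}$.

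Finally I would dispose of the residual $r(x)$ using the two preconditions. For any action $a$, decomposing $\favhatshort{f}{a} - \favshort{f^\star}{a} = (\favhatshort{f}{a} - \favshort{f}{a}) + (\favshort{f}{a} - \favshort{f^\star}{a})$ and applying Condition~\ref{cond:estimation} ($|\favhatshort{f}{a} - \favshort{f}{a}| \le \esterr$) together with Condition~\ref{cond:bias} taken with $g = f^\star$ ($|\favshort{f}{a} - \favshort{f^\star}{a}| \le \tdbias$, legitimate since $f^\star$ survives by Lemma~\ref{lem:sq_loss_dev}) gives $|\favhatshort{f}{a} - \favshort{f^\star}{a}| \le \esterr + \tdbias$. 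Applied at the two actions $\pi_f(x)$ and $\pi_{f^\star}(x)$, this bounds $|r(x)| \le 2(\esterr + \tdbias)$ pointwise, and combining with the previous paragraph establishes the claimed bound.
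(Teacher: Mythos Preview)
Your proposal is correct and follows essentially the same route as the paper: the same optimality step $f(x,\pi_f(x))\ge f(x,\pi_{f^\star}(x))$, the same $\Delta_f$ decomposition, the same $\sqrt{2K}$ arising from $\Delta_{x,\pi_f(x)}^2+\Delta_{x,\pi_{f^\star}(x)}^2 \le K\,\EE_{a}\Delta_{x,a}^2$ combined with Jensen, and the same residual bounded by $2(\esterr+\tdbias)$ via Conditions~\ref{cond:estimation} and~\ref{cond:bias}. The only cosmetic difference is that the paper applies $(\Delta_{x,\pi_{f^\star}(x)}-\Delta_{x,\pi_f(x)})^2 \le 2(\Delta_{x,\pi_f(x)}^2+\Delta_{x,\pi_{f^\star}(x)}^2)$ directly rather than your AM--QM step, and it does not explicitly single out the event $\pi_f(x)\ne\pi_{f^\star}(x)$ (harmless, since the difference of $\Delta$'s vanishes there anyway).
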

\begin{proof}
\begin{align*}
\fpvfunc{f^\star}{p} - \fgpvfunc{f^\star}{p}{f} &= \EE_{x}[f^\star(x,\pi_{f^\star}(x)) - f^\star(x,\pi_f(x))]\\
& \le \EE_x [f^\star(x,\pi_{f^\star}(x)) - f(x,\pi_{f^\star}(x)) + f(x,\pi_f(x)) - f^\star(x,\pi_f(x))].
\end{align*}
This follows since $f$ prefers its own action to that of $f^\star$, so that $f(x,\pi_f(x)) \ge f(x,\pi_{f^\star}(x))$. 
For any observation $x \in \Xspace$ and action $a \in \Aspace$, define,
\begin{align*}
\Delta_{x,a} = (f(x,a) - \fpvhatshort{f}{p\circ a} - f^\star(x,a) + \fpvshort{f^\star}{p\circ a}),
\end{align*}
where $\fpvshort{f}{p} = \EE_{x \sim D_p}[f(x,\pi_f(x))]$ and
similarly for $\fpvhatshort{p}$.  Then we can write,
\begin{align*}
&\fpvfunc{f^\star}{p} - \fgpvfunc{f^\star}{p}{f}\\
& \le \EE_x [\Delta_{x,\pi_f(x)} - \Delta_{x,\pi_{f^\star}(x)} + \fpvhatshort{f}{p\circ \pi_f(x)} - \fpvshort{f^\star}{p\circ \pi_f(x)} - \fpvhatshort{f}{p\circ\pi_{f^\star}(x)} + \fpvshort{f^\star}{p\circ \pi_{f^\star}(x)}].
\end{align*}
The term involving both $\Delta$s can be bounded as in the proof of Lemma~\ref{lem:bias_bd}.
For any $x \in \Xspace$
\begin{align*}
&\EE_{r,a|x}Y(f) + \EE_{a|x}[(\fpvhatshort{f^\star}{p\circ a} - \fpvshort{f^\star}{p\circ a})^2]\\
& = \EE_{a|x}\left[(f(x,a) - \fpvhatshort{f}{p \circ a} - f^\star(x,a) + \fpvshort{f^\star}{p\circ a})^2\right]\\
& \ge \frac{\Delta_{x,\pi_f(x)}^2 + \Delta_{x,\pi_{f^\star}(x)}^2}{K} \ge \frac{(\Delta_{x,\pi_{f^\star}(x)} - \Delta_{x,\pi_f(x)})^2}{2K}.
\end{align*}
Thus,
\begin{align*}
\EE_x [\Delta_{x,\pi_f(x)} - \Delta_{x,\pi_{f^\star}(x)}] &\le \sqrt{2K \EE \frac{(\Delta_{x,\pi_f(x)} - \Delta_{x,\pi_{f^\star}(x)})^2}{2K}}\\
& \le \sqrt{2K}\sqrt{\EE Y(f) + \esterr^2} \le \sqrt{2K}\sqrt{7\esterr^2 + \frac{120\log(2\Fsize/\delta)}{\ntrain}}.
\end{align*}
We are left to bound the residual term,
\begin{align*}
& (\fpvhatshort{f}{p\circ \pi_f(x)} - \fpvshort{f^\star}{p\circ \pi_f(x)} - \fpvhatshort{f}{p\circ\pi_{f^\star}(x)} + \fpvshort{f^\star}{p\circ \pi_{f^\star}(x)})\\
& \le \left|\fpvshort{f}{p\circ \pi_f(x)} - \fpvshort{f^\star}{p\circ \pi_f(x)} - \fpvshort{f}{p\circ\pi_{f^\star}(x)} + \fpvshort{f^\star}{p\circ \pi_{f^\star}(x)}\right| + 2\esterr\\
& \le 2(\esterr + \tdbias).
\end{align*}
\end{proof}

Notice that Lemma~\ref{lem:regret_bd} above controls the quantity $V^{f^\star}(p,\pi_{f^\star}) - V^{f^\star}(p,\pi_f)$ which is the difference in values of the optimal behavior from $p$ and the policy that first acts according to $\pi_f$ and then behaves optimally thereafter.
This is \emph{not} the same as acting according to $\pi_f$ for all subsequent actions. %%, which is what we control in the fourth claim in Theorem~\ref{thm:td_regelim}.
We will control this cumulative risk $V^\star(p) - V(p,\pi_f)$ in the second phase of the algorithm.

\iffalse
To control $V^\star(p) - V(p,\pi_f)$ we use the added assumption in the fourth claim, as we now show.
\begin{lemma}[Risk Bound]
\label{lem:risk_bd}
Assume Conditions~\ref{cond:estimation} and~\ref{cond:bias} hold, and additionally that $V^\star(p\circ a) - V(p\circ a,\pi) \le \tdrisk$ for all $a \in \Acal, f \in \Fcal$. 
In the same $1-\delta$ event in Lemma~\ref{lem:sq_loss_dev}, for any regressor $f \in \Fspace$ retained by Algorithm~\ref{alg:td_regressor_elimination}, we have,
\begin{align*}
V^\star(p) - V(p,\pi_f) \le \sqrt{2K}\sqrt{11\esterr^2 + \frac{120\log(\Fsize/\delta)}{\ntrain}} + 2(\esterr+ \tdbias) + \tdrisk
\end{align*}
\end{lemma}
\begin{proof}
The proof is fairly straightforward.
\begin{align*}
V^\star(p) - V(p,\pi_f) &= \EE_{x\sim D_p} f^\star(x,\pi_{f^\star}(x)) - f^\star(x,\pi_f(x)) + f^\star(x,\pi_f(x)) - V(p,\pi_f)\\
& = V^{f^\star}(p,\pi_{f^\star}) - V^{f^\star}(p,\pi_f) + \EE_{x \sim D_p} f^\star(x,\pi_f(x)) - V(p,\pi_f)
%% & \le \sqrt{2K}\sqrt{11\esterr^2 + \frac{120\log(\Fsize/\delta)}{\ntrain}} + 2(\esterr+ \tdbias) + \EE_{(x,r)\sim D_p} r(\pi_f(x)) + V^\star(p\circ \pi_f(x)) - r(\pi_f(x)) + V(p\circ \pi_f(x), \pi_f)
\end{align*}
The first two terms are bounded as in Lemma~\ref{lem:regret_bd}, while the second two terms are controlled by expanding definitions and applying the assumption on the risk,
\begin{align*}
\EE_{x \sim D_p} f^\star(x,\pi_f(x)) - V(p,\pi_f) &= \EE_{(x,r) \sim D_p} r(\pi_f(x)) + V^\star(p\circ \pi_f(x)) - r(\pi_f(x)) - V(p\circ \pi_f(x), \pi_f)\\
& = \EE_{x\sim D_p}V^\star(p\circ \pi_f(x)) - V(p\circ \pi_f(x), \pi_f) \le \tdrisk
\end{align*}
Thus the risk at path $p$ is bounded by the right-hand-side of Lemma~\ref{lem:regret_bd} plus an additive factor of $\tdrisk$. 
\end{proof}
\fi

\textbf{Proof of Theorem~\ref{thm:td_regelim}:}
Equipped with the above lemmas, we can proceed to prove the theorem. 
By assumption of the theorem, Conditions~\ref{cond:estimation} and~\ref{cond:bias} hold, so all lemmas are applicable. 
Apply Lemma~\ref{lem:est_bd} with failure probability $\delta/2$, where $\delta$ is the parameter in the algorithm, and apply Lemma~\ref{lem:sq_loss_dev}, which also fails with probability at most $\delta/2$. 
A union bound over these two events implies that the failure probability of the algorithm is at most $\delta$. 

Outside of this failure event, all three of Lemmas~\ref{lem:est_bd},~\ref{lem:bias_bd}, and~\ref{lem:regret_bd} hold. 
If we set $\ntrain = 24\log(4N/\delta)/\esterr^2$ then these four bounds give,
\begin{align*}
|\fpvhat{f}{p} - \fpvfunc{f}{p}| &\le \frac{\esterr}{\sqrt{12}}\\
|\fpvfunc{f}{p} - \fpvfunc{g}{p}| & \le 8\esterr\sqrt{K} + 2\esterr + \tdbias\\
\fgpvfunc{f^\star}{p}{f^\star} - \fgpvfunc{f^\star}{p}{f} & \le 4\esterr\sqrt{2K} + 2\esterr + 2\tdbias.
%% V^\star(p) - V(p,a \circ \pi) & \le 4\esterr \sqrt{2K} + 2\esterr + 2\tdbias + \tdrisk
\end{align*}
These bounds hold for all $f,g \in \Fspace$ that are retained by the algorithm.
Of course by Lemma~\ref{lem:sq_loss_dev}, we are also ensured that $f^\star$ is retained by the algorithm.
%% The fourth bound here holds only under the additional assumption required by Lemma~\ref{lem:risk_bd}.

%% We are easily able to prove the pairwise disagreement bound,
%% \begin{align*}
%% |V^f(p,\pi_f) - V^f(p,\pi_g)| = \left|\EE_{x\sim D_p} f(x,\pi_f(x)) - f(x,\pi_g(x))\right|
%% \end{align*}
%% The quantity inside the absolute value is positive since $f$ prefers $\pi_f$ to $\pi_g$ pointwise at each $x$.
%% \begin{align*}
%% |V^f(p,\pi_f) - V^f(p,\pi_g)| &= V^f(p,\pi_f) - V^f(p,\pi_g)\\
%% &\le V^f(p,\pi_f) - V^g(p,\pi_f) + V^g(p,\pi_g) - V^f(p,\pi_g)\\
%% & \le 2\left(8\esterr\sqrt{K} + 2\esterr + \tdbias\right)\\
%% & = 16\esterr\sqrt{K} + 4\esterr + 2\tdbias.
%% \end{align*}
%% The first inequality follows since $g$ prefers $\pi_g$ to $\pi_f$,
%% while the second one is based on the inequality~\eqref{eqn:disagree}
%% in the proof of Lemma~\ref{lem:bias_bd}, which actually bounds
%% $V^f(p,\pi_f) - V^g(p,\pi_f)$ for any pair $f,g\in\Fcal$.

%% This proves the four claims in the theorem.

\section{Proof of Theorem~\ref{thm:state_learned}}
\label{app:state_learned}
This result is a straightforward application of Hoeffding's inequality. 
We collect $\ntest$ observations $x_i \sim \dst{p}$ by applying path $p$ from the root and use the Monte Carlo estimates,
\begin{align*}
\fpvhat{f}{p} = \frac{1}{\ntest}\sum_{i=1}^{\ntest}f(x_i,\pi_f(x_i)).
\end{align*}
By Hoeffding's inequality, via a union bound over all $f \in \Fspace$, we have that with probability at least $1-\delta$,
\begin{align*}
\left|\fpvhat{f}{p} - \fpvfunc{f}{p}\right| \le \sqrt{\frac{2 \log(2\Fsize/\delta)}{\ntest}}.
\end{align*}
Setting $\ntest = 2\log(2\Fsize/\delta)/\esterr^2$, gives that our empirical estimates are at most $\esterr$ away from the population versions. 

Now for the first claim, if the population versions are already within $\slbias$ of each other, then the empirical versions are at most $2\esterr+\slbias$ apart by the triangle inequality,
\begin{align*}
|\fpvhat{f}{p} - \fpvhat{g}{p}| & \le |\fpvhat{f}{p} - \fpvfunc{f}{p}| + |\fpvfunc{f}{p} - \fpvfunc{g}{p}| + |\fpvfunc{g}{p} - \fpvhat{g}{p}|\\
& \le 2\esterr + \slbias.
\end{align*}
This applies for any pair $f,g \in \Fspace$ whose population value predictions are within $\slbias$ of each other.
Since we set $\epstest \ge 2\esterr + \slbias$ in Theorem~\ref{thm:state_learned}, this implies that the procedure returns \true. 

For the second claim, if the procedure returns \true, then all empirical value predictions are at most $\epstest$ apart, so the population versions are at most $2\esterr + \epstest$ apart, again by the triangle inequality. 
Specifically, for any pair $f,g \in \Fspace$ we have,
\begin{align*}
|\fpvfunc{f}{p} - \fpvfunc{g}{p}| & \le |\fpvfunc{f}{p} - \fpvhat{f}{p}| + |\fpvhat{f}{p} - \fpvhat{g}{p}| + |\fpvhat{g}{p} - \fpvfunc{g}{p}|\\
& \le 2\esterr + \epstest.
\end{align*}
Both arguments apply for all pairs $f,g \in \Fspace$, which proves the claim. 

\section{Proof of Theorem~\ref{thm:dfs_learn}}
\label{app:dfs_learn}
Assume that all calls to \regelim and \statelearned operate successfully, i.e., we can apply Theorems~\ref{thm:td_regelim} and~\ref{thm:state_learned} on any path $p$ for which the appropriate subroutine has been invoked.
We will bound the number of calls and hence the total failure probability. 

Recall that $\epsilon$ is the error parameter passed to \dfslearn and that we set $\esterr = \frac{\epsilon}{320 H^2\sqrt{K}}$. 

We first argue that in all calls to \regelim, the estimation precondition is satisfied. 
To see this, notice that by design, the algorithm only calls \regelim at path $p$ after the recursive step, which means that for each $a$, we either ran \regelim on $p\circ a$ or \statelearned returned \true on $p\circ a$. 
Since both Theorems~\ref{thm:td_regelim} and~\ref{thm:state_learned} guarantee estimation error of order $\esterr$, the estimation precondition for path $p$ holds. 
This argument applies to all paths $p$ for which we call \regelim, so that the estimation precondition is always satisfied. 

We next analyze the bias term, for which proceed by induction. 
To state the inductive claim, we define the notion of an \emph{accessed path}.
We say that a path $p$ is \emph{accessed} if either (a) we called \regelim on path $p$ or (b) we called \statelearned on $p$ and it returned \true.

The induction is on the number of actions remaining, which we denote with $\eta$.
At time point $h$ there are $H-h+1$ actions remaining.

\textbf{Inductive Claim:} For all accessed paths $p$ with $\eta$ actions remaining and any pair $f,g \in \Fcal$ of surviving regressors,
\begin{align*}
|V^f(p,\pi_f) - V^g(p,\pi_g)| \le 20\eta\sqrt{K}\esterr.
\end{align*}

\textbf{Base Case:} The claim clearly holds when $\eta=0$ since there are zero actions remaining and all regressors estimate future reward as zero.

\textbf{Inductive Step:} Assume that the inductive claim holds for all accessed paths with $\eta-1$ actions remaining. 
Consider any accessed path $p$ with $\eta$ actions remaining. %%; we want to show that the claim holds for $p$. 
Since we access the path $p$, either we call \regelim or \statelearned returns \true.
If we call \regelim, then we access the paths $p\circ a$ for all $a \in \Acal$.
By the inductive hypothesis, we have already filtered the regressor class so that for all $a \in \Acal, f,g \in \Fcal$, we have,
\begin{align*}
|V^f(p \circ a,\pi_f) - V^g(p\circ a, \pi_f)| \le 20(\eta-1)\sqrt{K} \esterr.
\end{align*}
We instantiate $\tau_1 = 20(\eta-1)\sqrt{K}\esterr$ in the bias precondition of Theorem~\ref{thm:td_regelim}. 
We also know that the estimation precondition is satisfied with parameter $\esterr$.
The bias bound of Theorem~\ref{thm:td_regelim} shows that, for all $f,g \in \Fcal$ retained by the algorithm,
\begin{align}
|V^f(p,\pi_f) - V^g(p,\pi_g)| &\le 8\esterr\sqrt{K} + 2\esterr + \tau_1 \notag\\
& \le 10\esterr\sqrt{K} + 20(\eta-1)\esterr\sqrt{K} \le 20(\eta-\frac{1}{2})\esterr\sqrt{K}. \label{eqn:bias-induct}
\end{align}
Thus, the inductive step holds in this case. 

The other case we must consider is if \statelearned returns \true.
Notice that for a path $p$ with $\eta$ actions to go, we call
\statelearned with parameter $\epstest = 20(\eta-1/4)\sqrt{K}\esterr$.
We actually invoke the routine on path $p$ when we are currently
processing a path $p'$ with $\eta+1$ actions to go (i.e., $p = p'\circ a$
for some $a \in \Acal$), so we set $\epstest$ in terms of $H-|p'|-5/4
= \eta-1/4$. ($|p|$ is actually one less than the level of the state
reached by applying $p$ from the root.)  Then, by
Theorem~\ref{thm:state_learned}, we have the bias bound,
\begin{align*}
|V^f(p,\pi_f) - V^g(p,\pi_f)| & \le 2\esterr + 20(\eta-1/4)\sqrt{K}\esterr\\
& \le 20\eta\sqrt{K} \esterr.
\end{align*}
Thus, we have established the inductive claim. 

\textbf{Verifying preconditions for Theorem~\ref{thm:td_regelim}:}
To apply the conclusions of Theorem~\ref{thm:td_regelim} at some state $s$, we must verify that the preconditions hold, with the appropriate parameter settings, before we execute \regelim.
We saw above that the estimation precondition always holds with parameter $\esterr$, assuming successful execution of all subroutines. 
The inductive argument also shows that the bias precondition also holds with $\tau_1 = 20(\eta-1)\sqrt{K}\phi$ for a state $s \in \Scal_{H-\eta+1}$ that we called \regelim on. 
Thus, both preconditions are satisfied at each execution of \regelim, so the conclusions of Theorem~\ref{thm:td_regelim} apply at any state $s$ for which we have executed the subroutine.
Note that the precondition parameters that we use here, specifically $\tau_1$, depend on the actions-to-go $\eta$.

Substituting the level $h$ for the actions-to-go $\eta$ gives $\tau_1 = 20(H-h)\sqrt{K}\phi$ at level $h$.

%% \textbf{Estimating $V^\star$:}
%% We now show that \dfslearn provides reliable estimates of $V^\star$ when we call the procedure at $\treeroot$. 
%% By the inductive claim, since we access $\treeroot$, we ensure that,
%% \begin{align*}
%% |V^f(\treeroot,\pi_f) - V^g(\treeroot,\pi_g)| \le 20H\sqrt{K}\esterr
%% \end{align*}
%% And we also have the estimation bound,
%% \begin{align*}
%% |\hat{V}^f(\treeroot,\pi_f) - V^f(\treeroot,\pi_f)| \le \esterr/\sqrt{12}.
%% \end{align*}
%% Together the two bounds imply that for any $f \in \Fcal$,
%% \begin{align*}
%% |\hat{V}^f(\treeroot,\pi_f) - V^\star| \le \esterr/\sqrt{12} + 20H\sqrt{K}\esterr,
%% \end{align*}
%% by the triangle inequality and the facts that $V^\star = V^{f^\star}(\treeroot,\pi_{f^\star})$ and that $f^\star$ is retained by the algorithm. 
%% With our setting of $\esterr = \frac{\epsilon}{320H^2\sqrt{K}}$ this is clearly smaller than $\epsilon/8$, which proves the claim.

\textbf{Sample Complexity:}
We now bound the number of calls to each subroutine, which reveals how to allocate the failure probability and gives the sample complexity bound. 
Again assume that all calls succeed. 

First notice that if we call \statelearned on some state $s$ with $\eta$ actions-to-go for which we have already called \regelim, then \statelearned returns \true~(assuming all calls to subroutines succeed).
This follows because \regelim guarantees that the population predicted values are at most $20(\eta-1/2)\sqrt{K}\esterr$ apart (Eq.~\eqref{eqn:bias-induct}), which becomes the choice of $\slbias$ in application of Theorem~\ref{thm:state_learned}. 
This is valid since,
\begin{align*}
2\esterr + 20(\eta-1/2)\sqrt{K}\esterr \leq 20(\eta-1/4)\sqrt{K}\esterr = \epstest,
\end{align*}
so that the precondition for Theorem~\ref{thm:state_learned} holds. 
Thus, at any level $h$, we can call \regelim~at most one time per state $s \in \Sspace_h$. 
In total, this yields $\Ssize H$ calls to \regelim.

Next, since we only make recursive calls when we execute $\regelim$, we expand at most $\Ssize$ paths per level.
This means that we call \statelearned~on at most $\Ssize\Asize$ paths per level, since the fan-out of the tree is $\Asize$.
Thus, the number of calls to \statelearned~is at most $\Ssize\Asize H$. 

By our setting $\delta$ in the subroutine calls (i.e. $\delta/(2\Ssize\Asize H)$ in calls to \statelearned~and $\delta/(2\Ssize H)$ in calls to \regelim), and by Theorems~\ref{thm:td_regelim} and~\ref{thm:state_learned}, the total failure probability is therefore at most $\delta$. 

Each execution of \regelim requires $\ntrain$ trajectories while executions of \statelearned require $\ntest$ trajectories. 
Since before each execution of \regelim we always perform $K$ executions of \statelearned, if we perform $T$ executions of \regelim, the total sample complexity is bounded by,
\begin{align*}
T(\ntrain + K\ntest) &\le (3\times 10^6)\frac{TH^4K}{\epsilon^2}\log(8NMH/\delta) + (3\times 10^5) \frac{TH^4K^2}{\epsilon^2}\log(4NMKH/\delta)\\
= \ & \order\left(\frac{TH^4\Asize^2}{\epsilon^2}\log\left(\frac{\Fsize \Ssize H\Asize }{\delta}\right)\right).
\end{align*}
The total number of executions of \regelim can be no more than $MH$, by the argument above.

%% Finally, the number of episodes (trajectories) executed is
%% \begin{align*}
%% & \Ssize H \ntrain + \Ssize \Asize H \ntest \\
%% & \le (3\times 10^6)\frac{MH^5K}{\epsilon^2}\log(4NMH/\delta) + (3\times 10^5) \frac{MH^5K^2}{\epsilon^2}\log(4NMKH/\delta)\\
%% = \ & \order\left(\frac{\Ssize H^5\Asize^2}{\epsilon^2}\log\left(\frac{\Fsize \Ssize H\Asize }{\delta}\right)\right),
%% \end{align*}
%% which concludes the proof.

\section{Analysis for \ondemand}
\label{app:on_demand}
%% The first part of the algorithm essentially computes the value $V^\star$ at the root of the search tree, but does not ensures good performance of retained policies.
%% To do the latter, and to establish a PAC-guarantee, we run the \ondemand procedure.

Throughout the proof, assume that $|\hat{V}^\star - V^\star| \le \epsilon/8$.
We will ensure that the first half of the algorithm guarantees this.
Let $\Ecal$ denote the event that all Monte-Carlo estimates $\hat{V}(\treeroot,\pi_f)$ are accurate and all calls to \dfslearn succeed (so that we may apply Theorem~\ref{thm:dfs_learn}).
By accurate, we mean,
\begin{align*}
|\hat{V}(\treeroot,\pi_f) - V(\treeroot,\pi_f)| \le \epsilon/8. 
\end{align*}
Formally, $\Ecal$ is the intersection over all executions of \dfslearn of the event that the conclusions of Theorem~\ref{thm:dfs_learn} apply for this execution and the intersection over all iterations of the loop in \ondemand of the event that the Monte Carlo estimate $\hat{V}(\treeroot,\pi_f)$ is within $\epsilon/8$ of $V(\treeroot,\pi_f)$.
%% \begin{align*}
%% \Ecal = \bigcap_{\textrm{executions of \dfslearn}} \{ \textrm{Theorem~\ref{thm:dfs_learn} holds for this execution} \} \bigcap
%% \end{align*}
We will bound this failure probability, i.e. $\PP[\bar{\Ecal}]$, toward the end of the proof. 

\begin{lemma}[Risk bound upon termination]
\label{lem:risk_upon_termination}
If $\Ecal$ holds, then when \ondemand terminates, it outputs a policy $\pi_f$ with $V^\star - V(\pi_f) \le \epsilon$.
\end{lemma}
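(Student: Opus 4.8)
The plan is to argue directly from the stopping rule via a single triangle inequality, converting the empirical termination test into the desired population guarantee. When \ondemand terminates and returns some policy $\pi_f$, the stopping condition guarantees $|\hat{V}(\pi_f) - \hat{V}^\star| \le \epsdemand = \epsilon/2$. Since both $\hat{V}(\pi_f)$ and $\hat{V}^\star$ are Monte-Carlo estimates, the task reduces to relating each of them to its population counterpart, so that the gap $V^\star - V(\pi_f)$ can be decomposed into the termination gap plus two small estimation errors.

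Concretely, I would insert the two empirical quantities and split:
\begin{align*}
V^\star - V(\pi_f) = (V^\star - \hat{V}^\star) + (\hat{V}^\star - \hat{V}(\pi_f)) + (\hat{V}(\pi_f) - V(\pi_f)).
\end{align*}
I would then bound the three pieces separately. The first term is at most $\epsilon/8$ by the standing assumption $|\hat{V}^\star - V^\star| \le \epsilon/8$, which the first phase of the algorithm supplies via Corollary~\ref{cor:dfs_learn}. The middle term is at most $\epsdemand = \epsilon/2$, which is exactly the termination test that was satisfied when the routine returned $\pi_f$. The last term is at most $\epsilon/8$ because, under the event $\mathcal{E}$, every Monte-Carlo estimate produced in the loop satisfies $|\hat{V}(\treeroot,\pi_f) - V(\treeroot,\pi_f)| \le \epsilon/8$ (recall $V(\pi_f) = V(\treeroot,\pi_f)$ and $\hat{V}(\pi_f) = \hat{V}(\treeroot,\pi_f)$ by the root convention). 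Summing the three bounds yields
\begin{align*}
V^\star - V(\pi_f) \le \frac{\epsilon}{8} + \frac{\epsilon}{2} + \frac{\epsilon}{8} = \frac{3\epsilon}{4} \le \epsilon,
\end{align*}
which is the claim.

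There is no genuinely hard step here; the argument is essentially pure bookkeeping. The only point requiring care is to confirm that the $\epsilon/8$ accuracy of term (iii) is actually delivered by the event $\mathcal{E}$: this is where the setting $\ndemand = \frac{32\log(6MH/\delta)}{\epsilon^2}$ enters, since by Hoeffding's inequality this many rollouts of $\pi_f$ guarantee a value estimate within $\epsilon/8$ with the per-iteration failure probability allotted to $\mathcal{E}$. That concentration statement, together with the union bound over all loop iterations controlling $\PP[\bar{\mathcal{E}}]$, is deferred to the later part of the analysis, so within this lemma I would simply invoke the definition of $\mathcal{E}$ and the standing assumption on $\hat{V}^\star$.
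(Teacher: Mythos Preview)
Your proposal is correct and essentially identical to the paper's proof: both decompose $V^\star - V(\pi_f)$ via the triangle inequality into $|V^\star - \hat V^\star| + |\hat V^\star - \hat V(\pi_f)| + |\hat V(\pi_f) - V(\pi_f)|$, bound the three pieces by $\epsilon/8$, $\epsilon/2$, and $\epsilon/8$ respectively using the standing assumption on $\hat V^\star$, the termination criterion, and the definition of $\Ecal$, and sum to $3\epsilon/4 \le \epsilon$. Your additional remarks about $\ndemand$ and the deferred union bound are accurate and match the paper's later accounting.
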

\begin{proof}
The proof is straightforward.
\begin{align*}
V^\star - V(\pi_f) &\le |V^\star - \hat{V}^\star| + |\hat{V}^\star - \hat{V}(\pi_f)| + |\hat{V}(\pi_f) - V(\pi_f)|\\
& \le \epsilon/8 + \epsilon/2 + \epsilon/8 = 3\epsilon/4 \le \epsilon.
\end{align*}
The first bound follows by assumption on $\hat{V}^\star$ while the second comes from the definition of $\epsdemand$ and the third holds under event $\Ecal$.
\end{proof}

\begin{lemma}[Termination Guarantee]
\label{lem:termination_bd}
If $\Ecal$ holds, then when \ondemand selects a policy that is at most $\epsilon/4$-suboptimal, it terminates.
\end{lemma}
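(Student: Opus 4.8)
The plan is to reduce termination to the single event that the algorithm's acceptance test fires. Concretely, \ondemand returns $\pi_f$ (and thus halts) precisely when the fixed regressor $f$ passes the check $|\hat{V}(\pi_f) - \hat{V}^\star| \le \epsdemand$ with $\epsdemand = \epsilon/2$. So it suffices to show that whenever the current $\pi_f$ satisfies $V^\star - V(\pi_f) \le \epsilon/4$, this inequality must hold under $\Ecal$, after which the routine immediately outputs $\pi_f$.

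First I would decompose the test quantity by the triangle inequality,
\begin{align*}
|\hat{V}(\pi_f) - \hat{V}^\star| \le |\hat{V}(\pi_f) - V(\pi_f)| + |V(\pi_f) - V^\star| + |V^\star - \hat{V}^\star|,
\end{align*}
and then control each term using a separate ingredient. The first term is the Monte-Carlo error of the value estimate formed from the $\ndemand$ trajectories rolled out under $\pi_f$; since episode rewards lie in $[0,1]$, a Hoeffding bound with the chosen $\ndemand$ gives error at most $\epsilon/8$, and this accuracy event is exactly one of the components folded into $\Ecal$. The middle term is the assumed suboptimality gap, bounded by $\epsilon/4$. The last term is the quality of the optimal-value estimate, which is at most $\epsilon/8$ by the standing hypothesis of the section (guaranteed by Corollary~\ref{cor:dfs_learn} from the first phase of \alg).

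Summing the three bounds gives $|\hat{V}(\pi_f) - \hat{V}^\star| \le \epsilon/8 + \epsilon/4 + \epsilon/8 = \epsilon/2 = \epsdemand$, so the acceptance test passes and the routine terminates, returning $\pi_f$. There is no real obstacle here: the argument is purely an accounting of three error sources, and the threshold $\epsdemand = \epsilon/2$ is calibrated precisely so that the two $\epsilon/8$ estimation errors plus the $\epsilon/4$ suboptimality fit within it. The only point deserving care is to confirm that the per-iteration Monte-Carlo accuracy of $\hat{V}(\pi_f)$ to within $\epsilon/8$ is genuinely implied by the definition of $\Ecal$ together with the setting of $\ndemand$ in Algorithm~\ref{alg:on_demand}; that is a routine Hoeffding calculation, with the associated failure probability already absorbed into $\PP[\bar{\Ecal}]$.
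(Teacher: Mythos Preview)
Your proposal is correct and essentially identical to the paper's proof: both apply the triangle inequality to split $|\hat{V}(\pi_f) - \hat{V}^\star|$ into the Monte-Carlo error (at most $\epsilon/8$ under $\Ecal$), the assumed suboptimality ($\le \epsilon/4$), and the error in $\hat{V}^\star$ ($\le \epsilon/8$ by the standing hypothesis), summing to $\epsilon/2 = \epsdemand$. Your additional remarks about Hoeffding and the role of $\ndemand$ are sound elaborations but not needed for the lemma itself, since the accuracy of $\hat{V}(\pi_f)$ is already part of the definition of $\Ecal$.
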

\begin{proof}
We must show that the test succeeds, for which we will apply the triangle inequality,
\begin{align*}
|\hat{V}^\star - \hat{V}(\pi_f)| &\le |\hat{V}^\star - V^\star| + |V^\star - V(\pi_f)| + |V(\pi_f) - \hat{V}(\pi_f)|\\
& \le \epsilon/8 + \epsilon/4 + \epsilon/8 \le \epsilon/2 = \epsdemand.
\end{align*}
Therefore the test is guaranteed to succeed.
Again the last bound here holds under event $\Ecal$. 
\end{proof}

At some point in the execution of the algorithm, define a set of
\emph{learned states} $L$ as
\begin{align}
L(\Fcal) \triangleq \bigcup_{h} \left\{s \in \Scal_h: \max_{f \in \Fcal}V^\star(s)- V^{f^\star}(s,\pi_f) \le 4\esterr\sqrt{2K} + 2\esterr + 40(H-h)\sqrt{K}\phi\right\}.\label{eq:l_def}
\end{align}
By Theorem~\ref{thm:td_regelim}, any state for which we have
successfully called \regelim is $L(\Fcal)$, since the condition is
precisely the instantaneous risk bound.  Since we only ever call
\regelim through \dfslearn, the fact that these calls to \regelim
succeeded is implied by the event $\Ecal$. The \emph{unlearned states}
are denoted $\bar{L}$, where the dependence on $\Fcal$ is left
implicit.

%% At some point in the execution of the algorithm, let $L$ denote the
%% set of learned states and $\bar{L}$ be its complement, that is the set
%% of states not in $L$. Learned states are ones for which we have
%% successfully called \regelim, so that we may apply
%% Theorem~\ref{thm:td_regelim}.  Since we only ever call \regelim
%% through \dfslearn, the fact that these calls to \regelim succeeded is
%% implied by the event $\Ecal$.  A slightly tighter definition of $L$,
%% which is sufficient for our purposes is
%% \begin{align}
%% L(\Fcal) \triangleq \bigcup_{h} \left\{s \in \Scal_h: \max_{f \in \Fcal}V^\star(s)- V^{f^\star}(s,\pi_f) \le 4\esterr\sqrt{2K} + 2\esterr + 40(H-h)\sqrt{K}\phi\right\}.\label{eq:l_def}
%% \end{align}
%% The only property we will use from Theorem~\ref{thm:td_regelim} is the instantaneous risk bound, which is what this alternative definition of $L$ provides. 

For a policy $\pi_f$, let $q^{\pi_f}[s \rightarrow \bar{L}]$ denote
the probability that when behaving according to $\pi_f$ starting from
state $s$, we visit an unlearned state.  We now show that
$q^{\pi_f}[\treeroot \rightarrow \bar{L}]$ is related to the risk of
the policy $\pi_f$.

%% \alekh{Which definition for the set of states do we mean below?}

\begin{lemma}[Policy Risk]
\label{lem:regret_with_unlearned}
Define $L$ as in Eq.~\eqref{eq:l_def} and define $q^{\pi_f}[s \rightarrow \bar{L}]$ accordingly. 
Assume that $\Ecal$ holds and let $f$ be a surviving regressor, so that $\pi_f$ is a surviving policy.
Then,
\begin{align*}
V^\star - V(\treeroot,\pi_f) \le q^{\pi_f}[\treeroot \rightarrow \bar{L}] + 40\sqrt{K}\esterr H^2.
\end{align*}
\end{lemma}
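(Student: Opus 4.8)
The plan is to reduce the cumulative risk $V^\star - V(\treeroot,\pi_f)$ to a sum of the per-state instantaneous risks that Theorem~\ref{thm:td_regelim} controls, and to charge the remaining suffix in full only at the \emph{first} unlearned state encountered. First I would set up a performance-difference identity. Write $\Delta(s) \triangleq V^\star(s) - V(s,\pi_f)$ for the cumulative risk and $\rho(s) \triangleq V^\star(s) - V^{f^\star}(s,\pi_f)$ for the instantaneous risk. Using realizability ($f^\star = Q^\star$) and deterministic transitions, I would expand $V^{f^\star}(s,\pi_f) = \EE_{(x,r)\sim D_s}[r(\pi_f(x)) + V^\star(\Gamma(s,\pi_f(x)))]$ and compare it term-by-term with $V(s,\pi_f) = \EE_{(x,r)\sim D_s}[r(\pi_f(x)) + V(\Gamma(s,\pi_f(x)),\pi_f)]$. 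The reward terms cancel, giving the one-step recursion $\Delta(s) = \rho(s) + \EE_{x\sim D_s}[\Delta(\Gamma(s,\pi_f(x)))]$. Unrolling over the $H$ levels (base case $\Delta(s_{H+1})=0$) yields $\Delta(\treeroot) = \EE\big[\sum_{h=1}^H \rho(s_h)\big]$, the expectation being over the trajectory $s_1,\ldots,s_H$ induced by rolling in with $\pi_f$.

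Next I would introduce the stopping time $\tau$, the first level at which $\pi_f$ reaches an unlearned state (with $\tau = H+1$ if none occurs); this is a valid stopping time because transitions are deterministic and the only randomness is in the observations. Splitting the telescoped sum at $\tau$ gives $\sum_{h=1}^H \rho(s_h) = \sum_{h<\tau}\rho(s_h) + \mathbf{1}[\tau\le H]\sum_{h\ge\tau}\rho(s_h)$. For the tail, re-applying the same telescoping identity \emph{started from} $s_\tau$ shows that, conditioned on reaching $s_\tau$, $\EE[\sum_{h\ge\tau}\rho(s_h)] = \Delta(s_\tau) \le 1$ since all values lie in $[0,1]$; by the tower property this contributes at most $\PP[\tau\le H] = q^{\pi_f}[\treeroot\rightarrow\bar{L}]$. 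For the prefix, every $s_h$ with $h<\tau$ is learned, so by the defining inequality of $L$ in Eq.~\eqref{eq:l_def} (recall $\phi = \esterr$, and $f$ being a surviving regressor is among the functions in the $\max$ defining $L$) we have $\rho(s_h) \le 4\esterr\sqrt{2K} + 2\esterr + 40(H-h)\sqrt{K}\esterr$ on every trajectory.

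Finally I would sum the learned-state bound over levels: the dominant term is $\sum_{h=1}^H 40(H-h)\sqrt{K}\esterr = 20H(H-1)\sqrt{K}\esterr$, and together with the lower-order $4\sqrt{2}H\esterr\sqrt{K} + 2H\esterr$ terms the prefix contributes at most $40\sqrt{K}\esterr H^2$. Combining the two pieces gives $\Delta(\treeroot) \le q^{\pi_f}[\treeroot\rightarrow\bar{L}] + 40\sqrt{K}\esterr H^2$, as claimed.

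The main subtlety is the optional-stopping step. The naive bound $\sum_{h\ge\tau}\rho(s_h)\le H$ would produce the wrong coefficient $H\cdot q$ on the visitation probability; the crucial observation is that the tail instantaneous risks telescope back into a single cumulative risk $\Delta(s_\tau)\le 1$, so visiting an unlearned state is charged only \emph{once}. The remaining work is routine: verifying the recursion from realizability and determinism, and checking that the level-dependent $(H-h)$ slack built into $L$ is precisely what makes the per-level risks summable to $O(\sqrt{K}\esterr H^2)$.
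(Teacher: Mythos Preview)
Your proposal is correct and is essentially the paper's argument recast in performance-difference/stopping-time form: the paper proves the same bound by backward induction on the number of actions remaining, using the identical one-step recursion $\Delta(s)=\rho(s)+\EE_{x}[\Delta(\Gamma(s,\pi_f(x)))]$ and splitting on whether the successor lies in $L$, which is exactly your stopping-time split unrolled step by step. Your explicit summation of the level-dependent slacks $40(H-h)\sqrt{K}\esterr$ reproduces the paper's inductive accumulation to $40\eta^2\sqrt{K}\esterr$, and your ``tail telescopes back to $\Delta(s_\tau)\le 1$'' observation is precisely the paper's use of $V^\star(s')-V(s',\pi_f)\le 1$ when $s'\notin L$.
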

\begin{proof}
Recall that under event $\Ecal$, we can apply the conclusions of Theorem~\ref{thm:td_regelim} with $\phi = \frac{\epsilon}{320H^2\sqrt{K}}$ and $\tau_1 = 20(H-h)\sqrt{K}\phi$ for any $h$ and state $s \in \Scal_h$ for which we have called \regelim. 
Our proof proceeds by creating a recurrence relation through application of Theorem~\ref{thm:td_regelim} and then solving the relation.
Specifically, we want to prove the following inductive claim.

\noindent\textbf{Inductive Claim:} For a state $s \in L$ with $\eta$ actions to go,
\begin{align*}
V^\star(s) - V(s,\pi_f) \le 40\esterr\sqrt{K}\eta^2 + q^{\pi_f}[s\rightarrow \bar{L}].
\end{align*}

\noindent\textbf{Base Case:} With zero actions to go, all policies achieve zero reward and no policies visit $\bar{L}$ from this point, so the inductive claim trivially holds.
%% Theorem~\ref{thm:td_regelim} and the choice of $\tau_1 = 20(h-1)\esterr\sqrt{K} = 0$ imply
%% \begin{align*}
%% V^\star(s) - V(s,\pi_f) = V^\star(s) - V^{f^\star}(s,\pi_f) \le 4\esterr\sqrt{2K} + 2\esterr + 0 \le 40\esterr\sqrt{K}.
%% \end{align*}
%% The second term in the inductive bound is zero here since when there is one action to go, all actions terminate the episode (which is assumed to be a learned state).

\noindent \textbf{Inductive Step:} For the inductive hypothesis, consider some state $s$ at level $h$, for which \regelim has successfully been called. 
There are $\eta = H-h+1$ actions to go.
By Theorem~\ref{thm:dfs_learn}, we know that,
\begin{align*}
V^\star(s) - V^{f^\star}(s,\pi_f) \le 4\esterr\sqrt{2K} + 2\esterr + 2\tau_1,
\end{align*}
with $\tau_1 = 20(H-h) \esterr\sqrt{K}$. This bound is clearly at most $40\eta\esterr\sqrt{K}$. 
Now,
\begin{align*}
V^\star(s) - V(s,\pi_f) &= V^\star(s) - V^{f^\star}(s,\pi_f) + V^{f^\star}(s,\pi_f) - V(s,\pi_f)\\
& \le 40\eta \esterr\sqrt{K} + \EE_{(x,r) \sim D_s} r(\pi_f(x)) + V^\star(s\circ \pi_f(x)) - r(\pi_f(x)) - V(s\circ \pi_f(x), \pi_f).
\end{align*}
Let us focus on just the second term, which is equal to,
\begin{align*}
& \EE_{x \sim D_s}\left[ \left(V^\star(s\circ \pi_f(x)) - V(s\circ \pi_f(x), \pi_f)\right)(\mathbf{1}[\Gamma(s,\pi_f(x)) \in L] + \mathbf{1}[\Gamma(s,\pi_f(x)) \notin L])\right]\\
& \le \sum_{s' \in L}\PP_{x \sim D_s}[\Gamma(s,\pi_f(x)) = s']\left(V^\star(s') - V(s', \pi_f)\right) + \PP_{x \sim D_s}[\Gamma(s,\pi_f(x)) \notin L].
\end{align*}
Since all of the recursive terms above correspond only to states $s' \in L$, we may apply the inductive hypothesis, to obtain the bound,
\begin{align*}
& 40\eta\esterr\sqrt{K} + \sum_{s' \in L} \PP_{x \in D_s}[\Gamma(s,\pi_f(x)) = s']\left( 40(h-1)^2\esterr\sqrt{K} + q^{\pi_f}[s'\rightarrow \bar{L}]\right) + \PP_{x \sim D_s}[\Gamma(s,\pi_f(x)) \notin L]\\
& \le 40\eta\esterr\sqrt{K} + 40(\eta-1)^2\esterr\sqrt{K} + q^{\pi_f}[s\rightarrow \bar{L}]\\
& \le 40\esterr\sqrt{K}\eta^2 + q^{\pi_f}[s\rightarrow \bar{L}].
\end{align*}
Thus, we have proved the inductive claim.
Applying at the root of the tree gives the result. 
\end{proof}

Recall that we set $\esterr = \frac{\epsilon}{320 H^2\sqrt{K}}$ in \dfslearn.
This ensures that $40 H^2\esterr\sqrt{K} \le \epsilon/8$, which means that if $q^{\pi_f}[\treeroot \rightarrow \bar{L}] = 0$, then we ensure $V^\star - V(\treeroot,\pi_f) \le \epsilon/8$. 

\begin{lemma}[Each non-terminal iteration makes progress]
\label{lem:on_demand_progress}
Assume that $\Ecal$ holds. 
If $\pi_f$ is selected but fails the test, then with probability at least $1-\exp(-\epsilon\ndemandtwo/8)$, at least one of the $\ndemandtwo$ trajectories collected visits a state $s \notin L$.
\end{lemma}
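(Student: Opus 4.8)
The plan is to chain three facts: failing the test forces $\pi_f$ to be noticeably suboptimal, suboptimality forces a non-trivial probability of leaving the learned region $L$, and that escape probability then yields the exponential tail over the $\ndemandtwo$ sampled trajectories.

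First I would establish the contrapositive of Lemma~\ref{lem:termination_bd}: under $\Ecal$, if $\pi_f$ fails the test then $V^\star - V(\treeroot,\pi_f) \ge \epsilon/4$. Failing the test means $|\hat{V}(\pi_f) - \hat{V}^\star| > \epsdemand = \epsilon/2$. Since $V(\pi_f) \le V^\star$ together with the two $\Ecal$ accuracy guarantees $|\hat{V}^\star - V^\star| \le \epsilon/8$ and $|\hat{V}(\pi_f) - V(\pi_f)| \le \epsilon/8$ imply $\hat{V}(\pi_f) - \hat{V}^\star \le \epsilon/4 < \epsilon/2$, the test failure must in fact be $\hat{V}^\star - \hat{V}(\pi_f) > \epsilon/2$. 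A final triangle inequality then gives $V^\star - V(\treeroot,\pi_f) \ge (\hat{V}^\star - \hat{V}(\pi_f)) - \epsilon/8 - \epsilon/8 \ge \epsilon/4$.

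Next I would convert this suboptimality into a lower bound on the escape probability. Applying Lemma~\ref{lem:regret_with_unlearned} (valid under $\Ecal$) yields $V^\star - V(\treeroot,\pi_f) \le q^{\pi_f}[\treeroot \rightarrow \bar{L}] + 40\sqrt{K}\esterr H^2$, and the setting $\esterr = \frac{\epsilon}{320 H^2 \sqrt{K}}$ makes the second term at most $\epsilon/8$. Combining with the previous paragraph gives $q^{\pi_f}[\treeroot \rightarrow \bar{L}] \ge \epsilon/4 - \epsilon/8 = \epsilon/8$. Finally, treating the first $\ndemandtwo$ trajectories as independent rollouts of $\pi_f$ from the root, each visits some $s \notin L$ with probability $q^{\pi_f}[\treeroot \rightarrow \bar{L}] \ge \epsilon/8$, so the probability that none does is at most $(1-\epsilon/8)^{\ndemandtwo} \le \exp(-\epsilon \ndemandtwo/8)$, which is the claim.

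The step I expect to require the most care is the independence invoked in the last paragraph: the first $\ndemandtwo$ trajectories are a subset of the $\ndemand$ trajectories that also feed the Monte-Carlo estimate $\hat{V}(\pi_f)$ deciding the test, so I would argue that the per-trajectory escape probability $q^{\pi_f}[\treeroot \rightarrow \bar{L}]$ is a fixed quantity determined by $f$ and the current $\Fcal$ (hence by $L$) \emph{before} these trajectories are drawn. This is precisely where $\Ecal$ does the real work: it guarantees that every $s \in L$ genuinely satisfies the instantaneous-risk bound of Theorem~\ref{thm:td_regelim}, so that $V^\star - V(\treeroot,\pi_f) \le q^{\pi_f}[\treeroot \rightarrow \bar{L}] + \epsilon/8$ is a deterministic relation given the pre-iteration history, and the Binomial tail applies cleanly to the fresh rollouts.
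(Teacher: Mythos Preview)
Your proposal is correct and follows essentially the same route as the paper: use the accuracy guarantees under $\Ecal$ and $\epsdemand=\epsilon/2$ to deduce $V^\star - V(\treeroot,\pi_f) > \epsilon/4$ from a failed test, invoke Lemma~\ref{lem:regret_with_unlearned} with $\esterr=\frac{\epsilon}{320 H^2\sqrt{K}}$ to get $q^{\pi_f}[\treeroot\to\bar L] > \epsilon/8$, and then apply $(1-\epsilon/8)^{\ndemandtwo}\le\exp(-\epsilon\ndemandtwo/8)$. The only cosmetic difference is that the paper bounds $|V(\treeroot,\pi_f)-V^\star|$ directly via the triangle inequality and then uses $V(\pi_f)\le V^\star$, whereas you first pin down the sign of $\hat V^\star - \hat V(\pi_f)$; your added paragraph on independence is extra care the paper omits but is consistent with how the failure events are conditioned in the surrounding analysis.
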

\begin{proof}
First, if $\pi_f$ fails the test, we know that,
\begin{align*}
\epsdemand < |\hat{V}(\treeroot,\pi_f) - \hat{V}^\star| \le \epsilon/4 + |V(\treeroot,\pi_f) - V^\star|,
\end{align*}
which implies that,
\begin{align*}
\epsilon/4 < V^\star - V(\treeroot,\pi_f).
\end{align*}
On the other hand Lemma~\ref{lem:regret_with_unlearned}, shows that,
\begin{align*}
V^\star - V(\treeroot,\pi_f) \le q^{\pi_f}[\treeroot \rightarrow \bar{L}] + 40H^2\sqrt{K} \esterr.
\end{align*}
Using our setting of $\esterr$, and combining the two bounds gives,
\begin{align*}
\epsilon/4 < q^{\pi_f}[\treeroot \rightarrow \bar{L}] + \epsilon/8
\Rightarrow q^{\pi_f}[\treeroot \rightarrow \bar{L}] > \epsilon/8.
\end{align*}
Thus, the probability that all $\ndemandtwo$ trajectories miss $\bar{L}$ is,
\begin{align*}
\PP[\textrm{all trajectories miss } \bar{L}] &= (1-q^{\pi_f}[\treeroot \rightarrow \bar{L}])^{\ndemandtwo}\\
& \le (1-\epsilon/8)^{\ndemandtwo} \le \exp(-\epsilon \ndemandtwo/8).
\end{align*}
Therefore, we must hit $\bar{L}$ with substantial probability. 
\end{proof}

\subsection{Proof of Theorem~\ref{thm:on_demand}}
Again assume that $\Ecal$ holds. 
First, by Lemma~\ref{lem:risk_upon_termination}, we argued that if \ondemand terminates, then it outputs a policy that satisfies the PAC-guarantee. 
Moreover, by Lemma~\ref{lem:termination_bd}, we also argued that if \ondemand selects a policy that is at most $\epsilon/4$ suboptimal, then it terminates.
Thus the goal of the proof is to show that it quickly finds a policy that is at most $\epsilon/4$ suboptimal. 

Every execution of the loop in \ondemand either passes the test or fails the test at level $\epsdemand$.
If the test succeeds, then Lemma~\ref{lem:risk_upon_termination} certifies that we have found an $\epsilon$-suboptimal policy, thus establishing the PAC-guarantee.
If the test fails, then Lemma~\ref{lem:on_demand_progress} guarantees that we call \dfslearn on a state that was not previously trained on.
Thus at each non-terminal iteration of the loop, we call \dfslearn and hence \regelim on at least one state $s \notin L$, so that the set of learned states grows by at least one.
By Lemma~\ref{lem:regret_with_unlearned} and our setting of $\esterr$, if we have called \regelim on all states at all levels, then we guarantee that all surviving policies have risk at most $\epsilon/8$. 
Thus the number of iterations of the loop is at most $MH$ since that is the number of unique states in the model.

\textbf{Bounding $\PP[\bar{\Ecal}]$:} Since we have bounded the total
number of iterations, we are now in a position to assign failure
probabilities and bound the event $\Ecal$.  Actually we must consider
not only the event $\Ecal$ but also the event that all non-terminal
iterations visit some state $s \notin L$.  Call this new event
$\Ecal'$ which is the intersection of $\Ecal$ with the event that all
unsuccessful iterations visit $\bar{L}$.

More formally, we use the fact that for events $A_0, \ldots, A_t$, we have,
\begin{align}
\PP[\bigcup_{i=0}^tA_i] \le \PP[A_0] + \sum_{i=1}^t \PP[A_i | \bar{A}_0, \ldots, \bar{A}_{i-1}].
\label{eq:union_bd}
\end{align}
This inequality is based on applying the union bound to the events $A'_i = (A_i \cap \bigcap_{j=0}^{i-1} A_j)$. 

Our analysis above bounds events of this form, namely the probability of a failure event conditioned on no previous failure event occurring.
Specifically, we decompose $\Ecal'$ into three types of events. 
\begin{enumerate}
\item $B_t^{(1)}$ denotes the event that the Monte Carlo estimate $\hat{V}(\treeroot,\pi_f)$ is accurate for the $t^{\textrm{th}}$ iteration of the while loop.
\item $B_t^{(2)}$ denotes the event that \dfslearn succeeds at the $t^{\textrm{th}}$ iteration of the while loop.
\item $B_t^{(3)}$ denotes the event that $t$ is a non-terminal iteration and we visit $\bar{L}$ at the $t^{\textrm{th}}$ iteration. 
\end{enumerate}
These events are defined for $t \in [MH]$, since we know that if all
events hold we will perform at most $MH$ iterations.
$\Ecal'$ is the intersection of all of these events. 

The failure probability can be expressed as,
\begin{align*}
\PP[\bar{\Ecal}'] &= \PP[\bigcup_{t=1}^{MH} \bar{B}_t^{(1)} \cup \bar{B}_t^{(2)} \cup \bar{B}_t^{(3)}],
\end{align*}
and via Equation~\ref{eq:union_bd}, it suffices to bound each event, conditioned on all previous success events.

We have $\delta$ probability to allocate, and since we perform at most $MH$ iterations, we allocate $\delta/(MH)$ probability to each iteration and $1/3$ of the available failure probability to each type of event.
%% There are three types of events at each iteration: (1) the initial Monte Carlo estimates $\hat{V}(\treeroot,\pi_f)$ must be close to $V(\treeroot,\pi_f)$, (2) all $H\ndemandtwo$ calls to \dfslearn at this iteration must succeed, and (3) the failure probability in Lemma~\ref{lem:on_demand_progress} must be small. 
%% Na\"{i}vely, we allocate $1/3$ of the available failure probability to each type of event.

For the initial Monte-Carlo estimate in event $B_t^{(1)}$, by Hoeffding's inequality, we know that,
\begin{align*}
|\hat{V}(\treeroot,\pi_f) - V(\treeroot,\pi_f)| \le \sqrt{\frac{\log(6MH/\delta)}{2\ndemand}}.
\end{align*}
We want this bound to be at most $\epsilon/8$ which requires,
\begin{align*}
\ndemand \ge \frac{32\log(6MH/\delta)}{\epsilon^2}.
\end{align*}
This bound holds for any fixed $\pi_f$, and it is independent of previous events.

For the second event, for each of the $H\ndemandtwo$ calls to \dfslearn, we set the parameter to be $\delta/(3MH^2\ndemandtwo)$, so that by Theorem~\ref{thm:dfs_learn}, we may apply Theorem~\ref{thm:td_regelim} at all states that we have called \regelim on.
Again this bounds the probability of $\bar{B}_t^{(2)}$, independently of previous events. 

Finally, conditioned on $B_{t}^{(1)}$, we may apply Lemma~\ref{lem:on_demand_progress} at iteration $t$ to observe that the the conditional probability of $\bar{B}_t^{(3)}$ is at most $\exp(-\ndemandtwo \epsilon/8)$.
And for this to be smaller than $\delta/(3MH)$ we require,
\begin{align*}
\ndemandtwo \ge \frac{8\log(3MH/\delta)}{\epsilon}.
\end{align*}
Both conditions on $\ndemand$ and $\ndemandtwo$ are met by our choices in the algorithm specification.

In total, if we set, $\ndemand = \frac{32\log(6MH/\delta)}{\epsilon^2}$ and $\ndemandtwo = 8\log(3MH/\delta)/\epsilon$ in \ondemand and if \ondemand always call \dfslearn with parameter $\delta/(3MH^2\ndemandtwo)$ we guarantee that the total failure probability for this subroutine is at most $\delta$. 

\textbf{Sample Complexity:}
It remains to bound the sample complexity for the execution of \ondemand.
We do at most $MH$ iterations, and in each iteration we use $\ndemand$ trajectories to compute Monte-Carlo estimates, contributing an $MH\ndemand$ to the sample complexity.
We also call \dfslearn on each of the $H\ndemandtwo$ prefixes collected during each iteration so that there are at most $MH^2\ndemandtwo$ calls to \dfslearn in total. 
Na\"{i}vely, each call to \dfslearn takes at most $O(\frac{MH^5K^2}{\epsilon^2}\log(\ndemandtwo NMKH/\delta))$ episodes, leading to a crude sample complexity bound of,
\begin{align*}
\otil\left(\frac{M^2H^7K^2}{\epsilon^3}\log(N/\delta)\log(1/\delta)\right).
\end{align*}
Recall that the $\otil$ notation suppresses all logarithmic factors except those involving $N$ and $\delta$. 

This bound can be significantly improved using a more careful argument.
Apart from the first call to \regelim in each application of \dfslearn, the total number of additional calls to \regelim is bounded by $MH$ since once we call \regelim on a state, \statelearned always returns \true. 

Each call to \regelim requires $\ntrain + K\ntest$ samples (because we always call \statelearned on all direct descendants before), and the total number of calls is at most,
\begin{align*}
MH^2\ndemandtwo + MH = \order\left(\frac{MH^2}{\epsilon}\log(MH/\delta)\right).
\end{align*}
With our settings of $\ntrain$ and $\ntest$, the sample complexity is therefore at most,
\begin{align*}
& \order\left(\frac{MH^6K^2}{\epsilon^3}\log(MHKN/(\epsilon\delta))\log(MH/\delta)\right)\\
& = \otil\left(\frac{MH^6K^2}{\epsilon^3}\log(N/\delta)\log(1/\delta)\right).
\end{align*}
This concludes the proof of Theorem~\ref{thm:on_demand}.

\bibliography{rl}

\begin{thebibliography}{28}
\providecommand{\natexlab}[1]{#1}
\providecommand{\url}[1]{\texttt{#1}}
\expandafter\ifx\csname urlstyle\endcsname\relax
  \providecommand{\doi}[1]{doi: #1}\else
  \providecommand{\doi}{doi: \begingroup \urlstyle{rm}\Url}\fi

\bibitem[Agarwal et~al.(2012)Agarwal, Dud{\'\i}k, Kale, Langford, and
  Schapire]{agarwal2012contextual}
Alekh Agarwal, Miroslav Dud{\'\i}k, Satyen Kale, John Langford, and Robert~E
  Schapire.
\newblock Contextual bandit learning with predictable rewards.
\newblock In \emph{AISTATS}, 2012.

\bibitem[Antos et~al.(2008)Antos, Szepesv{\'a}ri, and Munos]{antos2008learning}
Andr{\'a}s Antos, Csaba Szepesv{\'a}ri, and R{\'e}mi Munos.
\newblock Learning near-optimal policies with {B}ellman-residual minimization
  based fitted policy iteration and a single sample path.
\newblock \emph{MLJ}, 2008.

\bibitem[Auer et~al.(2002)Auer, Cesa-Bianchi, Freund, and
  Schapire]{auer2002nonstochastic}
Peter Auer, Nicolo Cesa-Bianchi, Yoav Freund, and Robert~E Schapire.
\newblock The nonstochastic multiarmed bandit problem.
\newblock \emph{SICOMP}, 2002.

\bibitem[Azizzadenesheli et~al.(2016)Azizzadenesheli, Lazaric, and
  Anandkumar]{azizzadenesheli2016reinforcement}
Kamyar Azizzadenesheli, Alessandro Lazaric, and Animashree Anandkumar.
\newblock Reinforcement learning of {POMDP}s using spectral methods.
\newblock In \emph{COLT}, 2016.

\bibitem[Baird(1995)]{baird1995residual}
Leemon Baird.
\newblock Residual algorithms: Reinforcement learning with function
  approximation.
\newblock In \emph{ICML}, 1995.

\bibitem[Brafman and Tennenholtz(2003)]{brafman2003r}
Ronen~I Brafman and Moshe Tennenholtz.
\newblock R-max -- a general polynomial time algorithm for near-optimal
  reinforcement learning.
\newblock \emph{JMLR}, 2003.

\bibitem[Dann and Brunskill(2015)]{dann2015sample}
Christoph Dann and Emma Brunskill.
\newblock Sample complexity of episodic fixed-horizon reinforcement learning.
\newblock In \emph{NIPS}, 2015.

\bibitem[Dudik et~al.(2011)Dudik, Hsu, Kale, Karampatziakis, Langford, Reyzin,
  and Zhang]{dudik2011efficient}
Miroslav Dudik, Daniel Hsu, Satyen Kale, Nikos Karampatziakis, John Langford,
  Lev Reyzin, and Tong Zhang.
\newblock Efficient optimal learning for contextual bandits.
\newblock In \emph{UAI}, 2011.

\bibitem[Jiang et~al.(2015)Jiang, Kulesza, and Singh]{jiang2015abstraction}
Nan Jiang, Alex Kulesza, and Satinder Singh.
\newblock Abstraction selection in model-based reinforcement learning.
\newblock In \emph{ICML}, 2015.

\bibitem[Jong and Stone(2007)]{jong2007model}
Nicholas~K Jong and Peter Stone.
\newblock Model-based exploration in continuous state spaces.
\newblock In \emph{Abstraction, Reformulation, and Approximation}, 2007.

\bibitem[Kakade and Langford(2002)]{kakade2002approximately}
Sham Kakade and John Langford.
\newblock Approximately optimal approximate reinforcement learning.
\newblock In \emph{ICML}, 2002.

\bibitem[Kakade et~al.(2003)Kakade, Kearns, and
  Langford]{kakade2003exploration}
Sham Kakade, Michael~J Kearns, and John Langford.
\newblock Exploration in metric state spaces.
\newblock In \emph{ICML}, 2003.

\bibitem[Kearns and Singh(2002)]{kearns2002near}
Michael Kearns and Satinder Singh.
\newblock Near-optimal reinforcement learning in polynomial time.
\newblock \emph{MLJ}, 2002.

\bibitem[Kearns et~al.(1999)Kearns, Mansour, and Ng]{kearns1999approximate}
Michael~J Kearns, Yishay Mansour, and Andrew~Y Ng.
\newblock Approximate planning in large {POMDP}s via reusable trajectories.
\newblock In \emph{NIPS}, 1999.

\bibitem[Kearns et~al.(2002)Kearns, Mansour, and Ng]{kearns2002sparse}
Michael~J. Kearns, Yishay Mansour, and Andrew~Y. Ng.
\newblock A sparse sampling algorithm for near-optimal planning in large markov
  decision processes.
\newblock \emph{MLJ}, 2002.

\bibitem[Langford and Zhang(2008)]{langford2008epoch}
John Langford and Tong Zhang.
\newblock The epoch-greedy algorithm for multi-armed bandits with side
  information.
\newblock In \emph{NIPS}, 2008.

\bibitem[Li and Littman(2010)]{li2010reducing}
Lihong Li and Michael~L Littman.
\newblock Reducing reinforcement learning to {KWIK} online regression.
\newblock \emph{Ann. Math AI}, 2010.

\bibitem[Li et~al.(2006)Li, Walsh, and Littman]{li2006towards}
Lihong Li, Thomas~J Walsh, and Michael~L Littman.
\newblock Towards a unified theory of state abstraction for {MDP}s.
\newblock In \emph{ISAIM}, 2006.

\bibitem[Mansour(1999)]{mansour1999reinforcement}
Yishay Mansour.
\newblock Reinforcement learning and mistake bounded algorithms.
\newblock In \emph{COLT}, 1999.

\bibitem[Meuleau et~al.(1999)Meuleau, Peshkin, Kim, and
  Kaelbling]{meuleau1999learning}
Nicolas Meuleau, Leonid Peshkin, Kee-Eung Kim, and Leslie~Pack Kaelbling.
\newblock Learning finite-state controllers for partially observable
  environments.
\newblock In \emph{UAI}, 1999.

\bibitem[Mnih et~al.(2015)Mnih, Kavukcuoglu, Silver, Rusu, Veness, Bellemare,
  Graves, Riedmiller, Fidjeland, Ostrovski, Petersen, Charles, Amir,
  Antonoglou, King, Kumaran, Wierstra, Legg, and Hassabis]{mnih2015human}
Volodymyr Mnih, Koray Kavukcuoglu, David Silver, Andrei~A Rusu, Joel Veness,
  Marc~G Bellemare, Alex Graves, Martin Riedmiller, Andreas~K Fidjeland, Georg
  Ostrovski, Stig Petersen, Beattie Charles, Sadik Amir, Ioannis Antonoglou,
  Helen King, Dharshan Kumaran, Daan Wierstra, Shane Legg, and Demis Hassabis.
\newblock Human-level control through deep reinforcement learning.
\newblock \emph{Nature}, 2015.

\bibitem[Nguyen et~al.(2013)Nguyen, Maillard, Ryabko, and
  Ortner]{nguyen2013competing}
Phuong Nguyen, Odalric-Ambrym Maillard, Daniil Ryabko, and Ronald Ortner.
\newblock Competing with an infinite set of models in reinforcement learning.
\newblock In \emph{AISTATS}, 2013.

\bibitem[Pazis and Parr(2016)]{pazis2016efficient}
Jason Pazis and Ronald Parr.
\newblock Efficient {PAC}-optimal exploration in concurrent, continuous state
  {MDP}s with delayed updates.
\newblock In \emph{AAAI}, 2016.

\bibitem[Perkins and Precup(2002)]{perkins2002convergent}
Theodore~J Perkins and Doina Precup.
\newblock A convergent form of approximate policy iteration.
\newblock In \emph{NIPS}, 2002.

\bibitem[Reveliotis and Bountourelis(2007)]{reveliotis2007efficient}
Spyros Reveliotis and Theologos Bountourelis.
\newblock Efficient {PAC} learning for episodic tasks with acyclic state
  spaces.
\newblock \emph{DEDS}, 2007.

\bibitem[Strehl et~al.(2006)Strehl, Li, Wiewiora, Langford, and
  Littman]{strehl2006pac}
Alexander~L Strehl, Lihong Li, Eric Wiewiora, John Langford, and Michael~L
  Littman.
\newblock {PAC} model-free reinforcement learning.
\newblock In \emph{ICML}, 2006.

\bibitem[Sutton et~al.(1999)Sutton, McAllester, Singh, and
  Mansour]{sutton1999policy}
Richard~S Sutton, David~A McAllester, Satinder~P Singh, and Yishay Mansour.
\newblock Policy gradient methods for reinforcement learning with function
  approximation.
\newblock In \emph{NIPS}, 1999.

\bibitem[Tsitsiklis and Van~Roy(1997)]{tsitsiklis1997analysis}
John~N Tsitsiklis and Benjamin Van~Roy.
\newblock An analysis of temporal-difference learning with function
  approximation.
\newblock \emph{IEEE TAC}, 1997.

\end{thebibliography}
\bibliographystyle{plainnat}

\end{document}